\newcommand{\R}{\mathbb{R}} 
\newcommand{\eps}{\varepsilon}
\newcommand{\bi}{\begin{enumerate}}
\newcommand{\ei}{\end{enumerate}}
\newcommand{\bli}{\begin{list}{{\tiny $\blacksquare$}}{\leftmargin=1.5em}
\setlength{\itemsep}{-1pt}
}
\newcommand{\blii}{\begin{list}{{  $\bullet$}}{\leftmargin=0.5em}
\setlength{\itemsep}{-1pt}
}
\newcommand{\eli}{\end{list}}
\DeclareMathOperator{\sgn}{sign}
\newcommand{\inp}[2]{\left\langle #1,#2 \right\rangle}
\newcommand{\tzero}{t_0}  
\newcommand{\tone}{t_1}  
\newcommand{\ttwo}{t_2}  
\newcommand{\tenter}{\mathfrak t }
\newcommand{\cy}{C}
\newcommand{\pone}{Initial Phase\xspace}
\newcommand{\ptwo}{Middle Phase\xspace}
\newcommand{\pthree}{Final Phase\xspace}
\newcommand{\loss}{{\mathcal L}}
\definecolor{eblue}{RGB}{100, 174, 118}
\definecolor{egreen}{RGB}{100, 118, 174}
\definecolor{persimmon}{rgb}{0.93, 0.35, 0.0}
\definecolor{darkpastelpurple}{rgb}{0.59, 0.44, 0.84}
\newcommand{\norm}[1]{\left\| #1 \right\|}
\def\balign#1\ealign{\begin{align}#1\end{align}}
\def\baligns#1\ealigns{\begin{align*}#1\end{align*}}
\def\balignat#1\ealign{\begin{alignat}#1\end{alignat}}
\def\balignats#1\ealigns{\begin{alignat*}#1\end{alignat*}}
\def\bitemize#1\eitemize{\begin{itemize}#1\end{itemize}}
\def\benumerate#1\eenumerate{\begin{enumerate}#1\end{enumerate}}
\newenvironment{talign*}
 {\csname align*\endcsname}
 {\endalign}
\newenvironment{talign}
 {\csname align\endcsname}
 {\endalign}
\def\balignst#1\ealignst{\begin{talign*}#1\end{talign*}}
\def\balignt#1\ealignt{\begin{talign}#1\end{talign}}
\let\originalleft\left
\let\originalright\right
\renewcommand{\left}{\mathopen{}\mathclose\bgroup\originalleft}
\renewcommand{\right}{\aftergroup\egroup\originalright}
\def\tinycitep*#1{{\tiny\citep*{#1}}}
\def\tinycitealt*#1{{\tiny\citealt*{#1}}}
\def\tinycite*#1{{\tiny\cite*{#1}}}
\def\smallcitep*#1{{\scriptsize\citep*{#1}}}
\def\smallcitealt*#1{{\scriptsize\citealt*{#1}}}
\def\smallcite*#1{{\scriptsize\cite*{#1}}}
\def\<{\left\langle} 
\def\>{\right\rangle}
\DeclareSymbolFont{rsfs}{U}{rsfs}{m}{n}
\DeclareSymbolFontAlphabet{\mathscrsfs}{rsfs}
\providecommand{\diag}{\mathop\mathrm{diag}}
\providecommand{\sign}{\mathop\mathrm{sign}}
\newtheorem{theorem}{Theorem}
\newtheorem{lemma}[theorem]{Lemma}
\newtheorem{corollary}[theorem]{Corollary}
\newtheorem{definition}[theorem]{Definition}
\newenvironment{proof-sketch}{\noindent\textbf{Proof Sketch}
  \hspace*{1em}}{\qed\bigskip\\}
\newenvironment{proof-idea}{\noindent\textbf{Proof Idea}
  \hspace*{1em}}{\qed\bigskip\\}
\newenvironment{proof-of-lemma}[1][{}]{\noindent\textbf{Proof of Lemma {#1}}
  \hspace*{1em}}{\qed\\}
\newenvironment{proof-of-theorem}[1][{}]{\noindent\textbf{Proof of Theorem {#1}}
  \hspace*{1em}}{\qed\\}
\newenvironment{proof-attempt}{\noindent\textbf{Proof Attempt}
  \hspace*{1em}}{\qed\bigskip\\}
\theoremstyle{definition}
\newtheorem*{remark*}{Remark}
\newtheorem{claim}[theorem]{Claim}
\Crefname{claim}{Claim}{Claims}
\renewcommand{\paragraph}{%
\@startsection{paragraph}{4}%
{\z@}{1.25ex \@plus 1ex \@minus .2ex}{-1em}%
{\normalfont\normalsize\bfseries}%
}
\newcommand*{\centerfloat}{%
\parindent \z@
\leftskip \z@ \@plus 1fil \@minus \textwidth
\rightskip\leftskip
\parfillskip \z@skip}
\setlist{itemsep=0pt,parsep=2pt,topsep=0pt,leftmargin=*}
\title{The Crucial Role of Normalization in \\ Sharpness-Aware Minimization}
\author{%
Yan Dai~\thanks{The first two authors contribute equally. Work done while Yan Dai was visiting MIT.} \\
IIIS, Tsinghua University \\
\texttt{yan-dai20@mails.tsinghua.edu.cn}
\And
Kwangjun Ahn~\footnotemark[1] \\
EECS, MIT \\
\texttt{kjahn@mit.edu}
\And
Suvrit Sra \\
TU Munich / MIT\\
\texttt{suvrit@mit.edu}
}
\begin{document}

\maketitle

\setcounter{footnote}{0}

\begin{abstract}
Sharpness-Aware Minimization (SAM) is a recently proposed gradient-based optimizer (Foret et al., ICLR 2021) that greatly improves the prediction performance of deep neural networks. Consequently, there has been a surge of interest in explaining its empirical success. We focus, in particular, on understanding \emph{the role played by normalization}, a key component of the SAM updates.
We theoretically and empirically study the effect of normalization in SAM for both convex and non-convex functions, revealing two key roles played by normalization: i) it helps in stabilizing the algorithm; and ii) it enables the algorithm to drift along a continuum (manifold) of minima -- a property identified by recent theoretical works that is the key to better performance.
We further argue that these two properties of normalization make SAM robust against the choice of hyper-parameters, supporting the practicality of SAM. 
Our conclusions are backed by various experiments.
\end{abstract}

\section{Introduction}
\label{sec:intro}

We study the recently proposed gradient-based optimization algorithm \emph{Sharpness-Aware Minimization (SAM)}~\citep{foret2020sharpness} that has shown impressive performance in training deep neural networks to generalize well~\citep{foret2020sharpness,bahri2022sharpness,mi2022make,zhong2022improving}. 
SAM updates involve an ostensibly small but key modification to Gradient Descent (GD). Specifically, for a loss function $\loss$ and each iteration $t\ge 0$, instead of updating the parameter $w_t$ as $w_{t+1}=w_t-\eta \nabla \loss (w_t)$ (where $\eta$ is called the \emph{learning rate}),  SAM performs the following update:\footnote{In principle, the normalization in \autoref{eq:definition of SAM} may make SAM ill-defined.
However, \citet[Appendix B]{wen2022does} showed that except for countably many learning rates, SAM (with any $\rho$) is always well-defined for almost all initialization.
Hence, throughout the paper, we assume that the SAM iterates are always well-defined.\label{remark:ill-definedness}}  
\begin{equation}  
w_{t+1}=w_t-\eta \nabla \loss\left (w_t+\rho \frac{\nabla \loss(w_t)}{\lVert \nabla \loss(w_t)\rVert}\right )\,,\label{eq:definition of SAM}
\end{equation}
where $\rho$ is an additional hyper-parameter that we call the \emph{perturbation radius}.
\citet{foret2020sharpness} motivate SAM as an algorithm minimizing the robust loss $\max_{\lVert \epsilon\rVert\le \rho}\loss(w+\epsilon)$, which is roughly the loss at $w$ (i.e., $\loss(w)$) plus the ``sharpness'' of the loss landscape around $w$, hence its name.

The empirical success of SAM has driven a recent surge of interest in characterizing its dynamics and theoretical 
properties~\citep{bartlett2022dynamics,wen2022does,ahn2023escape}.
However, a major component of SAM remains unexplained in prior work: the role and impact of the normalization factor $\frac{1}{\lVert \nabla \loss(w_t)\rVert}$ used by SAM. In fact, quite a few recent works drop the normalization factor for simplicity when analyzing SAM~\citep{andriushchenko2022towards,behdin2023sharpness,agarwala2023sam,kim2023stability,compagnoni2023sde}. Instead of the SAM update~\eqref{eq:definition of SAM}, these works consider the following update that we call \emph{Un-normalized SAM (USAM)}:
\begin{equation} 
w_{t+1}=w_t-\eta \nabla \loss\left (w_t+\rho  \nabla \loss(w_t) \right)\,.\label{eq:definition of USAM}
\end{equation}
Apart from experimental justifications in \citep{andriushchenko2022towards}, the effect of this simplification has not yet been carefully investigated, although it is already widely adopted in the community.
Thus, is it really the case that such normalization can be omitted ``for simplification'' when theoretically analyzing SAM?
These observations raise our main question:
\begin{center}
\textbf{\emph{What is the role of the normalization factor $\frac{1}{\lVert \nabla \loss(w_t)\rVert}$ in the SAM  update~\eqref{eq:definition of SAM}?}}
\end{center}

\subsection{Motivating Experiments and Our Contributions}
We present our main findings through two motivating experiments.
For the setting, we choose the well-known over-parameterized matrix sensing problem \citep{li2018algorithmic}; see \autoref{app:exp_detail} for details. 

\begin{enumerate}
\item  {\bf Normalization helps with stability.}
We first pick a learning rate $\eta$ that allows GD to converge, and we gradually increase $\rho$ from $0.001$ to $0.1$. Considering the early stage of training shown in \autoref{fig:stability}. One finds that \emph{SAM has very similar behavior to GD}, whereas \textit{USAM  diverges even with a small $\rho$} -- it seems that normalization helps stabilize the algorithm. 
\begin{figure}[H]
\centering
\includegraphics[width=0.3\textwidth]{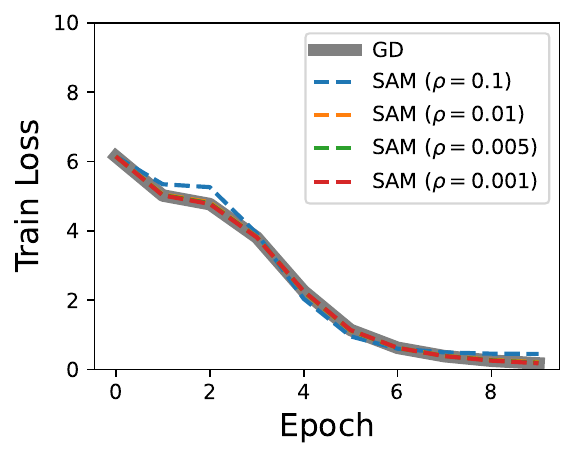}
\includegraphics[width=0.28\textwidth]{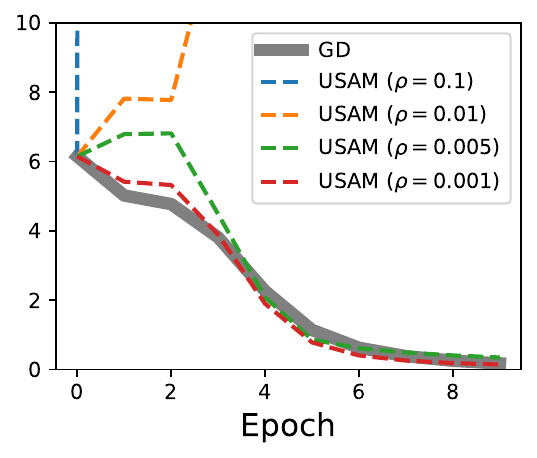}
\caption{{Role of normalization for stabilizing algorithms} \footnotesize{($\eta = 0.05$).} }
\label{fig:stability}
\end{figure} 

\item {\bf Normalization permits moving along minima.}
We reduce the step size by $10$ times and consider their performance of reducing test losses in the long run.  
One may regard \autoref{fig:solution} as the behavior of SAM, USAM, and GD when close to a ``manifold'' of minima (which exists since the problem is over-parametrized) as the training losses are close to zero.
The first plot compares SAM and USAM with the same $\rho=0.1$ (the largest $\rho$ for which USAM doesn't diverge): notice that USAM and GD both converge to a minimum and do not move further;
on the other hand, SAM keeps decreasing the test loss, showing its ability to drift along the manifold. 
We also vary $\rho$ and compare their behaviors (shown on the right): \textit{the ability of SAM to travel along the manifold of minimizers seems to be robust} to the size of $\rho$, while \textit{USAM easily gets stuck at a minimum}.

\begin{figure}[H]
\centering
\includegraphics[width=0.3\textwidth]{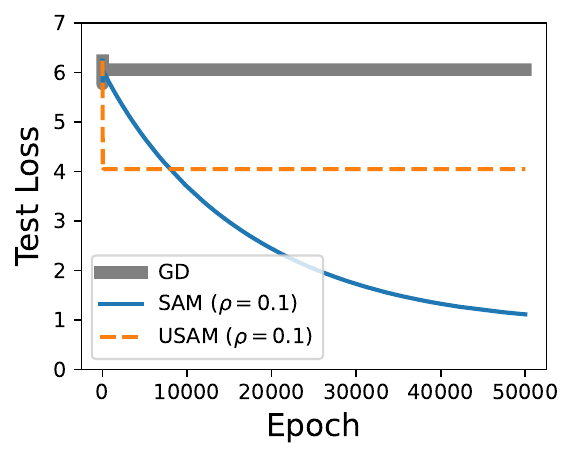}
\includegraphics[width=0.28\textwidth]{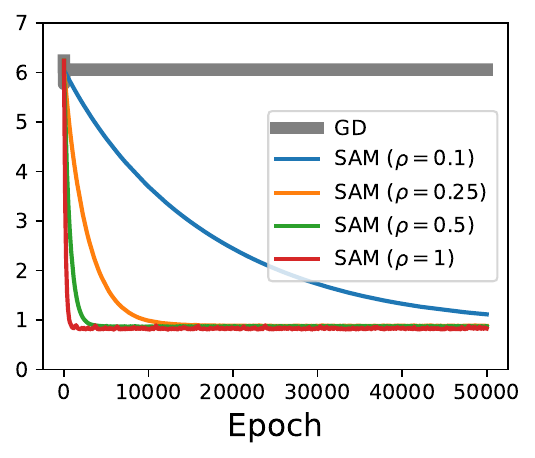}
\includegraphics[width=0.28\textwidth]{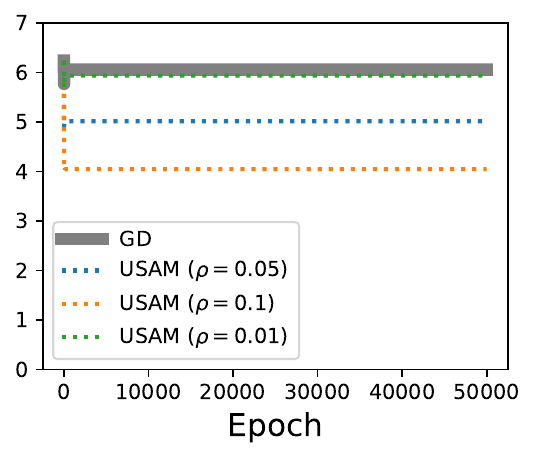}
\caption{{Role of normalization when close to a manifold of minimizers}\footnotesize{ ($\eta = 0.005$).} }
\label{fig:solution}
\end{figure}
\end{enumerate}

\textbf{Overview of our contributions.}
In this work, as motivated by \autoref{fig:stability} and \autoref{fig:solution}, we identify and theoretically explain the two roles of normalization in SAM.  
The paper is organized as follows.
\begin{enumerate}[itemsep=2pt]
\item In \autoref{sec:stability}, we study the role of normalization in the algorithm's stability and show that normalization helps stabilize.
In particular, we demonstrate that normalization ensures that GD's convergence implies SAM's non-divergence, whereas USAM can start diverging much earlier.

\item In \autoref{sec:solution}, we study the role of normalization near a manifold of minimizers and show that the normalization factor allows iterates to keep drifting along this manifold -- giving better performance in many cases.
Without normalization, the algorithm easily gets stuck and no longer progresses.

\item In \autoref{sec:sparse_coding}, to illustrate our main findings, we adopt the sparse coding example of \citet{ahn2022learning}.
Their result implies a dilemma in hyper-parameter tuning for GD: a small $\eta$ gives worse performance, but a large $\eta$ results in divergence. We show that this dilemma extends to USAM -- but not SAM.
In other words, SAM easily solves the problem where GD and USAM often fail.
\end{enumerate}
These findings also shed new light on why SAM is practical and successful, as we highlight below.

\textbf{Practical importance of our results.}
The main findings in this work explain and underscore several practical aspects of SAM that are mainly due to the normalization step.
One practical feature of SAM is the way the hyper-parameter $\rho$ is tuned: \citet{foret2020sharpness} suggest that $\rho$ can be tuned independently after tuning the parameters of base optimizers (including learning rate $\eta$, momentum $\beta$, and so on).
In particular, this feature makes SAM a perfect ``add-on'' to existing gradient-based optimizers.
Our findings precisely support this practical aspect of SAM. Our results suggest that \textbf{\emph{the stability of SAM is less sensitive to the choice of $\rho$, thanks to the normalization factor.}}

The same principle applies to the behavior of the algorithm near the minima: Recent theoretical works \citep{bartlett2022dynamics,wen2022does,ahn2023escape} have shown that the drift along the manifold of minimizers is a main feature that enables SAM to reduce the sharpness of the solution (which is believed to boost generalization ability in practice) -- our results indicate that \textbf{\emph{the ability of SAM to keep drifting along the manifold is independent of the choice of $\rho$, again owing to normalization.}}
Hence, our work suggests that the normalization factor plays an important role towards SAM's empirical success.

\subsection{Related Work}
\textbf{Sharpness-Aware Optimizers.}
Inspired by the empirical and theoretical observation that the generalization effect of a deep neural network is correlated with the ``sharpness'' of the loss landscape (see \citep{keskar2017large,jastrzkebski2017three,jiang2019fantastic} for empirical observations and \citep{dziugaite2017computing,neyshabur2017exploring} for theoretical justifications),
several recent papers \citep{foret2020sharpness,zheng2021regularizing,wu2020adversarial} propose optimizers that penalize the sharpness for the sake of better generalization.
Subsequent efforts were made on making such optimizers scale-invariant \citep{kwon2021asam}, more efficient \citep{liu2022towards,du2022efficient}, and generalize better \citep{zhuang2022surrogate}.
This paper focuses on the vanilla version proposed by \citet{foret2020sharpness}.

\textbf{Theoretical Advances on SAM.}
Despite the success of SAM in practice, theoretical understanding of SAM was absent until two recent works: 
\citet{bartlett2022dynamics} analyze SAM on locally quadratic losses and identify a component reducing the sharpness $\lambda_{\max}(\nabla^2 \loss(w_t))$, while \citet{wen2022does} characterize SAM near the manifold $\Gamma$ of minimizers and show that SAM follows a Riemannian gradient flow reducing $\lambda_{\max}(\nabla^2 \loss(w))$ when i) initialized near $\Gamma$, and ii) $\eta$ is ``small enough''. Note that while the results of \citet{wen2022does} apply to more general loss functions, our result in \Cref{thm:SAM ell(xy) formal} applies to i) any initialization far from the origin, and ii) any $\eta=o(1)$ and $\rho=O(1)$. 
A recent work by \citet{ahn2023escape} formulates the notion of $\eps$-approximate flat minima and analyzed the iteration complexity of practical algorithms like SAM to find such approximate flat minima.
A concurrent work by \citet{si2023practical} also analyzes the original version of SAM with the normalization in \eqref{eq:definition of SAM}, and makes a case that practical SAM does not converge all the way
to optima.

\textbf{Unnormalized SAM (USAM).}
USAM was first proposed by \citet{andriushchenko2022towards} who observed a similar performance between USAM and SAM when training ResNet over CIFAR-10.
This simplification is further accepted by \citet{behdin2023sharpness} who study the regularization effect of USAM over a linear regression model, by
\citet{agarwala2023sam} who study the initial and final dynamics of USAM over a quadratic regression model, and by
\citet{kim2023stability} who study the convergence instability of USAM near saddle points. To our knowledge, \citep{compagnoni2023sde} is the only work comparing SAM and USAM dynamics. 
More preciously, they consider the continuous-time behavior of SGD, SAM, and USAM and find different behaviors of SAM and USAM: USAM attracts local minima while SAM aims at global ones. Still, we remark that as they are considering continuous-time variants of algorithms while we consider discrete (original) versions, our results directly apply to the SAM deployed in practice and the USAM studied in theory.

\textbf{Edge-of-Stability.}
In the optimization theory literature, Gradient Descent (GD) was only shown to find minima if the learning rate $\eta$ is smaller than an ``Edge-of-Stability'' threshold, which is related to the sharpness of the nearest minimum.
However, people recently observe that when training neural networks, GD with a $\eta$ much larger than that threshold often finds good minima as well (see \citep{Cohen2021} and references therein).
Aside from convergence, GD with large $\eta$ is also shown to find \textit{flatter} minima \citep{arora2022understanding,ahn2022understanding,wang2022analyzing,damian2022self}. 

\section{Role of Normalization for Stability}
\label{sec:stability} 

In this section, we discuss the role of normalization in the stability of the algorithm.
We begin by recalling a well-known fact about the stability of GD: for a convex  quadratic cost with the largest eigenvalue of Hessian being $\beta$ (i.e., $\beta$-smooth), GD  converges to a minimum iff $\eta <\nicefrac{2}{\beta}$. 
Given this folklore fact, we ask: how do the ascent steps in SAM \eqref{eq:definition of SAM} and USAM \eqref{eq:definition of USAM} affect their stability?

\subsection{Strongly Convex and Smooth Losses}
\label{sec:quadratics}

Consider an $\alpha$-strongly-convex and $\beta$-smooth loss function $\loss$ where GD is guaranteed to converge once $\eta < \nicefrac 2\beta$.
We characterize the stability of SAM and USAM in the following result.

\begin{theorem}[Strongly Convex and Smooth Losses]\label{thm:Strongly Convex and Smooth}
For any $\alpha$-strongly-convex and $\beta$-smooth loss function $\loss$, for any learning rate $\eta < \nicefrac 2\beta$ and perturbation radius $\rho\geq 0$, the following holds:
\begin{enumerate}
\setlength{\itemsep}{0pt}
\item \textbf{SAM.} The iterate $w_t$  converges to a local neighborhood around the minimizer $w^\star$. Formally,
\begin{align}\label{eq:convergence rate SAM}
\loss(w_{t}) - \loss(w^\star)\leq   \big (1-  \alpha \eta (2  - \eta  \beta)\big )^t 
(\loss(w_{0}) - \loss(w^\star)) +  \frac{\eta \beta^3\rho^2}{2\alpha(2-\eta\beta)},\quad \forall t.
\end{align}
\item \textbf{USAM.} In contrast, there exists some $\alpha$-strongly-convex and $\beta$-smooth loss $\loss$ such that the USAM with $\eta \in (\nicefrac{2}{(\beta +\rho \beta^2)},\nicefrac 2\beta]$ diverges for all except measure zero initialization $w_0$.
\end{enumerate}
\end{theorem}
As we discussed, it is well-known that GD converges iff $\eta<\nicefrac 2\beta$, and \autoref{thm:Strongly Convex and Smooth} shows that SAM also does not diverge and stays within an $\mathcal O(\sqrt{\eta}\rho)$-neighborhood around the minimum as long as $\eta<\nicefrac 2\beta$.
However, USAM diverges with an even lower learning rate: $\eta>\nicefrac{2}{(\beta+\rho \beta^2)}$ can already make USAM diverge. Intuitively, the larger the value of $\rho$, the easier it is for USAM to diverge.

One may notice that \autoref{eq:convergence rate SAM}, compared to the standard convergence rate of GD,  exhibits an additive  bias term of order $\mathcal O(\eta \rho^2)$. This term arises from the unstable nature of SAM: the perturbation in~\eqref{eq:definition of SAM} (which always has norm $\rho$) prevents SAM from decreasing the loss monotonically. Thus, SAM can only approach a minimum up to a neighborhood.
For this reason, in this paper whenever we say SAM ``finds'' a minimum, we mean its iterates approach and stay within a neighborhood of that minimum.

Due to space limitations, the full proof is postponed to \autoref{sec:appendix_smooth_strongly_convex} and we only outline it here.
\begin{proof}[Proof Sketch]
For SAM, we show an analog to the descent lemma of GD as follows (see \autoref{lem:SAM descent}):
\begin{align}\label{eq:descent lemma SAM}
\loss(w_{t+1}) \leq      \loss(w_t)   - \frac{1}{2}\eta (2  - \eta  \beta)  \norm{\nabla \loss( w_t)}^2 + \frac{\eta^2 \beta^3\rho^2}{2}\,.
\end{align}

By invoking the strong convexity that gives $\loss(w_t)-\loss(w^\ast)\le \frac{1}{2\alpha}\lVert \nabla f(w_t)\rVert^2$, we obtain
\begin{align*}
\loss(w_{t+1}) - \loss(w^\star) \leq     \big (1-  \alpha \eta (2  - \eta  \beta)\big ) (\loss(w_t)   -   \loss(w^\star)) + \frac{\eta^2 \beta^3\rho^2}{2}\,.
\end{align*}

Recursively applying this relation gives the first conclusion.
For USAM, we consider the quadratic loss function same as \citep{bartlett2022dynamics}. Formally, suppose that $\loss(w)=\frac 12 w^\top \Lambda w$ where $\Lambda=\diag(\lambda_1,\lambda_2,\ldots,\lambda_d)$ is a PSD matrix such that $\lambda_1> \lambda_2 \ge \cdots \lambda_d>0$. Let the eigenvectors corresponding to $\lambda_1,\lambda_2,\ldots,\lambda_d$ be $e_1,e_2,\ldots,e_d$, respectively. Then we show the following in \autoref{lem:USAM Strongly Convex and Smooth}: for any $\eta (\lambda_1+\rho \lambda_1^2)>2$ and $\langle w_0,e_1\rangle\ne 0$, USAM must diverge. As $\loss(w)=\frac 12 w^\top \Lambda w$ is $\lambda_1$-smooth and $\lambda_d$-strongly-convex, the second conclusion also follows.
\end{proof}

Intuitively, the difference in stability can be interpreted as follows:
during the early stage of training, $w_t$ and $\nabla \loss (w_t)$ often  have large norms.
The normalization in SAM then makes the ascent step 
$w_t+\rho \frac{\nabla \loss(w_t)}{\lVert \nabla \loss(w_t)\rVert}$ not too far away from $w_t$.
Hence, if GD does not diverge for this $\eta$, SAM does not either (unless the $\rho$-perturbation is non-negligible, i.e., $\lVert w_t\rVert\gg \rho$ no longer holds).
This is not true for USAM: since the ascent step is un-normalized, it leads to a point far away from $w_t$, making the size of USAM updates much larger. 
In other words, the removal of normalization leads to much more aggressive steps, resulting in a different behavior than GD and also an easier divergence.

\subsection{Generalizing to Non-Convex Cases: Scalar Factorization Problem}
\label{sec:square loss}

Now let us move on to non-convex losses. 
We consider a \textit{scalar version} of the matrix factorization problem $\min_{U,V}\frac 12\lVert UV^T-A\rVert_2^2$, whose loss function is defined as $\loss(x,y)=\frac 12(xy)^2$. 
Denote the initialization by $(x_0,y_0)$, then $\loss(x,y)$ is $\beta\triangleq (x_0^2+y_0^2)$-smooth inside the region $\{(x,y):x^2+y^2\le \beta\}$. Hence, a learning rate $\eta< \nicefrac 2 \beta$ again allows GD to converge due to the well-known descent lemma. 
The following result compares the behavior of SAM and USAM under this setup.

\begin{theorem}[Scalar Factorization Problem; Informal]\label{thm:square loss}
For the loss function $\loss(x,y)=\frac 12(xy)^2$ restricted to a $\beta$-smooth region, if we set $\eta=\nicefrac 1\beta<\nicefrac 2\beta$ (so GD finds a minimum), then
\begin{enumerate}[itemsep=2pt]
\item \textbf{SAM.} SAM never diverges and approaches a minimum within an $O(\rho)$-neighborhood (in fact, SAM with distinct $\rho$'s always find the same minimum $(0,0)$).
\item \textbf{USAM.} On the other hand, USAM diverges once $\rho\ge 15 \eta$ -- which could be much smaller than 1.
\end{enumerate}
\end{theorem}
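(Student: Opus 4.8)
The plan is to exploit the explicit gradient $\nabla\loss(x,y)=xy\,(y,x)$, which renders both updates fully computable. For SAM the normalized ascent direction is $\tfrac{\nabla\loss(x,y)}{\norm{\nabla\loss(x,y)}}=\sgn(xy)\,\tfrac{(y,x)}{\sqrt{x^2+y^2}}$, so the perturbation always has length exactly $\rho$; for USAM the perturbed point is simply $\bigl(x(1+\rho y^2),\,y(1+\rho x^2)\bigr)$. I would analyze the two algorithms separately.

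For SAM, the key point is that the descent lemma~\eqref{eq:descent lemma SAM} uses only $\beta$-smoothness, not convexity; hence, provided the iterates remain in the region $\{x^2+y^2\le\beta\}$, it drives $\norm{\nabla\loss(w_t)}^2$ down to $O(\eta\beta^3\rho^2)$, i.e.\ into an $O(\rho)$-neighborhood of the minimizer manifold $\{xy=0\}$. I would first verify that the iterates never leave this smooth region, then zoom in near the manifold, say near the $x$-axis with $\abs{y_t}=O(\rho)$ and $x_t$ of order one. A direct expansion of the update there gives the tangential contraction $x_{t+1}\approx\bigl(1-\eta(\abs{y_t}+\rho)^2\bigr)x_t$ while $\abs{y_t}$ equilibrates at $O(\rho)$. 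Since the contraction factor is independent both of the sign of $xy$ and of which axis one approaches, every $\rho$ drives the iterate to the same $O(\rho)$-neighborhood of the origin. This matches the ``drift toward the flattest minimum'' picture, as the sharpness $\lambda_{\max}(\nabla^2\loss)=x^2$ along the $x$-axis is minimized at $(0,0)$.

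For USAM I would establish divergence by a monotone-growth induction. In coordinates the update reads $x_{t+1}=x_t\bigl(1-\eta y_t^2(1+\rho x_t^2)^2(1+\rho y_t^2)\bigr)$ and symmetrically for $y_{t+1}$, so the two multiplicative factors depend on $x_t,y_t$ only through $x_t^2,y_t^2$. The crux is a one-step estimate: at the initialization $(x_0,y_0)$ both factors exceed $2$ in absolute value once $\rho\ge 15\eta$ (substituting $\eta=\nicefrac1\beta$, $\beta=x_0^2+y_0^2$), so $\abs{x_1}>\abs{x_0}$ and $\abs{y_1}>\abs{y_0}$. Because the amplifications $1+\rho x^2$ and $1+\rho y^2$ increase with $\abs{x},\abs{y}$, growth of both magnitudes makes both factors strictly larger at the next step; by induction $\abs{x_t},\abs{y_t}\to\infty$. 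The explicit constant $15$ is merely what makes this overshoot estimate hold with room to spare.

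I expect the SAM analysis to be the main obstacle. The USAM claim is essentially a single overshoot computation followed by a clean induction, whereas for SAM one must control a genuinely non-convex, two-dimensional, and (because of the normalization, which is ill-defined on $\{xy=0\}$) non-smooth dynamics along the whole trajectory: proving the iterates never exit the smooth region, patching the ``far-from-manifold'' descent-lemma phase to the ``near-manifold'' contraction phase, and treating the sign of $xy$ and the choice of axis uniformly. The delicate point is bounding the lower-order terms in the near-manifold expansion so that the geometric contraction toward $(0,0)$ survives the $O(\rho)$ normal oscillations of $y_t$.
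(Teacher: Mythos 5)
Your USAM argument is essentially the paper's: the same explicit update $x_{t+1}=x_t\bigl(1-\eta y_t^2(1+\rho y_t^2)(1+\rho x_t^2)^2\bigr)$, the same one-step overshoot check at initialization (where $\eta y_0^2\ge\nicefrac12$ and $\rho x_0^2,\rho y_0^2\ge 3$ under $\rho\ge 15\eta$), and the same monotone-growth induction; the paper packages the induction step as the condition $\rho^3x_t^2y_t^4\ge 15$ rather than via monotonicity of the amplification factors, but these are the same computation. Your SAM argument also follows the paper's two-phase architecture (descent until an $O(\rho)$-neighborhood of the manifold, then geometric contraction of the tangential coordinate at rate $1-\Theta(\eta\rho^2)$ while the normal coordinate oscillates at $O(\rho)$), and your near-manifold expansion matches the paper's \ptwo lemma.

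However, there is one genuine error in your first SAM phase: the claim that the descent lemma \eqref{eq:descent lemma SAM} ``uses only $\beta$-smoothness, not convexity'' is false. The proof of \Cref{lem:SAM descent} lower-bounds the cross term $\inp{\nabla\loss(w_{t+\nicefrac12})}{\nabla\loss(w_t)}$ by $\norm{\nabla\loss(w_t)}^2$ using $\inp{\nabla\loss(w_1)-\nabla\loss(w_2)}{w_1-w_2}\ge 0$, which is precisely gradient monotonicity, i.e.\ convexity; the loss $\frac12(xy)^2$ is not convex, so you cannot invoke \eqref{eq:descent lemma SAM} as stated. The gap is repairable in two ways: (i) for a merely $\beta$-smooth loss, Cauchy--Schwarz gives $\inp{\nabla\loss(w_{t+\nicefrac12})}{\nabla\loss(w_t)}\ge\norm{\nabla\loss(w_t)}^2-\beta\rho\norm{\nabla\loss(w_t)}$, yielding a weaker descent inequality that still guarantees decrease until $\norm{\nabla\loss(w_t)}=O(\beta\rho)$, which suffices for your phase-one conclusion; or (ii) do what the paper does, namely prove a bespoke descent inequality $\loss(w_{t+1})-\loss(w_t)\le-\eta\inp{\nabla_t-\tfrac12\tilde\nabla_t}{\tilde\nabla_t}<0$ by checking componentwise that the perturbed gradient is at most twice the true gradient whenever $x_t,y_t\ge 5\rho$ (this uses the explicit structure and the sign-preservation of the ascent step). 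Either fix also needs the confinement to the smooth region, which the paper obtains from the monotonicity observation ($x_{t+1}\le x_t$ when $x_t>0$, etc.) together with an explicit bound on the overshoot; you correctly flag this but should note that monotonicity-toward-the-axes is the mechanism. As written, your proposal's first phase rests on a lemma that does not apply.
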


Thus, our observation in \autoref{thm:Strongly Convex and Smooth} is not limited to convex losses -- for our non-convex scalar-factorization problem, the stability of SAM remains robust to the choice of $\rho$, while USAM is provably unstable.
One may refer to \autoref{sec:appendix_square_loss} for the formal statement and proof of \autoref{thm:square loss}.

\subsection{Experiment: Early-Stage Behaviors when Training Neural Networks}
\label{sec:nn_earlystage}
\begin{figure}[H]
\centering
\includegraphics[width=0.7\textwidth]{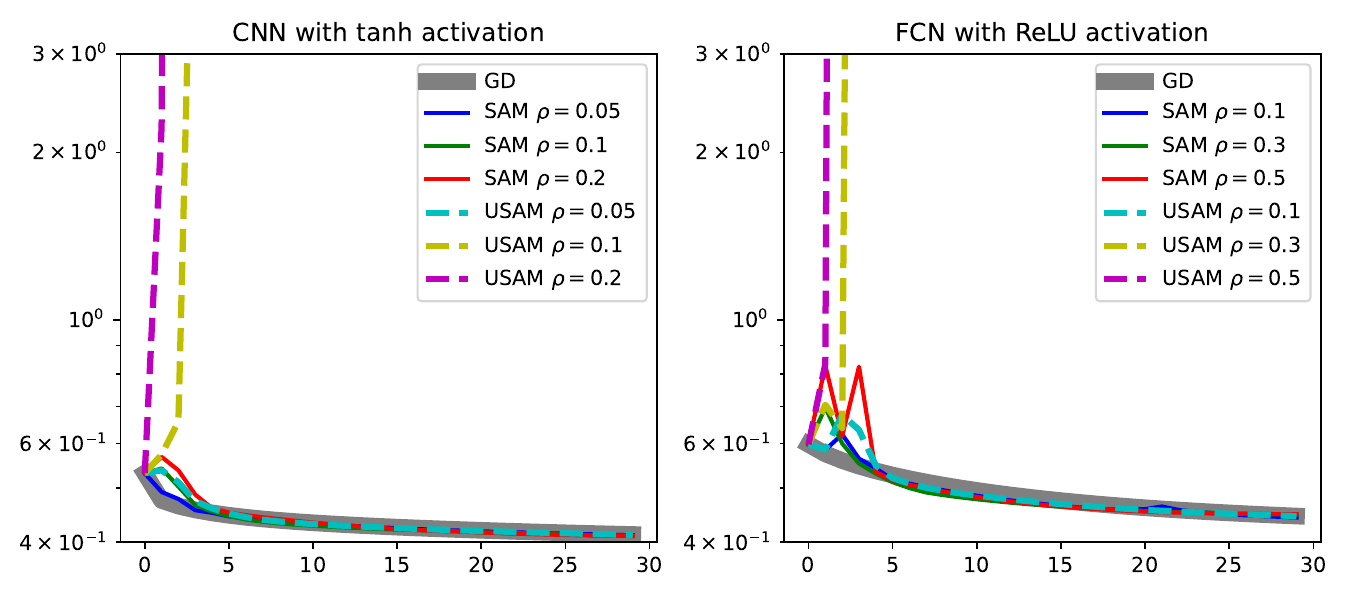}
\caption{{Behaviors of different algorithms when training neural networks} \footnotesize{ ($\eta=0.025$).}}
\label{fig:nn_earlystage}
\end{figure}

As advertised, our result holds not only for convex or toy loss functions but also for practical neural networks.
In \autoref{fig:nn_earlystage}, we plot the early-stage behavior of GD, SAM, and USAM with different $\rho$ values (while fixing $\eta$).
We pick two neural networks: a convolutional neural network (CNN) with tanh activation and a fully-connected network (FCN) with ReLU activation. We train them over the CIFAR10 dataset and report the early-stage training losses.
Similar to \autoref{fig:stability}, \autoref{thm:Strongly Convex and Smooth} and \autoref{thm:square loss}, \textbf{\emph{the stability of SAM is not sensitive to the choice of $\rho$, while USAM diverges easily}}.

\section{Role of Normalization for Drifting Near Minima}
\label{sec:solution}
Now, we explain the second role of normalization: enabling the algorithm to drift near minima.
To convince why this is beneficial, we adopt a loss function recently considered by \citet{ahn2022learning} when understanding the behavior of GD with large learning rates.
Their result suggests that GD needs a ``large enough'' $\eta$ for enhanced performance, but this threshold can never be known a-priori in practice.
To verify our observations from \autoref{fig:solution}, we study the dynamics of SAM and USAM over the same loss function and find that: i) \textbf{\emph{no careful tuning is needed for SAM}}; instead, SAM with any configuration finds the same minimum (which is the ``best'' one according to \citet{ahn2022learning}); and ii)
\textbf{\emph{such property is only enjoyed by SAM}} -- for USAM, careful tuning remains essential.

\subsection{Toy Model: Single-Neuron Linear Network Model}
\label{sec:ell_xy}
To theoretically study the role of normalization near minima, we consider the simple two-dimensional non-convex loss $\loss(x,y)$ defined over all $(x,y)\in\mathbb R^2$ as
\begin{align}\label{eq:definition of ell(xy)}
\loss\colon (x,y) \mapsto \ell(x \times y)\,,\qquad\text{where $\ell$ is convex, even, and 1-Lipschitz.}
\end{align} 
This $\loss$ was recently studied in \citep{ahn2022learning} to understand the behavior of GD with large $\eta$'s.
By direct calculation, the gradient and Hessian of $\loss$ at a given $(x,y)$ can be written as:
\begin{align}
\nabla \loss(x,y)
= \ell'(xy)\, \begin{bmatrix} y \\ x \end{bmatrix},\quad 
\nabla^2 \loss(x,y)
= \ell''(xy) \,\begin{bmatrix} y \\ x \end{bmatrix}^{\otimes 2} + \ell'(xy) \,\begin{bmatrix} 0 & 1 \\ 1 & 0 \end{bmatrix}\,. \label{eq:hessian_lxy}
\end{align}

Without loss of generality, one may assume $\ell$ is minimized at $0$
(see \autoref{sec:appendix_single_neuron} for more details regarding $\ell$).
Then,  $\loss$  achieves minimum at the entire $x$- and $y$-axes, making it a  good toy model for studying the behavior of algorithms near a continuum of minima.
Finally, note that the parametrization $x\times y$ can be interpreted as a single-neuron linear network model -- hence its name.

Before moving on to SAM and USAM we first briefly introduce the behavior of GD on such loss functions characterized in \citep{ahn2022learning}.
Since $\ell$ is even, without loss of generality, we always assume that the initialization $w_0=(x_0,y_0)$ satisfies $y_0\geq x_0>0$.

\begin{theorem}[Theorems 5 and 6 of \citep{ahn2022learning}; Informal]\label{thm:GD ell(xy)}
For any $\eta=\gamma/(y_0^2-x_0^2)$, the GD trajectory over the loss function $\loss(x,y)=\ell(xy)$ has two possible limiting points:
\begin{enumerate}[itemsep=2pt]
\item If $\gamma<2$, then the iterates converge to $(0,y_\infty)$ where $y_\infty^2\in [\nicefrac \gamma \eta-\mathcal O(\gamma)-\mathcal O(\nicefrac \eta \gamma),\nicefrac \gamma \eta+\mathcal O(\nicefrac \eta \gamma)]$.
\item If $\gamma>2$, then the iterates converge to $(0,y_\infty)$ where $y_\infty^2\in [\nicefrac 2 \eta - \mathcal O(\eta),\nicefrac 2\eta]$.
\end{enumerate}
\end{theorem}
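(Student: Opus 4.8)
The plan is to reduce the two-dimensional dynamics to the evolution of two scalar quantities. Writing the update componentwise as $x_{t+1}=x_t-\eta\ell'(x_ty_t)\,y_t$ and $y_{t+1}=y_t-\eta\ell'(x_ty_t)\,x_t$, I introduce the product $z_t\deq x_ty_t$ and the gap $D_t\deq y_t^2-x_t^2$. A direct expansion yields the clean multiplicative recursion $D_{t+1}=D_t\,(1-\eta^2\ell'(z_t)^2)$, so in the small-$\eta$ regime of the statement (where $\eta<1$, hence $\eta^2\ell'(z_t)^2<1$ since $|\ell'|\le 1$) the gap $D_t$ is positive, nonincreasing, and converges to some $D_\infty\ge 0$; meanwhile $z_{t+1}=z_t(1+\eta^2\ell'(z_t)^2)-\eta\ell'(z_t)(x_t^2+y_t^2)$. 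Since $x_t^2+y_t^2=\sqrt{D_t^2+4z_t^2}$, the theorem reduces to two facts: (i) $z_t\to 0$, whence $x_t^2=\tfrac12(\sqrt{D_t^2+4z_t^2}-D_t)\to 0$ and $y_t^2\to D_\infty$, so the iterate lands on the $y$-axis at height $y_\infty^2=D_\infty$; and (ii) a quantitative estimate of $D_\infty=D_0\prod_t(1-\eta^2\ell'(z_t)^2)$, where $D_0=y_0^2-x_0^2=\gamma/\eta$.

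Next I would linearize the $z$-recursion near the axis. Using $\ell'(z)=\ell''(0)\,z+o(z)$ with $\ell''(0)$ normalized to $1$ (as the clean constants of the informal statement presuppose) and $x_t^2+y_t^2\approx y_t^2$, one obtains $z_{t+1}\approx z_t(1-\eta y_t^2)$. Thus $z_t$ contracts exactly when $\eta y_t^2<2$ and expands when $\eta y_t^2>2$, so $y^2=2/\eta$ is the edge-of-stability threshold. Because $\gamma=\eta D_0\approx\eta y_0^2$, this is precisely the dichotomy in the statement, and it remains to convert this linear picture into rigorous bounds in each regime.

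For the stable regime $\gamma<2$, the plan is a bootstrap: assuming $z_t$ has stayed small, $y_t^2=\tfrac12(D_t+\sqrt{D_t^2+4z_t^2})\le D_t+z_t^2/D_t$ together with $D_t\le D_0=\gamma/\eta$ gives $\eta y_t^2<2$ up to a negligible correction, so $1-\eta y_t^2$ stays in $(-1,1)$ bounded away from the endpoints (ratio $\approx|1-\gamma|$); this makes $z_t$ decay geometrically, which closes the bootstrap and forces $x_t\to 0$. Geometric decay renders $\eta^2\sum_t\ell'(z_t)^2$ a convergent geometric sum, small enough that $D_\infty=D_0(1-o(1))$ lies within the claimed lower-order corrections, giving $y_\infty^2=D_\infty\in[\gamma/\eta-\mathcal O(\gamma)-\mathcal O(\eta/\gamma),\ \gamma/\eta]$, the upper bound being immediate from the monotonicity of $D_t$.

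The main obstacle is the unstable regime $\gamma>2$, where $z_t$ initially grows and oscillates in sign, so linearization alone is useless. The key mechanism is the monotone decrease of the height: from $y_{t+1}^2-y_t^2=-2\eta\ell'(z_t)z_t+\eta^2\ell'(z_t)^2x_t^2$ together with $\ell'(z)\,z\ge 0$ (as $\ell$ is convex, even, and minimized at $0$), every nonzero $z_t$ strictly decreases $y_t^2$. I would argue by an invariant-region/potential argument that $y_t^2$ cannot stall while $\eta y_t^2>2$: in that zone the instability keeps the oscillation amplitude of $z_t$ bounded away from $0$ through the saturation of $\ell'$, so the per-step drop of $y_t^2$ is bounded below, driving $\eta y_t^2$ down toward $2$; once it crosses below $2$, $z_t$ resumes contracting and the height freezes. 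A careful accounting of the final overshoot, using $|\ell'|\le 1$, caps the undershoot at $\mathcal O(\eta)$ and yields $y_\infty^2\in[2/\eta-\mathcal O(\eta),\ 2/\eta]$. Making this self-consistent oscillation amplitude and the coupled slow drift of $y_t^2$ precise with only convexity, evenness, and Lipschitzness of $\ell$ — rather than an exact quadratic — is the delicate heart of the argument.
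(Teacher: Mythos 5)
First, note that the paper itself does not prove this statement: \Cref{thm:GD ell(xy)} is imported (in informal form) from Theorems 5 and 6 of \citet{ahn2022learning}, so there is no in-paper proof to compare against. That said, your reductions are the natural ones and they mirror the computations this paper performs for its SAM/USAM analogues: the contraction identity $y_{t+1}^2-x_{t+1}^2=(1-\eta^2\ell'(z_t)^2)(y_t^2-x_t^2)$ is exactly the quantity tracked in \Cref{claim:square of y_t LB} and its USAM counterpart, and your sign/monotonicity facts are the analogues of \Cref{lem:sign SAM,lem:monotone SAM}. The product recursion for $z_t=x_ty_t$, the identity $x_t^2+y_t^2=\sqrt{D_t^2+4z_t^2}$, and the identification of $\eta y^2=2$ as the stability threshold are all correct.

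As a proof, however, the proposal has a genuine gap, and you name it yourself: the unstable regime $\gamma>2$. Two things are missing there. First, the claim that ``every nonzero $z_t$ strictly decreases $y_t^2$'' does not follow from $\ell'(z)z\ge 0$ alone, since $y_{t+1}^2-y_t^2=-2\eta\ell'(z_t)z_t+\eta^2\ell'(z_t)^2x_t^2$ has a positive second term; you additionally need $2y_t\ge\eta\,\lvert\ell'(z_t)\rvert\,\lvert x_t\rvert$, which holds in the relevant regime but is not established. Second, and more seriously, the assertion that ``the instability keeps the oscillation amplitude of $z_t$ bounded away from $0$, so the per-step drop of $y_t^2$ is bounded below'' \emph{is} the content of the cited Theorem 6 rather than a step toward it: one must rule out $z_t$ lingering near $0$ (where $\ell'(z_t)z_t$ is quadratically small and $y_t^2$ barely moves) while $\eta y_t^2>2$, and quantify the overshoot once $\lvert z_t\rvert$ grows into the saturated region $\lvert\ell'\rvert\ge c/2$; this two-timescale argument (fast oscillation of $z_t$, slow drift of $y_t^2$) is the delicate core and is only gestured at. A smaller omission in the stable regime: the bootstrap ``assuming $z_t$ has stayed small'' skips the initial phase in which $z_0=x_0y_0=\Theta(1/\eta)$ is large and $\ell'$ is saturated; it is precisely this phase that produces the $-\mathcal O(\gamma)$ term in the stated lower bound, since $D_t$ contracts by $(1-\Theta(\eta^2))^{\Theta(1/\eta)}=1-\Theta(\eta)$ over those $\Theta(1/\eta)$ steps, so it cannot be absorbed into the geometric tail. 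The skeleton is right; the unstable-regime mechanism is the theorem, and it is not established here.
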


Intuitively, the limiting point of GD (denoted by $(0,y_\infty)$) satisfies $y_\infty^2\approx \min\{y_0^2-x_0^2,\nicefrac 2\eta\}$. For simplicity, we denote $\eta_{\text{GD}}\approx \nicefrac{2}{y_0^2-x_0^2}$ as the threshold of $\eta$ that distinguishes these two cases.

\begin{wrapfigure}[20]{R}{0.3\textwidth} \centering
\includegraphics[width=\linewidth]{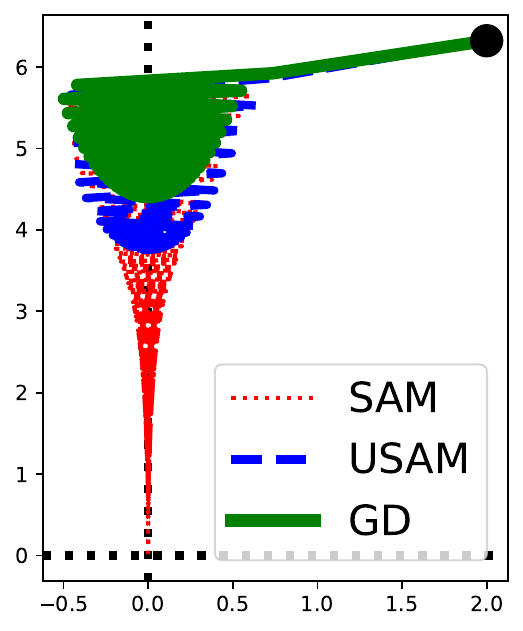}
\caption{ {Trajectories of different algorithms for the $\ell(xy)$ loss}\footnotesize{  ($\eta =0.4$ and $\rho = 0.1$; initialization $(x_0,y_0)=(2, \sqrt{40})$ is marked by a black dot).} }\label{fig:ell(xy)}
\end{wrapfigure}

\textbf{Interpretation of \citet{ahn2022learning}.} Fixing the initialization $(x_0,y_0)$, it turns out this model has a nice connection to the sparse coding problem, wherein it's desirable to get a smaller $y_\infty^2$ (which we will briefly discuss in \autoref{sec:sparse_coding}).
According to \autoref{thm:GD ell(xy)}, to get a smaller $y_\infty^2$,
one must increase the learning rate $\eta$ beyond $\eta_{\text{GD}}$.
Hence we mainly focus on the case where $\eta>\eta_{\text{GD}}$ -- in which case we abbreviate $y_\infty^2\approx \nicefrac 2\eta$ (see \autoref{tab:ell(xy)}).
However, GD diverges once $\eta$ is too large -- in their language, $\gamma$ cannot be much larger than $2$. This dilemma of tuning $\eta$, as we shall illustrate in \autoref{sec:sparse_coding} in more detail, makes GD a brittle choice for obtaining a better $y_\infty^2$.

On the other hand, from the numerical illustrations in \autoref{fig:ell(xy)}, one can see that \textit{SAM keeps moving along the manifold of minimizers} (i.e., the $y$-axis) until the origin. This phenomenon is characterized in \autoref{thm:SAM ell(xy) informal};
in short, any moderate choice of $\eta$ and $\rho$ suffices to drive SAM toward the origin -- no difficult tuning needed anymore!

In contrast, USAM does not keep moving along the axis. Instead, a lower bound on $y_\infty^2$ also presents -- although smaller than the GD version.
As we will justify in \autoref{thm:USAM ell(xy) informal}, \emph{USAM does get trapped} at some non-zero $y_\infty^2$
Thus, a dilemma similar to that of GD shows up: for enhanced performance, an aggressive $(\eta,\rho)$ is needed; however, as we saw from \autoref{sec:stability}, this easily results in a divergence.

\textbf{Assumptions.}
To directly compare with \citep{ahn2022learning}, we focus on the cases where $y_0^2-x_0^2 = \gamma /\eta$ and $\gamma\in [\frac 12,2]$ is a constant of moderate size; hence, $\eta$ is not too different from the $\eta_{\text{GD}}$ defined in \autoref{thm:GD ell(xy)}.
In contrast to most prior works which assume a tiny $\rho$ (e.g., \citep{wen2022does}), we allow $\rho$ to be as large as a constant (i.e., we only require $\rho=\mathcal O(1)$ in \autoref{thm:SAM ell(xy) informal} and \autoref{thm:USAM ell(xy) informal}).

\subsubsection{SAM Keeps Drifting Toward the Origin}
\label{sec:dynamics of SAM}

We characterize the trajectory of SAM when applied to the loss defined in \autoref{eq:definition of ell(xy)} as follows:

\begin{theorem}[SAM over Single-Neuron Networks; Informal]\label{thm:SAM ell(xy) informal}
For any $\eta\in [\frac 12\eta_{\text{GD}},2\eta_{\text{GD}}]$ and $\rho=\mathcal O(1)$, the SAM trajectory over the loss function $\loss(x,y)=\ell(xy)$ can be divided into three phases:
\begin{enumerate}[itemsep=2pt]
\item \textbf{\pone.} $x_t$ drops so rapidly that $\lvert x_t\rvert=\mathcal O(\sqrt \eta)$ in $\mathcal O(\nicefrac 1\eta)$ steps. Meanwhile, $y_t$ remains large: specifically, $y_t=\Omega(\sqrt{\nicefrac 1\eta})$. Thus, SAM approaches the $y$-axis (the set of global minima).
\item \textbf{\ptwo.} $x_t$ oscillates closely to the axis such that $\lvert x_t\rvert=\mathcal O(\sqrt \eta)$ always holds. Meanwhile, $y_t$ decreases fast until $y_t\le \lvert x_t\rvert$\footnote{Specifically,  either an exponential decay $y_{t+1}\lesssim (1-\eta \rho^2)y_t$ or a constant drop $y_{t+1}\lesssim y_t-\eta \rho$ appears\label{footnote:decrease of y_t in SAM}} -- that is, $\lvert x_t\rvert$ remains small and SAM approaches the origin. 
\item \textbf{\pthree.} $w_t=(x_t,y_t)$ gets close to the origin such that $\lvert x_t\rvert,\lvert y_t\rvert=\mathcal O(\sqrt \eta+\eta \rho)$. We then show that $w_t$ remains in this region for the subsequent iterates. 
\end{enumerate}
\end{theorem}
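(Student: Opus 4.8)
The plan is to turn \Cref{eq:definition of SAM} into an explicit two–coordinate recursion and then follow $(x_t,y_t)$ through the three phases. Writing $r_t\triangleq\norm{w_t}$ and $s_t\triangleq\sgn(x_ty_t)$, \Cref{eq:hessian_lxy} gives $\nabla\loss(w_t)/\norm{\nabla\loss(w_t)}=s_t(y_t,x_t)/r_t$, so the perturbed point $\tilde w_t=(\tilde x_t,\tilde y_t)$ lies at distance exactly $\rho$ from $w_t$, with $\tilde x_t=x_t+\rho s_ty_t/r_t$ and $\tilde y_t=y_t+\rho s_tx_t/r_t$. A direct computation gives $\tilde x_t\tilde y_t=x_ty_t(1+\rho^2/r_t^2)+\rho s_tr_t$ (so $\sgn(\tilde x_t\tilde y_t)=s_t$) together with the updates $x_{t+1}=x_t-\eta\ell'(\tilde x_t\tilde y_t)\tilde y_t$ and $y_{t+1}=y_t-\eta\ell'(\tilde x_t\tilde y_t)\tilde x_t$. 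I would also record, as the main handle on $y_t$, the evolution of the quantity $y_t^2-x_t^2$ that governed the GD analysis in \Cref{thm:GD ell(xy)}: a short algebra yields $y_{t+1}^2-x_{t+1}^2=(y_t^2-x_t^2)[1-2a_t\rho s_t/r_t-a_t^2(1-\rho^2/r_t^2)]$ with $a_t\triangleq\eta\ell'(\tilde x_t\tilde y_t)$.

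In the \pone, both $|x_t|$ and $y_t$ are of order $\sqrt{1/\eta}$, hence $r_t=\Theta(\sqrt{1/\eta})\gg\rho$ and the $\rho$–perturbation is a lower–order correction, so SAM essentially tracks the GD trajectory of \Cref{thm:GD ell(xy)}. I would make this quantitative in two steps. First, since $\tilde x_t\tilde y_t\ge\rho s_tr_t=\rho r_t\gtrsim\rho\sqrt{1/\eta}$ is large and any nontrivial convex, even, $1$–Lipschitz $\ell$ has $\ell'(z)\uparrow L\in(0,1]$, the factor $\ell'(\tilde x_t\tilde y_t)$ is bounded below by a constant $c>0$; together with $\tilde y_t\ge y_t$ this forces a per–step drop $x_{t+1}\le x_t-c\eta y_t\le x_t-\Omega(\sqrt\eta)$, so that $|x_t|=\mathcal O(\sqrt\eta)$ within $\mathcal O(1/\eta)$ steps. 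Second, because $a_t/r_t=\mathcal O(\eta^{3/2})$, the bracket in the $y_t^2-x_t^2$ recursion equals $1-\mathcal O(\eta^{3/2}\rho)$, so over $\mathcal O(1/\eta)$ steps $y_t^2-x_t^2$ shrinks by at most a $1-\mathcal O(\sqrt\eta\,\rho)$ factor and stays $\ge(1-o(1))\gamma/\eta$; hence $y_t^2\ge y_t^2-x_t^2=\Omega(1/\eta)$ and $y_t=\Omega(\sqrt{1/\eta})$ throughout.

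Entering the \ptwo we have $|x_t|=\mathcal O(\sqrt\eta)\ll\rho\le r_t\approx y_t$, which is precisely where normalization becomes essential. Now $r_t\approx y_t$ gives $\tilde x_t=x_t+\rho s_ty_t/r_t\approx\rho\,\sgn(x_t)$, so the perturbation ejects the iterate a distance $\approx\rho$ off the axis and $\tilde x_t\tilde y_t\approx\rho\,\sgn(x_t)\,y_t$. The decisive observation is that the sign of $x_t$ cancels in the $y$–update, $y_{t+1}=y_t-\eta\ell'(\tilde x_t\tilde y_t)\tilde x_t\approx y_t-\eta\rho\,\ell'(\rho y_t)$, so $y_t$ drifts \emph{downward on either side of the axis}. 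Splitting on the size of $\rho y_t$ reproduces the two behaviours asserted in the statement: a constant drop $y_{t+1}\approx y_t-\eta\rho L$ when $\rho y_t$ is large, and a geometric decay $y_{t+1}\approx(1-\eta\rho^2\ell''(0))y_t$ when $\rho y_t$ is small. In parallel I must certify that $|x_t|=\mathcal O(\sqrt\eta)$ is preserved: from $x_{t+1}=x_t-\sgn(x_t)\,\eta\ell'(\rho y_t)y_t$ with $\eta\ell'(\rho y_t)y_t\lesssim\eta y_t\lesssim\sqrt\eta$, the induced map $|x_t|\mapsto\big||x_t|-\eta\ell'(\rho y_t)y_t\big|$ pulls $|x_t|$ into, and then confines it to, an $\mathcal O(\sqrt\eta)$ band. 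The phase ends once $y_t$ has fallen to $\mathcal O(\sqrt\eta)\le|x_t|$.

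Finally, in the \pthree both coordinates are $\mathcal O(\sqrt\eta)$ and the goal is to show the $\mathcal O(\sqrt\eta+\eta\rho)$–neighborhood of the origin is forward invariant. I expect this to be \emph{the main obstacle}: near the origin $x_t$ and $y_t$ are comparable and small, so the dominant–coordinate approximations of Phases 1--2 break down, and both the perturbation direction $(y_t,x_t)/r_t$ and its sign $s_t$ can swing arbitrarily between consecutive steps, ruling out a reduction to a scalar recursion. My plan is to argue directly with $r_t^2=\norm{w_t}^2$ via $r_{t+1}^2=r_t^2-2\eta\ell'(\tilde x_t\tilde y_t)(2x_ty_t+\rho s_tr_t)+\eta^2\norm{\nabla\loss(\tilde w_t)}^2$: one checks that the inner product $\ell'(\tilde x_t\tilde y_t)(2x_ty_t+\rho s_tr_t)\ge0$ (its sign is $s_t^2=+1$), providing an inward restoring force, while $\tilde x_t\tilde y_t=\mathcal O((r_t+\rho)^2)$ lies in the linear part of $\ell'$, so $\norm{\nabla\loss(\tilde w_t)}=\mathcal O((r_t+\rho)^3)$. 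Balancing the inward drift against this $\rho$–driven kick yields a self–bounding recursion that traps $r_t=\mathcal O(\sqrt\eta+\eta\rho)$. The delicate part — and where the real work lies — is making the constants uniform across all four sign patterns of $(x_t,y_t)$ and verifying that the iterate cannot be slung back up the $y$–axis once it has entered the neighborhood.
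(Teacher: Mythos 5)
Your Phases 1 and 2 follow essentially the same route as the paper's proof (\Cref{sec:appendix_sam_ell(xy)}): the multiplicative recursion for $y_t^2-x_t^2$ (your bracket $1-2a_t\rho s_t/r_t-a_t^2(1-\rho^2/r_t^2)$ is exactly the paper's $(1-\eta\ell'_t\rho z_t^{-1})^2-(\eta\ell'_t)^2$ in \Cref{claim:square of y_t LB}), the per-step drop $x_{t+1}\le x_t-\Omega(\sqrt{\gamma\eta})$ via a constant lower bound on $\ell'_t$, and in Phase 2 the sign cancellation $\ell'_t\cdot\rho y_tz_t^{-1}\approx\rho\,|\ell'(\rho y_t)|$ with the case split on $|\tilde x_t\tilde y_t|\gtrless c$ giving the constant drop versus the $(1-\Omega(\eta\rho^2))$ decay — this is precisely \Cref{lem:phase2}. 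Two small remarks: your lower bound on $\ell'(\tilde x_t\tilde y_t)$ in Phase 1 via ``$|\tilde x_t\tilde y_t|\ge\rho r_t$ is large'' is fragile when $\rho$ is small (the statement allows any $\rho=\mathcal O(1)$); use $|\tilde x_t\tilde y_t|\ge|x_ty_t|\ge c$ instead, which is what the Phase-1 exit threshold on $x_t$ is designed to guarantee, together with Assumption \ref{A:bdd_der}. Also the step ``$a_t/r_t=\mathcal O(\eta^{3/2})$'' presupposes $y_t=\Omega(\sqrt{1/\eta})$, which is the very thing being propagated, so it must be run as a joint induction as in \Cref{lem:y_t cannot be too small in I SAM}.

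Where you genuinely diverge is Phase 3, and there you have misjudged the difficulty: you flag it as ``the main obstacle'' and propose a norm-Lyapunov argument balancing an inward drift $-2\eta\ell'_t(2x_ty_t+\rho s_tr_t)$ against the kick $\eta^2\|\nabla\loss(\tilde w_t)\|^2$, with the balancing left unverified. The paper's \Cref{lem:phase3 SAM} avoids all of this: by \Cref{lem:monotone SAM}, \emph{each coordinate separately} always moves toward zero regardless of the sign pattern of $(x_t,y_t)$, so the only way $|x_{t+1}|$ or $|y_{t+1}|$ can exceed $|x_t|$ or $|y_t|$ is by overshooting past zero, and the overshoot is bounded by the step size $\eta\rho+\eta\max\{|x_t|,|y_t|\}\le\eta\rho+\eta B\le B$. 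This handles all four sign patterns uniformly and makes the ball $\{|x|,|y|\le B\}$ forward-invariant in two lines — and it is exactly the argument you already use in Phase 2 to confine $|x_t|$, so you have the ingredients; you only need to apply it to both coordinates rather than switching to $r_t^2$. Your restoring-force computation is not wrong (the inner product does have sign $s_t^2$), but closing it uniformly over the regimes $r_t\lessgtr\rho$ and $|\tilde x_t\tilde y_t|\lessgtr c$ is real extra work that the coordinate-wise argument renders unnecessary.
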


Informally, SAM first approaches the minimizers on $y$-axis (which form a manifold) and then keeps moving until a specific minimum.
Moreover, SAM always approaches this minimum for almost all $(\eta,\rho)$'s.
This matches our motivating experiment in \autoref{fig:solution}: No matter what hyper-parameters are chosen, SAM \textit{always} drift along the set of minima, in contrast to the behavior of GD.  
This property allows SAM always to approach the origin $(0,0)$ and remains in its neighborhood, while GD converges to $(0,\sqrt{\nicefrac 2\eta})$ (see \autoref{tab:ell(xy)}).
The formal version of \autoref{thm:SAM ell(xy) informal} is in \autoref{sec:appendix_sam_ell(xy)}.

\subsubsection{USAM Gets Trapped at Different Minima}
\label{sec:dynamics of USAM}

We move on to characterize the dynamics of USAM near the minima. 
Like GD or SAM, the first few iterations of USAM drive iterates to the $y$-axis.
However, unlike SAM,  USAM does not keep drifting along the $y$-axis and stops at some threshold -- in the result below, we prove a lower bound on $y_t^2$ that depends on both $\eta$ and $\rho$.
In other words,  the lack of normalization factor leads to diminishing drift.

\begin{theorem}[USAM over Single-Neuron Networks; Informal]\label{thm:USAM ell(xy) informal}
For any $\eta\in [\frac 12\eta_{\text{GD}},2\eta_{\text{GD}}]$ and $\rho=\mathcal O(1)$, the USAM trajectory over the loss function $\loss(x,y)=\ell(xy)$ have the following properties:
\begin{enumerate}[itemsep=1pt]
\item \textbf{\pone.} Similar to \pone of \autoref{thm:SAM ell(xy) informal}, $\lvert x_t\rvert=\mathcal O(\sqrt \eta)$ and $y_t=\Omega(\sqrt{\nicefrac 1\eta})$ hold for the first $\mathcal O(\nicefrac 1\eta)$ steps. That is, USAM also approaches $y$-axis, the set of global minima. 
\item \textbf{\pthree.} However, for USAM, once the following condition holds for some round $\tenter$:\footnote{The $\tilde y_{\text{USAM}}^2$ in \autoref{eq:threshold in USAM} is defined as the solution to the equation $(1+\rho y^2)y^2=2$.
\label{footnote:threshold in USAM}}
\begin{equation}\label{eq:threshold in USAM}
(1+\rho y_\tenter^2)y_\tenter^2<\frac 2\eta, \quad \text{i.e.}, \ \ y_\tenter^2<\tilde y_{\text{USAM}}^2\triangleq \left .\left (\sqrt{8\frac{\rho}{\eta}+1}-1\right ) \middle / 2\rho\right .,
\end{equation}
$\lvert x_t\rvert$ decays exponentially fast, which in turn ensures $y_\infty^2\triangleq \liminf_{t\to \infty}y_t^2\gtrsim (1-\eta^2-\rho^2)y_\tenter^2$.
\end{enumerate}
\end{theorem}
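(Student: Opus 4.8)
The plan is to work directly with the explicit USAM update for $\loss(x,y)=\ell(xy)$. Writing $g_t \deq \ell'(x_ty_t)$, the ascent point is $(x_t',y_t') = (x_t + \rho g_t y_t,\, y_t + \rho g_t x_t)$ and, with $g_t' \deq \ell'(x_t'y_t')$, the update reads $x_{t+1} = x_t - \eta g_t' y_t'$ and $y_{t+1} = y_t - \eta g_t' x_t'$. The first object I would extract is a clean \emph{exact} identity for the ``gap'' $D_t \deq y_t^2 - x_t^2$. A direct computation using $y_tx_t' - x_ty_t' = \rho g_t(y_t^2-x_t^2)$ and $x_t'^2 - y_t'^2 = -(1-\rho^2g_t^2)(y_t^2-x_t^2)$ gives
\begin{align*}
D_{t+1} = D_t\Bigl(1 - 2\eta\rho\, g_tg_t' - \eta^2 g_t'^2(1-\rho^2 g_t^2)\Bigr).
\end{align*}
Because $\ell$ is convex and even with minimizer $0$, $\ell'(u)$ shares the sign of $u$, so $g_t$ and $g_t'$ share the sign of $x_ty_t$ and hence $g_tg_t'\ge 0$; together with $\rho|\ell'|\le\rho<1$ this shows the bracket lies in $(0,1)$, i.e. $D_t$ is \textbf{strictly decreasing and positive}. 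This single monotone quantity drives the whole argument: since I will separately show $x_t\to 0$, the limit satisfies $y_\infty^2 = D_\infty = D_{\tenter}\prod_{t\ge\tenter}\phi_t$, where $\phi_t$ denotes the bracket above.

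For \pone I would argue exactly as in \pone of \Cref{thm:SAM ell(xy) informal}: in this regime the product $|x_ty_t|$ is bounded away from $0$, the dynamics are dominated by the descent on $x_t$, and the assumptions $\eta\in[\tfrac12\eta_{\text{GD}},2\eta_{\text{GD}}]$ and $\rho=\mathcal O(1)$ keep the un-normalized perturbation controlled (so that the relevant scale $\eta y_t^2$ stays $\mathcal O(1)$), so that $|x_t|$ contracts to $\mathcal O(\sqrt\eta)$ within $\mathcal O(\nicefrac{1}{\eta})$ steps while $y_t=\Omega(\sqrt{\nicefrac{1}{\eta}})$. I would record in particular the entry bound $x_{\tenter}^2 = \mathcal O(\eta)$, which is ultimately what makes the trapping tight.

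The heart of the proof is \pthree. Near the axis I linearize $\ell'(u)\approx\ell''(0)u$ (normalizing $\ell''(0)=1$ for readability), which yields $x_t' \approx x_t(1+\rho y_t^2)$, $g_t'\approx x_ty_t(1+\rho y_t^2)$, and hence the scalar recursion $x_{t+1}\approx q_t\,x_t$ with contraction factor $q_t = 1-\eta y_t^2(1+\rho y_t^2)$. The condition \eqref{eq:threshold in USAM} is \emph{exactly} the statement $q_{\tenter}>-1$, i.e. that the $x$-dynamics are (weakly) contractive; I would first show this property is self-sustaining, since $D_t$ --- and therefore $y_t^2$ --- only decreases, so $q_t$ can only move toward the stable interior of $(-1,1)$. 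Exponential decay of $x_t$ then follows, which forces $y_t$ to stop drifting because its per-step change $y_t-y_{t+1}\approx \eta x_t^2 y_t(1+\rho y_t^2)^2$ is summable. Finally, to obtain the quantitative trap $y_\infty^2\gtrsim(1-\eta^2-\rho^2)y_{\tenter}^2$ I would bound $\sum_{t\ge\tenter}(1-\phi_t) = \mathcal O(\eta^2+\rho^2)$ by combining $1-\phi_t = \mathcal O\bigl(x_t^2(\rho + \nicefrac{1}{y_t^2})\bigr)$ with geometric control of $\sum_t x_t^2$ seeded by $x_{\tenter}^2=\mathcal O(\eta)$, and then use $y_\infty^2 = D_\infty = D_{\tenter}\prod_t\phi_t \ge y_{\tenter}^2\bigl(1-\mathcal O(\eta^2+\rho^2)\bigr)$.

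The main obstacle is the \textbf{edge-of-stability behaviour at the threshold}: at $y_t^2=\tilde y_{\text{USAM}}^2$ one has exactly $q_t=-1$, so just after entering \pthree the factor $|q_t|$ may be arbitrarily close to $1$ and $x_t$ decays only marginally. A naive geometric-series bound on $\sum x_t^2$ with a fixed ratio therefore diverges as $q_t\to-1$ and does not yield the claimed $\mathcal O(\eta^2+\rho^2)$ drift. Resolving this requires a genuine two-timescale argument coupling the \emph{slow} monotone decrease of $y_t^2$ with the \emph{fast} (but initially near-marginal, sign-oscillating) decay of $x_t$: as soon as $y_t^2$ dips below $\tilde y_{\text{USAM}}^2$, the slack $s_t \deq 2-\eta y_t^2(1+\rho y_t^2)=1+q_t$ grows, the decay accelerates, and one must show the slack grows fast enough that the accumulated drift $\sum x_t^2$ stays $\mathcal O(\eta)$-small. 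I expect the cleanest way to make this rigorous is to run the exact identity for $D_t$ in tandem with a telescoping bound on $\sum_t x_t^2$ coming from the per-step decrease of $y_t^2$ itself, rather than from any fixed contraction rate.
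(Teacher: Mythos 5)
Your route is the paper's route. The exact identity you derive for $D_t=y_t^2-x_t^2$ is precisely the computation inside the paper's \Cref{claim:square of y_t LB USAM} (there written as $y_{t+1}^2-x_{t+1}^2=(1-(\eta \ell_t')^2+(\eta \ell_t')^2(\rho \tilde \ell_t')^2-2\eta \ell_t'\cdot\rho \tilde \ell_t')(y_t^2-x_t^2)$), and it is used for the same purpose in \pone; your linearized contraction factor $q_t=1-\eta y_t^2(1+\rho y_t^2)$ and the identification of \Cref{eq:threshold in USAM} with $q_t>-1$ is exactly \Cref{eq:dynamics of x_t III USAM}; and both arguments reduce the final estimate to controlling $\sum_{t\ge\tenter}x_t^2$ by geometric decay seeded by $x_\tenter^2=\mathcal O(\eta)$ (the paper tracks $d_t=y_\tenter^2-y_t^2$ additively rather than the product $\prod_t\phi_t$, a cosmetic difference).

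The place where you stop --- the ``edge-of-stability'' behaviour when $y_\tenter^2$ sits exactly at the threshold --- is a genuine incompleteness in your write-up, but it is an obstacle only to a statement stronger than the one being proved. In the formal version (\Cref{lem:phase3 usam,lem:stopping position of USAM}), $\tenter$ is defined by $\eta(1+\rho y_\tenter^2)y_\tenter^2=2-\epsilon$ for a \emph{constant} $\epsilon>0$; the per-step contraction is $\exp(-\Omega(\epsilon))$, obtained by combining the lower bound $x_{t+1}\ge-(1-(1+\rho \cy\gamma\eta)\epsilon_t)x_t$ with the upper bound $x_{t+1}\le(1-\tfrac{2-\epsilon}{8})x_t$ from Assumption \ref{A:bdd_der}; and the conclusion is $y_\infty^2\ge(1-4\cy\gamma(\eta+\rho\cy\gamma)^2\epsilon^{-1})y_\tenter^2$. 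In other words, the fixed-ratio geometric sum you dismiss is exactly what the paper uses, and the $\epsilon^{-1}$ blow-up you correctly predict is simply carried as an explicit constant in the bound --- it is what the ``$\gtrsim$'' in the informal statement absorbs. A bound uniform in the slack $\epsilon$, of the kind your two-timescale program aims for, is not claimed and is not needed; adopting the paper's parameterization closes your outline with no new ideas. One small caution: your assertion that $y_t^2$ ``only decreases'' (so that $q_t$ only improves) relies on $y_t$ keeping its sign (\Cref{lem:monotone USAM}); after \pone the paper instead works with the uniform a priori bounds $|x_t|\le\sqrt{\cy\gamma\eta}$ and $|y_t|\le\sqrt{\cy\gamma/\eta}$ of \Cref{lem:bounded x_t and y_t in Phase II USAM}, which is the safer way to keep $q_t\in(-1,1)$ self-sustaining.
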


\textbf{Remark.}
Note that USAM becomes GD as we send $\rho \to 0+$, and our characterized threshold $y_{\text{USAM}}^2$ indeed recovers
that of GD (i.e., $\nicefrac 2\eta$ from \autoref{thm:GD ell(xy)}) because $\lim_{\rho \to 0+} (\sqrt{8\nicefrac{\rho}{\eta}+1}-1 ) / 2\rho  = \nicefrac 2\eta$.

\begin{table}[tb]
\centering
\renewcommand{\arraystretch}{1.25}
\caption{Limiting points of GD, SAM, and USAM for the $\ell(xy)$ loss {\footnotesize (assuming $\eta>\eta_{\text{GD}}$). }}
\label{tab:ell(xy)}
\begin{tabular}{|c|c|c|c|}\hline
Algorithm     & GD & SAM & USAM  \\\hline
Limiting Point $(0,y_\infty)$ & $y_\infty^2\approx\nicefrac 2\eta$ & $y_\infty^2\approx 0$ & $(1+\rho y_\infty^2)y_\infty^2\approx \nicefrac 2\eta$ \\\hline
\end{tabular}
\end{table}

Compared with SAM, the main difference occurs when close to minima, i.e., $\lvert x_t\rvert=\mathcal O(\sqrt \eta)$.
Consistent with our motivating experiment (\autoref{fig:solution}), the removal of normalization leads to diminishing drift along the minima. 
Thus, USAM is more like an improved version of GD rather than a simplification of SAM,
and the comparison between \autoref{thm:GD ell(xy)} and \autoref{thm:USAM ell(xy) informal} reveals that USAM only improves over GD if $\rho$ is large enough -- in which case USAM is prone to diverges as we discussed in \autoref{sec:stability}. 

See \autoref{sec:appendix_usam_ell(xy)} for a formal version of \autoref{thm:USAM ell(xy) informal} together with its proof.

\subsubsection{Technical Distinctions Between GD, SAM, and USAM}
\label{sec:technical ell(xy)}
Before moving on, we present a more technical comparison between the results stated in \autoref{thm:GD ell(xy)} versus \autoref{thm:SAM ell(xy) informal} and \autoref{thm:USAM ell(xy) informal}.
We start with an intuitive explanation of why GD and USAM get stuck near the manifold of minima but SAM does not:
when the iterates approach the set of minima,  both $w_t$ and $\nabla \loss (w_t)$ become small.
Hence the normalization plays an important role:
as $\nabla \loss(w_t)$ are small,  $w_t$ and $w_t+\rho \nabla \loss(w_t)$ become nearly identical, which leads to a diminishing updates of GD and USAM near the minima. 
On the other hand, having the normalization term, the SAM update doesn't diminish,
which prevents SAM from converging to a minimum and keeps drifting along the manifold. 

This high-level idea is supported by the following calculation: 
recall \autoref{eq:hessian_lxy} that $\nabla \loss(x_t,y_t)=\ell'(x_ty_t) \cdot [y_t\quad x_t]^\top$.
Hence, when $\lvert x_t\rvert\ll y_t$ in \pthree, the ``ascent gradient'' direction $\nabla \loss(x_t,y_t)$ (i.e., the ascent steps in \autoref{eq:definition of SAM} and \autoref{eq:definition of USAM}) is almost perpendicular to the $y$-axis.
We thus rewrite the update direction (i.e., the difference between $w_{t+1}$ and $w_t$) for each algorithm as follows.
\begin{enumerate}[itemsep=2pt]
\item For SAM, after normalization, $\frac{\nabla \loss(w_t)}{\lVert \nabla \loss(w_t)\rVert}$ is roughly a unit vector along the $x$-axis. Hence, the update direction is the gradient at $w_{t+\nicefrac 12}\approx [\rho\quad y_t]^\top$. Once $y_t$ is large (making $w_{t+\nicefrac 12}$ far from minima), $\nabla \loss(w_{t+\nicefrac 12})$ thus have a large component along $y_t$, which leads to drifting near minima. 
\item For GD, by approximating $\ell'(u)\approx u$, we derive $\nabla \loss(x_t,y_t)\approx [x_ty_t^2\quad x_t^2y_t]^\top$. When $\nicefrac 2\eta>y_t^2$, the magnitude of $x_t$ is updated as $\lvert x_{t+1}\rvert\approx \lvert x_t-\eta x_ty_t^2\rvert=\lvert (1-\eta y_t^2)x_t\rvert$, which allows an exponential decay. Thus, GD converges to a minimum and stop moving soon after $\nicefrac 2\eta>y_t^2$.
\item For USAM, the descent gradient is taken at $w_t+\rho \nabla \loss(w_t)\approx [(1+\rho y_t^2)x_t\quad (1+\rho x_t^2)y_t]^\top$. Thus, $\nabla \loss(w_t+\rho \nabla \loss(w_t))\approx [(1+\rho y_t^2)(1+\rho x_t^2)^2 x_ty_t^2\quad
(1+\rho y_t^2)^2(1+\rho x_t^2) x_t^2y_t]^\top$ by writing $\ell'(u)\approx u$. This makes USAM deviate away from SAM and behave like GD: by the similar argument as GD, USAM stops at a minimum soon after $\nicefrac 2\eta>(1+\rho y_t^2)(1+\rho x_t^2)^2 y_t^2\approx (1+\rho y_t^2) y_t^2$!

\end{enumerate}

Hence, the normalization factor in the ascent gradient helps maintain a non-diminishing component along the minima, leading SAM to keep drifting.
This distinguishes SAM from   GD and USAM.

\subsection{USAM Gets Trapped Once Close to Minima}
\label{sec:general}

In this section, we extend our arguments to nonconvex costs satisfying Polyak-Lojasiewicz (PL) functions (see, e.g., \citep{karimi2016linear}).
Recall that $f$ satisfies the $\mu$-PL condition if $\frac 12 \lVert \nabla \loss(w)\rVert^2\ge \mu(\loss(w)-\min_{w}\loss(w))$ for all $w$. 
Building upon the analysis of  \cite{andriushchenko2022towards}, we show the following result when applying USAM to $\beta$-smooth and $\mu$-PL losses.

\begin{theorem}[USAM over PL Losses; Informal]\label{thm:USAM general}
For $\beta$-smooth and $\mu$-PL loss $\loss$, for any $\eta<\nicefrac 1\beta$ and $\rho<\nicefrac 1\beta$, and for any initialization $w_0$, $\lVert w_t-w_0\rVert\le \text{poly}(\eta,\rho,\beta,\mu)\cdot \sqrt{\loss(w_0)-\min_{w}\loss(w)}$, $\forall t$.
\end{theorem}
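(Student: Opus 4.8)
The plan is to prove a three-step chain: (i) a descent lemma showing USAM decreases the loss by an amount proportional to $\norm{\nabla\loss(w_t)}^2$; (ii) an upgrade of this descent to \emph{geometric} decay of both the loss gap and the gradient norm via the PL inequality; and (iii) a summation of the per-step displacements, whose geometric decay keeps the total movement bounded. Throughout write $g_t \deq \nabla\loss(w_t)$ and $\tilde w_t \deq w_t + \rho g_t$, so that the USAM update is $w_{t+1} = w_t - \eta\,\nabla\loss(\tilde w_t)$, and let $\loss^\star \deq \min_w \loss(w)$.

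For step (i), the one quantity that controls everything is the gap between the gradient at the ascent point and the true gradient, which $\beta$-smoothness bounds by $\norm{\nabla\loss(\tilde w_t) - g_t} \le \beta\norm{\tilde w_t - w_t} = \beta\rho\norm{g_t}$. Feeding the update into the standard smoothness inequality $\loss(w_{t+1}) \le \loss(w_t) + \langle g_t, w_{t+1} - w_t\rangle + \frac{\beta}{2}\norm{w_{t+1}-w_t}^2$, then lower-bounding the cross term by $\langle g_t, \nabla\loss(\tilde w_t)\rangle \ge (1 - \beta\rho)\norm{g_t}^2$ and upper-bounding $\norm{\nabla\loss(\tilde w_t)} \le (1+\beta\rho)\norm{g_t}$, I expect to arrive at the descent lemma
\begin{equation*}
\loss(w_{t+1}) \le \loss(w_t) - c\,\eta\,\norm{g_t}^2, \qquad c \deq (1-\beta\rho) - \frac{\eta\beta}{2}(1+\beta\rho)^2 .
\end{equation*}
For $\eta$ and $\rho$ small relative to $\nicefrac1\beta$ the constant $c$ is strictly positive; verifying $c>0$ and pinning down the exact admissible range is where the precise (formal) hypotheses enter.

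For step (ii), the PL inequality $\norm{g_t}^2 \ge 2\mu(\loss(w_t) - \loss^\star)$ turns the descent lemma into the contraction $\loss(w_{t+1}) - \loss^\star \le (1 - 2c\eta\mu)(\loss(w_t) - \loss^\star)$, giving $\loss(w_t) - \loss^\star \le q^t(\loss(w_0) - \loss^\star)$ with $q \deq 1 - 2c\eta\mu \in [0,1)$. Coupling this with the complementary smoothness bound $\norm{g_t}^2 \le 2\beta(\loss(w_t) - \loss^\star)$ (valid since $\loss^\star$ is the global minimum) yields geometric decay of the gradient itself: $\norm{g_t} \le \sqrt{2\beta(\loss(w_0) - \loss^\star)}\,q^{t/2}$.

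Finally, for step (iii) I would bound the displacement by the triangle inequality together with the per-step estimate $\norm{w_{s+1} - w_s} = \eta\norm{\nabla\loss(\tilde w_s)} \le \eta(1+\beta\rho)\norm{g_s}$, so that
\begin{equation*}
\norm{w_t - w_0} \le \eta(1+\beta\rho)\sum_{s=0}^{t-1}\norm{g_s} \le \eta(1+\beta\rho)\sqrt{2\beta(\loss(w_0)-\loss^\star)}\sum_{s=0}^{\infty} q^{s/2} = \frac{\eta(1+\beta\rho)\sqrt{2\beta(\loss(w_0)-\loss^\star)}}{1 - \sqrt q}.
\end{equation*}
Since $1 - \sqrt q \ge \frac{1-q}{2} = c\eta\mu$, the factor of $\eta$ cancels and the bound collapses to $\frac{(1+\beta\rho)\sqrt{2\beta}}{c\mu}\sqrt{\loss(w_0)-\loss^\star}$, which is of the claimed form $\mathrm{poly}(\eta,\rho,\beta,\mu)\cdot\sqrt{\loss(w_0)-\min_w\loss(w)}$ and, crucially, is uniform in $t$. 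The main obstacle I anticipate is \emph{not} the descent lemma but securing geometric decay: without the PL condition, a plain Cauchy--Schwarz bound on $\sum_s\norm{g_s}$ carries an unavoidable $\sqrt t$ factor and diverges; it is precisely the PL-driven geometric summability — together with the cancellation of the step size $\eta$ against the $1/(c\eta\mu)$ coming from the geometric series — that produces a $t$-independent bound. A secondary technical point is ensuring the contraction factor satisfies $q \in [0,1)$ and that $c > 0$ over the stated parameter range, which is what fixes the exact constants hidden in the informal statement.
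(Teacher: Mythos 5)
Your proposal follows essentially the same route as the paper's proof: a USAM descent lemma (which the paper imports from Andriushchenko and Flammarion, Theorem 10, rather than rederiving), PL-driven geometric decay of the loss gap, a bound of the form $\norm{\nabla\loss(w_t)}\le\sqrt{C\,(\loss(w_t)-\loss^\star)}$, and a triangle-inequality-plus-geometric-series argument for the total displacement. The only substantive differences are minor: you bound $\norm{\nabla\loss(w_t)}^2\le 2\beta(\loss(w_t)-\loss^\star)$ directly from smoothness at the global minimum where the paper routes through the quadratic-growth condition to get the looser $\tfrac{2\beta^2}{\mu}(\loss(w_t)-\loss^\star)$, and your self-derived descent constant $c=(1-\beta\rho)-\tfrac{\eta\beta}{2}(1+\beta\rho)^2$ is lossier than the cited lemma's and can become nonpositive near the edge of the stated range $\eta,\rho<\nicefrac{1}{\beta}$ --- a caveat you correctly flag, and which would need the sharper treatment of $\norm{\nabla\loss(\tilde w_t)}^2$ (as in the paper's SAM descent lemma) to cover the full parameter range.
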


This theorem has the following consequence: Suppose that USAM encounters a point $w_0$ that is close to some minimum (i.e., $\loss(w_0)\approx \min_{w}\loss(w)$) during training. 
Then \autoref{thm:USAM general} implies that \emph{\textbf{the total distance traveled by USAM from $w_0$ is bounded}} -- in other words, the distance USAM moves along the manifold of minimizers can only be of order $\mathcal O(\sqrt{\loss(w_0)-\min_{w}\loss(w)})$.

As a remark, we compare \autoref{thm:USAM general} with the recent result by \citet{wen2022does}: their result   essentially implies that, for small enough $\eta$ and $\rho$, SAM iterates initialized close to a  manifold of the minimizers  approximately track some continuous dynamics (more precisely, a Riemannian gradient flow induced by a ``sharpness'' measure they find) and keep drifting along the manifold. 
This property is indeed in sharp contrast with USAM whose total travel distance is bounded.  

The formal statement and proof of \autoref{thm:USAM general} are contained in \autoref{sec:appendix_general}.

\subsection{Experiments for Practical Neural Networking Training}
\label{sec:resnet}

\begin{figure}[H]
\centering
\includegraphics[width=0.4\textwidth]{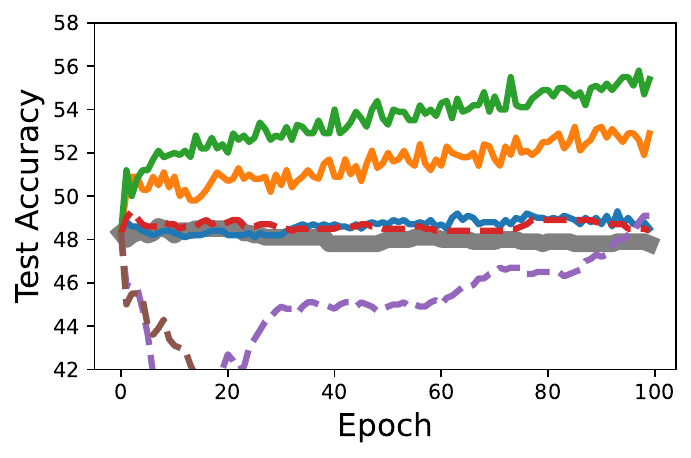}
\includegraphics[width=0.4\textwidth]{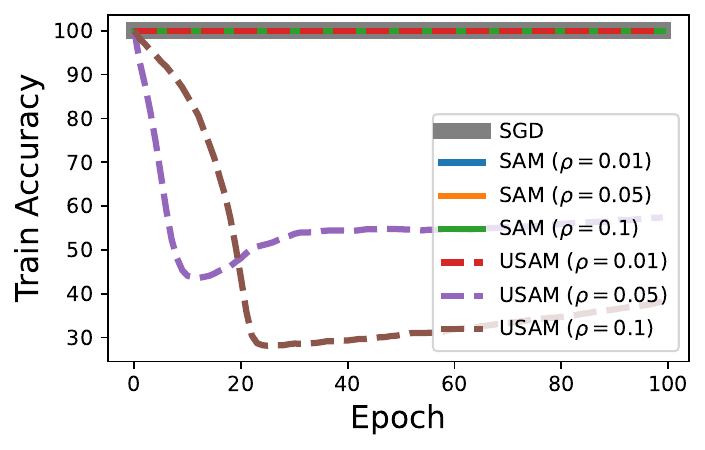}
\caption{{Training ResNet18 on CIFAR-10 from a bad global minimum}\footnotesize{ ($\eta = 0.001$, batch size $=128$).}}
\label{fig:solution resnet}
\end{figure} 

We close this section by verifying our claims in practical neural network training.
We train a ResNet18 on the CIFAR-10 dataset, initialized from a poor global minimum generated as per \citet{liu2020bad} (we used the ``adversarial checkpoint'' released by \citet{damian2021label}). 
This initialization has $100\%$ training accuracy but only $48\%$ test accuracy -- which lets us observe a more pronounced algorithmic behavior near the minima via tracking the test accuracy.
From \autoref{fig:solution resnet}, we observe:
\begin{enumerate}
\item GD gets stuck at this adversarial minimum, in the sense that the test accuracy stays ats $48\%$.
\item SAM keeps drifting while staying close to the manifold of minimizers (because the training accuracy remains $100\%$), which results in better solutions (i.e., the test accuracy keeps increasing).
\item USAM with small $\rho$ gets stuck like GD, while USAM with larger $\rho$'s deviate from this manifold.
\end{enumerate}

Hence, USAM faces the dilemma that we describe in \autoref{sec:ell_xy}:
a conservative hyper-parameter configuration does not lead to much drift along the minima, while a more aggressive choice easily leads to divergence.
However, the stability of SAM is quite robust to the choice of hyper-parameter and they all seem 
to lead to consistent drift along the minima. 

\begin{remark*}
Apart from the ``adversarial checkpoint'' which is unrealistic but can help highlight different algorithms' behavior when they are close to a bad minimum, we also conduct the same experiments but instead initialized from a ``full-batch checkpoint'' \citep{damian2021label}, which is the 100\% training accuracy point reached by running full-batch GD on the training loss function. The result is plotted as \autoref{fig:realistic ckpt} in \autoref{sec:realistic ckpt}.
One can observe that USAM still gets stuck at the ``full-batch checkpoint'', while SAM keeps increasing its test accuracy via drifting along the minima manifold.
\end{remark*}

\section{Case Study: Learning Threshold Neurons for Sparse Coding Problem}
\label{sec:sparse_coding}

To incorporate our two findings into a single example, we consider training one-hidden-layer ReLU networks for the {sparse coding problem}, a setup considered in \citep{ahn2022learning} to study the role of $\eta$ in GD.
Without going into details, the crux of their experiment is to understand how GD with large $\eta$ finds desired structures of the network -- in this specific case, the desired structure is the negative bias in ReLU unit (also widely known as ``thresholding unit/neuron'').
In this section, we evaluate SAM and USAM under the same setup, illustrating the importance of normalization.

\begin{wrapfigure}[12]{R}{0.5\textwidth}
\centering
\includegraphics[width=\linewidth]{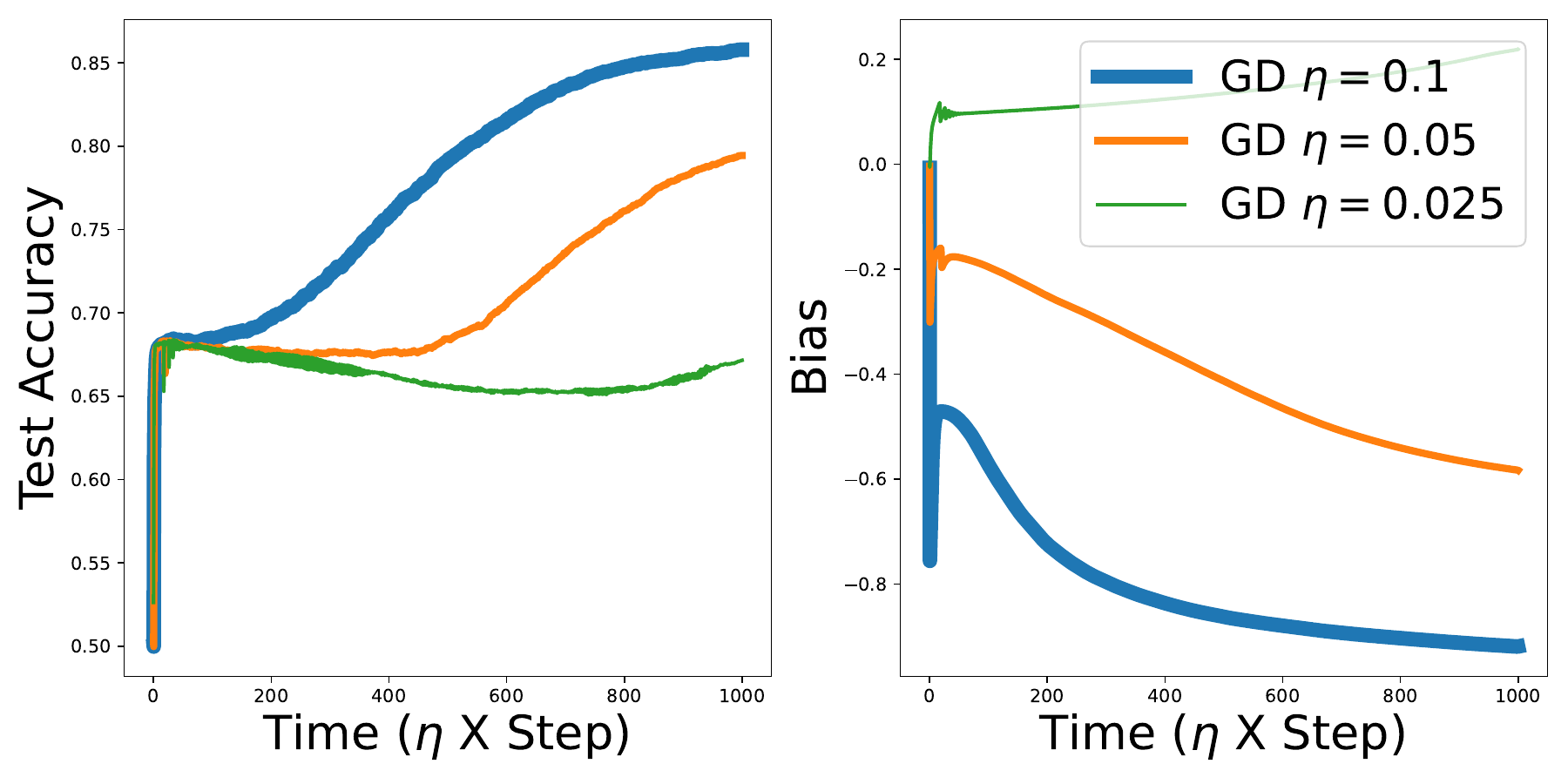}
\caption{{{\footnotesize Behavior of GD for sparse coding problem.}}}
\label{fig:sparse_coding_gd}
\end{wrapfigure}

\textbf{Main observation of \citet{ahn2022learning}.} Given this background, the main observation of \citet{ahn2022learning} is that i) when training the ReLU network with GD, different learning rates induce very different trajectories; and
ii) the desired structure, namely a \textbf{\emph{negative bias in ReLU, only arises with  large ``unstable'' learning rates}} for which GD exhibits unstable behaviors.
We reproduce their results in  \autoref{fig:sparse_coding_gd}, plotting the test accuracy on the left and the bias of ReLU unit on the right. As they claimed, GD with larger $\eta$ learns more negative bias, which leads to better test accuracy.

Their inspiring observation is however a bit discouraging for practitioners.
According to their theoretical results, such learning rates have to be quite large -- large to the point where GD shows very unstable behavior (\`a la Edge-of-Stability~\citep{Cohen2021}).
In practice, without knowledge of the problem, this requires a careful hyper-parameter search to figure out the correct learning rate.
More importantly, such large and unstable learning rates may cause GD to diverge or lead to worse performance. More discussions can be found in the recent paper by \citet{kaur2022maximum}.

In contrast, as we will justify shortly, \textbf{\emph{SAM does not suffer from such a ``dilemma of tuning''}} -- matching with our results in \autoref{thm:SAM ell(xy) informal}.
Moreover, \textbf{\emph{the removal of normalization no longer attains such a property}}, as we demonstrated in \autoref{thm:USAM ell(xy) informal}.
In particular, for USAM, one also needs to carefully tune $\eta$ and $\rho$ for better performance -- as we inspired in \autoref{thm:USAM ell(xy) informal} and \autoref{thm:USAM general}, small $(\eta,\rho)$ makes the iterates get stuck early; on the other hand, as we presented in \autoref{sec:stability}, an aggressive choice causes USAM to diverge.
The following experiments illustrate these claims in more detail.

\begin{figure}[htb]
\centering
\includegraphics[width=0.9
\textwidth]{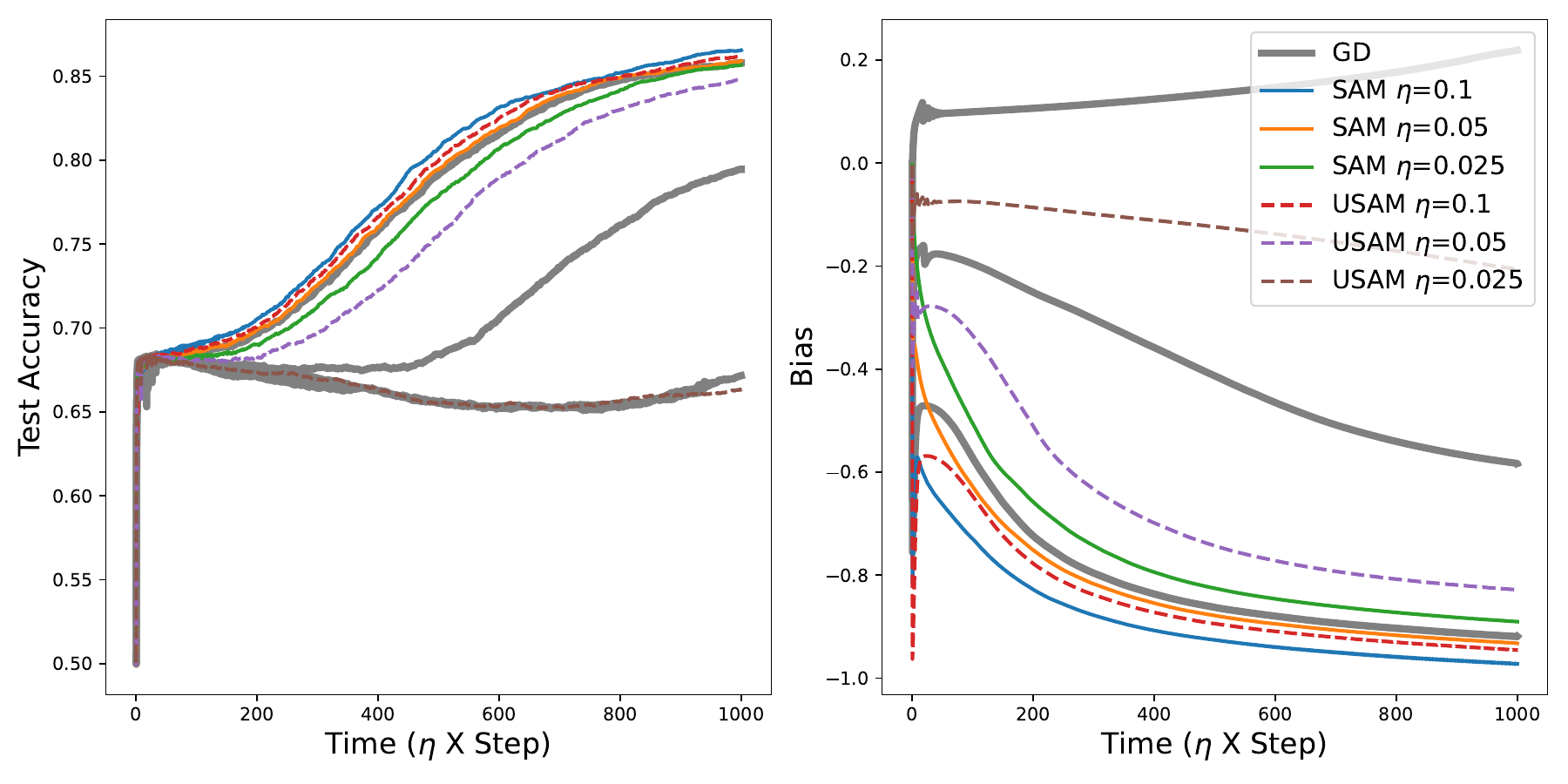}
\caption{{Behaviors of different algorithms for sparse coding problem ($\rho = 0.01$).} {The gray curves (corresponding to GD with different learning rates) are taken from \autoref{fig:sparse_coding_gd} with the same set of $\eta$'s.}}
\label{fig:sparse coding}
\end{figure}

In \autoref{fig:sparse coding}, we plot the performance of SAM, USAM, and GD with different $\eta$'s (while fixing $\rho$) -- gray lines for GD, solid lines for SAM, and dashed lines for USAM.
From the plot, USAM behaves more similarly to GD than SAM: the bias does not decrease sufficiently when the learning rate is not large enough, which consequently to leads poor test accuracy.
On the other hand, no matter what $\eta$ is chosen for SAM, bias is negative enough and ensures better generalization.
Hence, \autoref{fig:sparse coding} illustrates that compared to SAM, USAM is less robust to the tuning of $\eta$.

In \autoref{fig:sparse_coding_rho} (deferred to \autoref{sec:appendix_more_exp}), we also compare these three algorithms when varying $\rho$ and fixing $\eta$.
In addition to what we observe in \autoref{fig:sparse coding}, we show that normalization also helps stability -- USAM quickly diverges as we increase $\rho$, while SAM remains robust to the choice of $\rho$.
Thus, USAM is also less robust to the tuning of $\rho$. In other words, our observation in \autoref{fig:sparse coding} extends to $\rho$.

Hence, putting \autoref{fig:sparse coding} and \autoref{fig:sparse_coding_rho} together, we conclude that \emph{\textbf{SAM is robust to different configurations of $(\eta,\rho)$ while USAM is robust to neither of them}}.
Hence, the normalization of SAM eases hyper-parameter tuning, which is typically a tough problem for GD and many other algorithms -- normalization boosts the success of SAM in practice.

\section{Conclusion}
In this paper, we investigate the role played by normalization in SAM.
By theoretically characterizing the behavior of SAM and USAM on both convex and non-convex losses and empirically verifying our conclusions via real-world neural networks, we found that normalization i) helps stabilize the algorithm iterates, and ii) enables the algorithm to keep moving along the manifold of minimizers, leading to better performance in many cases. 
Moreover, as we demonstrate via various experiments, these two properties make SAM require less hyper-parameter tuning, supporting its practicality. 

In this work, we follow a recent research paradigm of ``physics-style'' approaches to understanding deep neural networks based on simplified models (c.f. \citep{zhang2022unveiling,garg2022can,von2022transformers,abernethy2023mechanism,allen2023physics, liu2022transformers,li2023transformers,ahn2022learning,ahn2023transformers,ahn2023linear}).
We found such physics-style approaches quite helpful, especially for complex modern neural networks.
We hope that our work builds stepping stones for future works on understanding working mechanisms of modern deep neural networks. 

\section*{Acknowledgments}
Kwangjun Ahn was supported by the ONR grant (N00014-20-1-2394) and MIT-IBM Watson as well
as a Vannevar Bush fellowship from Office of the Secretary of Defense. Suvrit Sra acknowledges
support from an NSF CAREER grant (1846088), and NSF CCF-2112665 (TILOS AI Research
Institute). Kwangjun Ahn also acknowledges support from the Kwanjeong Educational Foundation.
We thank Kaiyue Wen, Zhiyuan Li, and Hadi Daneshmand for their insightful discussions.

\bibliographystyle{plainnat}
\bibliography{ref}  

\newpage

\appendix
\renewcommand{\appendixpagename}{\centering \LARGE Appendix}
\appendixpage

\startcontents[section]
\printcontents[section]{l}{1}{\setcounter{tocdepth}{2}}

\newpage
\section{Setup of the Motivating Experiment}
\label{app:exp_detail}
In the motivating experiments (\autoref{fig:stability} and \autoref{fig:solution}), we follow the over-parameterized matrix sensing setup as \citet{li2018algorithmic} and \citet{blanc2020implicit}. Specifically, we do the following:
\begin{enumerate}
\item Generate the true matrix by sampling each entry of ${U^\star}\in \R^{d\times r}$ independently from a standard Gaussian distribution and let ${X^\star}= {U^\star}  ({U^\star})^\top$. 
\item Normalize each column of $ {U^\star}$ to unit norm so that the spectral norm of $ {U^\star}$ is close to one.
\item For every sensing matrix $A_i$ ($i = 1,2,\ldots,m$), sample the entries of $A_i$ independently from a standard Gaussian distribution. Then observe $b_i = \langle A_i,X^\star\rangle$.
\end{enumerate}
In particular, for the experiments, we chose $r = 5$, $d = 100$, and  $m = 5dr$.

\section{Additional Experimental Results}
\subsection{Running SAM and USAM from Other Initializations}
\label{sec:realistic ckpt}
\begin{figure}[H]
\centering
\includegraphics[width=0.8\textwidth]{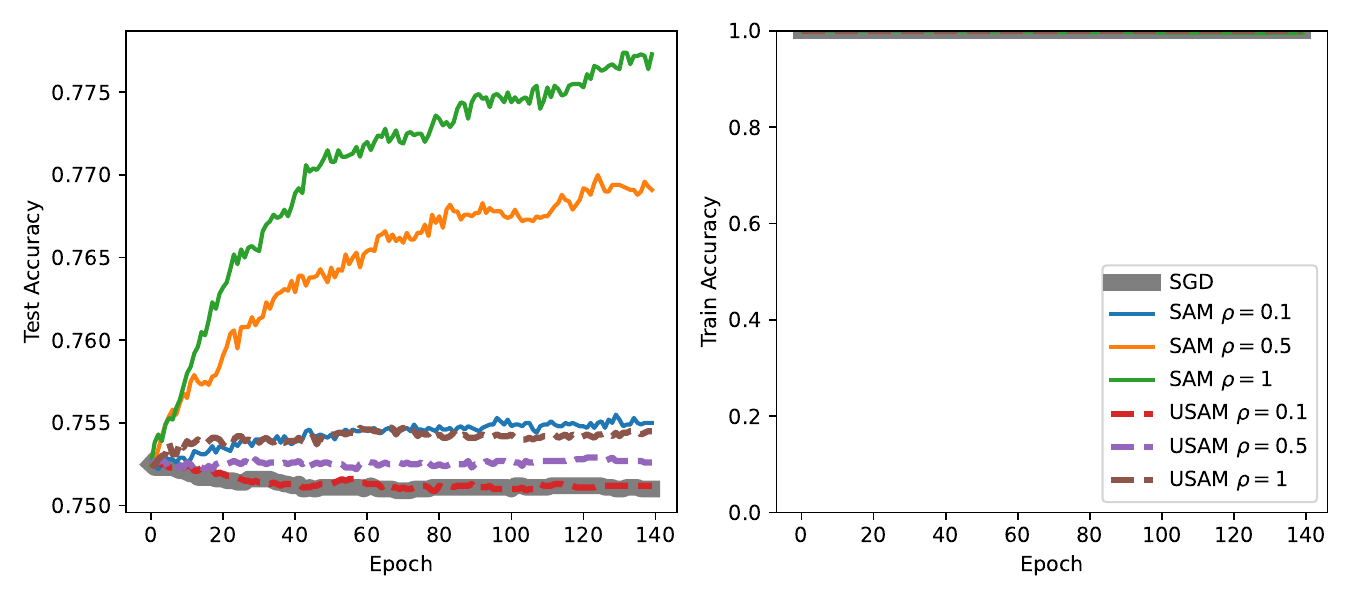}
\caption{{Training ResNet18 on CIFAR-10 from a ``full-batch checkpoint'' of \citet{damian2021label}}\footnotesize{ ($\eta = 0.001$, batch size $=128$).}}
\label{fig:realistic ckpt}
\end{figure} 

\subsection{Varying $\rho$ While Fixing $\eta$ in Sparse Coding Example}
\label{sec:appendix_more_exp}
\begin{figure}[H]
\centering
\includegraphics[width=0.8\textwidth]{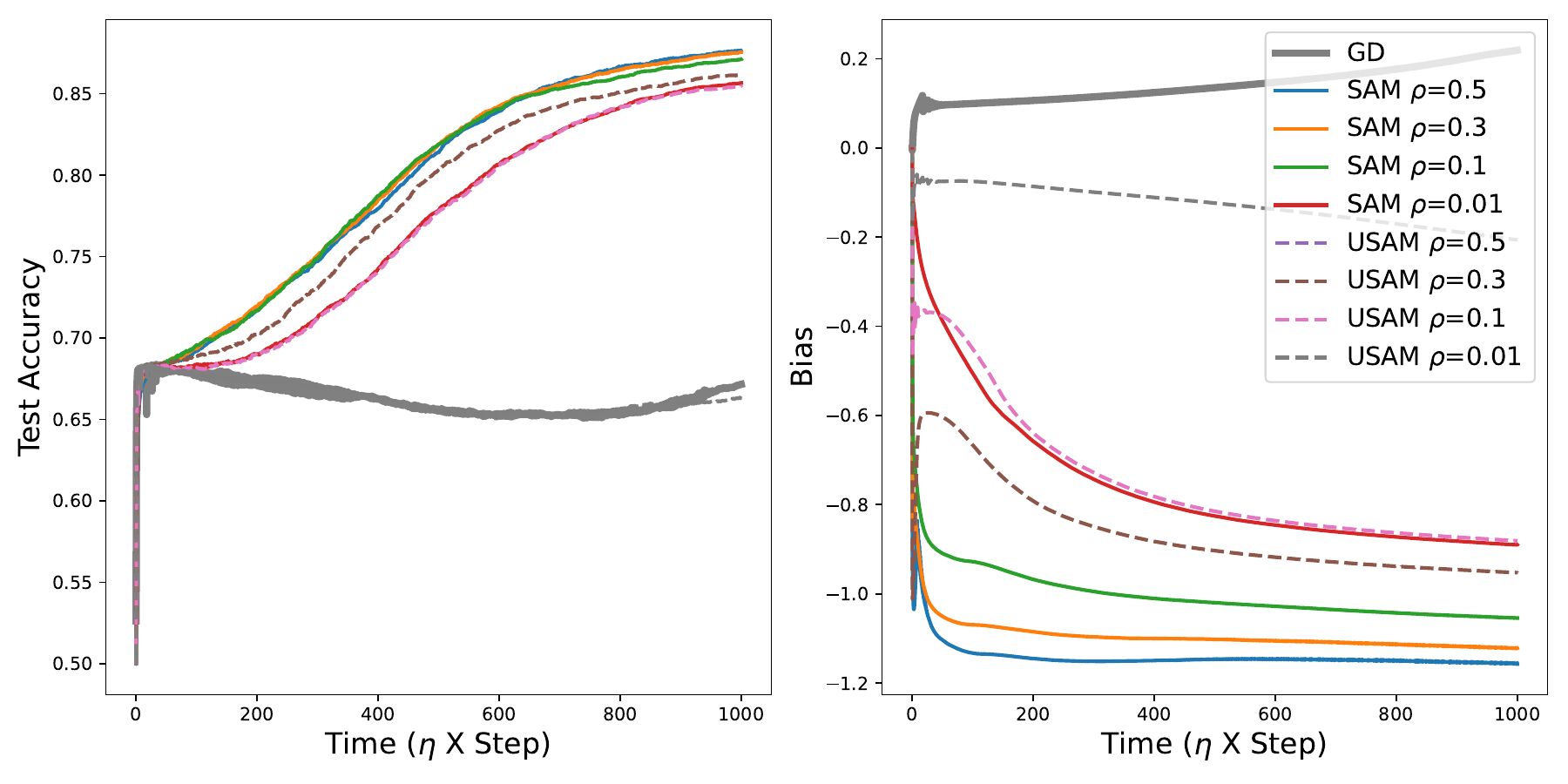}
\caption{{{Behaviors of different algorithms for sparse coding problem} 
($\eta = 0.025$). Note that USAM with $\rho=0.5$ (the dashed violet curve) diverges and becomes invisible except for the very beginning.}}
\label{fig:sparse_coding_rho}
\end{figure}

Aside from \autoref{fig:sparse coding} which varies $\eta$ while fixing $\rho$, we perform the same experiment when varying $\rho$ and fixing $\eta$.
The main observation for SAM is similar to that of \autoref{fig:sparse coding}: different hyper-parameters all keep decreasing the bias and give better test accuracy -- even with the tiniest choice $\rho=0.01$.

However, for USAM, there are three different types of $\rho$'s as shown in \autoref{fig:sparse_coding_rho}:
\begin{enumerate}
\item For tiny $\rho=0.01$, the bias doesn't decrease much. Consequently, the performance of USAM nearly degenerates to that of GD -- while SAM with $\rho=0.01$ still gives outstanding performance.
\item For moderate $\rho=0.1$ and $\rho=0.3$, USAM manages to decrease the bias and improve its accuracy, though with a slower speed than SAM with the same choices of $\rho$.
\item For large $\rho=0.5$ (where SAM still works well; see the solid curve in blue), USAM diverges.
\end{enumerate}

Thus, the dilemma described in \autoref{sec:sparse_coding} indeed applies to not only $\eta$ but also $\rho$ -- matching our main conclusion that normalization helps make hyper-parameter tuning much more manageable.

\section{Omitted Proof for Smooth and Strongly Convex Losses}
\label{sec:appendix_smooth_strongly_convex}

We shall first restate \autoref{thm:Strongly Convex and Smooth} here for the ease of presentation.
\begin{theorem}[Restatement of \autoref{thm:Strongly Convex and Smooth}]
For any $\alpha$-strongly-convex and $\beta$-smooth loss function $\loss$, for any learning rate $\eta \leq \nicefrac 2\beta$ and perturbation radius $\rho\geq 0$, the following holds:
\begin{enumerate}
\item \textbf{SAM.} The iterate $w_t$  converges to a local neighborhood around the minimizer $w^\star$. Formally,
\begin{align*}
\loss(w_{t}) - \loss(w^\star)\leq   \big (1-  \alpha \eta (2  - \eta  \beta)\big )^t 
(\loss(w_{0}) - \loss(w^\star)) +  \frac{\eta \beta^3\rho^2}{2\alpha(2-\eta\beta)},\quad \forall t.
\end{align*}
\item \textbf{USAM.} In contrast, there exists some $\alpha$-strongly-convex and $\beta$-smooth loss $\loss$ such that the USAM with $\eta \in (\nicefrac{2}{(\beta +\rho \beta^2)},\nicefrac 2\beta]$ diverges for all except measure zero initialization $w_0$.
\end{enumerate}
\end{theorem}

We will show these two conclusions separately. The proof directly follows from \autoref{lem:SAM Strongly Convex and Smooth} and \autoref{lem:USAM Strongly Convex and Smooth}.

\subsection{SAM Allows a Descent Lemma Like GD}
\begin{theorem}[SAM over Strongly Convex and Smooth Losses]\label{lem:SAM Strongly Convex and Smooth}
For any $\alpha$-strongly-convex and $\beta$-smooth loss function $\loss$, for any learning rate $\eta \leq \nicefrac 2\beta$ and perturbation radius $\rho\geq 0$, the following holds for SAM:
\begin{align*}
\loss(w_{t}) - \loss(w^\star)\leq   (1-  \alpha \eta (2  - \eta  \beta))^t 
(\loss(w_{0}) - \loss(w^\star)) +  \frac{\eta \beta^3\rho^2}{2\alpha(2-\eta\beta)}\,.
\end{align*}
\end{theorem}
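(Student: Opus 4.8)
The plan is to reduce the claim to a one-step \emph{descent lemma} of the same flavor as GD, namely \eqref{eq:descent lemma SAM}, and then unroll the resulting linear recursion. Throughout, write $g_t := \nabla\loss(w_t)$, let $\tilde w_t := w_t + \rho\, g_t / \norm{g_t}$ be the perturbed (ascent) point, and set $\tilde g_t := \nabla\loss(\tilde w_t)$, so that the SAM update is simply $w_{t+1} = w_t - \eta\,\tilde g_t$.

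First I would establish the descent lemma. Applying $\beta$-smoothness along the update direction gives $\loss(w_{t+1}) \le \loss(w_t) - \eta\langle g_t, \tilde g_t\rangle + \tfrac{\beta\eta^2}{2}\norm{\tilde g_t}^2$. The two structural facts that tame the perturbation are: (i) since $\norm{\tilde w_t - w_t} = \rho$ exactly, $\beta$-smoothness yields $\norm{\tilde g_t - g_t} \le \beta\rho$; and (ii) convexity makes the gradient monotone, so $\langle g_t, \tilde g_t - g_t\rangle \ge 0$. Writing $\delta_t := \tilde g_t - g_t$ and substituting $\langle g_t,\tilde g_t\rangle = \norm{g_t}^2 + \langle g_t,\delta_t\rangle$ together with $\norm{\tilde g_t}^2 = \norm{g_t}^2 + 2\langle g_t,\delta_t\rangle + \norm{\delta_t}^2$, the terms regroup into $\loss(w_{t+1}) \le \loss(w_t) - \tfrac12\eta(2-\eta\beta)\norm{g_t}^2 + \eta(\eta\beta-1)\langle g_t,\delta_t\rangle + \tfrac{\beta\eta^2}{2}\norm{\delta_t}^2$, where (ii) controls the cross term $\langle g_t,\delta_t\rangle \ge 0$ and (i) bounds $\norm{\delta_t}^2 \le \beta^2\rho^2$, collapsing the last two terms into the additive $\tfrac{\eta^2\beta^3\rho^2}{2}$ of \eqref{eq:descent lemma SAM}. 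I expect this step to be the main obstacle: the interaction term $\eta(\eta\beta-1)\langle g_t,\delta_t\rangle$ has a favorable (nonpositive) sign precisely when $\eta\beta \le 1$, so for $\eta$ approaching $\nicefrac2\beta$ one must argue that its contribution is still dominated by the $\rho^2$ term rather than scaling with the gradient norm.

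Next I would convert the descent lemma into a contraction using strong convexity. The $\alpha$-strong-convexity of $\loss$ supplies the standard bound $\loss(w_t) - \loss(w^\star) \le \tfrac{1}{2\alpha}\norm{g_t}^2$, equivalently $\norm{g_t}^2 \ge 2\alpha(\loss(w_t) - \loss(w^\star))$; substituting it into \eqref{eq:descent lemma SAM} turns the per-step inequality into $\loss(w_{t+1}) - \loss(w^\star) \le (1 - \alpha\eta(2-\eta\beta))(\loss(w_t) - \loss(w^\star)) + \tfrac{\eta^2\beta^3\rho^2}{2}$.

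Finally, unrolling is routine. Denoting $c := 1 - \alpha\eta(2-\eta\beta)$, one checks $c \in [0,1)$ whenever $\eta < \nicefrac2\beta$: indeed $\eta(2-\eta\beta)$ is maximized at $\eta=\nicefrac1\beta$ with value $\nicefrac1\beta$, so $\alpha\eta(2-\eta\beta) \le \nicefrac\alpha\beta \le 1$, giving $c \ge 0$, while $c<1$ is immediate. Iterating the recursion $t$ times and bounding the geometric sum $\sum_{i=0}^{t-1} c^i \le \tfrac{1}{1-c} = \tfrac{1}{\alpha\eta(2-\eta\beta)}$ yields the stated bound, since the additive constant becomes $\tfrac{\eta^2\beta^3\rho^2/2}{\alpha\eta(2-\eta\beta)} = \tfrac{\eta\beta^3\rho^2}{2\alpha(2-\eta\beta)}$.
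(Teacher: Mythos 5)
Your proposal reconstructs the paper's argument essentially step for step: the same smoothness expansion along the SAM step, the same two structural inputs ($\norm{\tilde g_t-g_t}\le\beta\rho$ because the perturbation has norm exactly $\rho$, and $\langle g_t,\tilde g_t-g_t\rangle\ge 0$ from gradient monotonicity), the same reduction to a linear recursion via $\norm{g_t}^2\ge 2\alpha(\loss(w_t)-\loss(w^\star))$, and the same geometric-sum unrolling. For $\eta\le\nicefrac1\beta$ the descent lemma follows exactly as you describe and the whole proof closes.

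The obstacle you flag is, however, genuine, and the paper does not resolve it either. In your decomposition the cross term $\eta(\eta\beta-1)\langle g_t,\delta_t\rangle$ has the favorable sign only when $\eta\beta\le 1$; for $\eta\in(\nicefrac1\beta,\nicefrac2\beta)$ its best generic bound is $\eta(\eta\beta-1)\beta\rho\norm{g_t}$, which scales with the gradient norm and can only be absorbed by splitting it with Young's inequality, at the cost of shrinking the $-\tfrac12\eta(2-\eta\beta)\norm{g_t}^2$ term and inflating the $\rho^2$ term. The paper's proof of \Cref{lem:SAM descent} commits precisely the step you were suspicious of: it substitutes the lower bound $\langle\tilde g_t,g_t\rangle\ge\norm{g_t}^2$ into the term $-(\eta-\eta^2\beta)\langle\tilde g_t,g_t\rangle$, whose coefficient is positive once $\eta>\nicefrac1\beta$, so the inequality direction flips there. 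The stated constants cannot survive in that regime: for $\loss(w)=\tfrac\beta2 w^2$ in one dimension the SAM update is $w_{t+1}=(1-\eta\beta)w_t-\eta\beta\rho\,\sgn(w_t)$, a direct computation gives $\loss(w_{t+1})=\loss(w_t)-\tfrac12\eta(2-\eta\beta)\norm{\nabla\loss(w_t)}^2+\tfrac{\eta^2\beta^3\rho^2}{2}+\eta\beta^2\rho(\eta\beta-1)\abs{w_t}$, with a strictly positive excess when $\eta\beta>1$, and the iterates settle into a two-cycle with $\abs{w_t}\to\eta\beta\rho/(2-\eta\beta)$, whose loss exceeds the theorem's additive constant by the factor $\eta\beta/(2-\eta\beta)>1$. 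So treat your argument (and the paper's) as complete only for $\eta\le\nicefrac1\beta$; for $\eta$ up to $\nicefrac2\beta$ the same scheme still yields an $\mathcal O(\rho^2)$ neighborhood, but with degraded constants obtained via Cauchy--Schwarz plus Young rather than by dropping the cross term.
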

\begin{proof}
We first claim the following analog of descent lemma, which we state as \autoref{lem:SAM descent}:
\begin{align*}
\loss(w_{t+1}) \leq      \loss(w_t)   - \frac{1}{2}\eta (2  - \eta  \beta)  \norm{\nabla \loss( w_t)}^2 + \frac{\eta^2 \beta^3\rho^2}{2}\,.
\end{align*}
By definition of strong convexity, we have
\begin{align*}
\loss(w_t) -\loss(w^\star) \leq \inp{\nabla \loss (w_t)}{w_t - w^\star} - \frac{\alpha}{2}\norm{w_t - w^\star}^2  \leq \frac{1}{2\alpha} \norm{\nabla f(w_t)}^2\,,
\end{align*}
where the last inequality uses the fact that $\inp{a}{b} -\frac{1}{2}\norm{b}^2 \leq \frac{1}{2}\norm{a}^2$.
Thus, combining the two inequalities above, we obtain  
\begin{align*}
\loss(w_{t+1}) \leq      \loss(w_t)   - \alpha \eta (2  - \eta  \beta)   ( \loss(w_t) -\loss(w^\star)) + \frac{\eta^2 \beta^3\rho^2}{2}\,,
\end{align*}
which after rearrangement becomes
\begin{align*}
\loss(w_{t+1}) - \loss(w^\star) \leq     (1-  \alpha \eta (2  - \eta  \beta)) (\loss(w_t)   -   \loss(w^\star)) + \frac{\eta^2 \beta^3\rho^2}{2}\,.
\end{align*}
Unrolling this recursion, we obtain
\begin{align*}
\loss(w_{t}) - \loss(w^\star) &\leq   (1-  \alpha \eta (2  - \eta  \beta))^t 
(\loss(w_{0}) - \loss(w^\star)) +  \frac{\eta^2 \beta^3\rho^2}{2} \sum_{k=0}^{t-1} (1-\alpha\eta(2-\eta\beta))^k\\
&\leq   (1-  \alpha \eta (2  - \eta  \beta))^t 
(\loss(w_{0}) - \loss(w^\star)) +  \frac{\eta^2 \beta^3\rho^2}{2} \sum_{k=0}^{\infty} (1-\alpha\eta(2-\eta\beta))^k\\
&= (1-  \alpha \eta (2  - \eta  \beta))^t 
(\loss(w_{0}) - \loss(w^\star)) +  \frac{\eta^2 \beta^3\rho^2}{2\alpha\eta(2-\eta\beta)} \,.
\end{align*}
This completes the proof.
\end{proof} 

\begin{lemma}[SAM Descent lemma] \label{lem:SAM descent}
For a convex loss $\loss$ that is $\beta$-smooth, SAM iterates $w_t$ satisfy the following when the learning rate $\eta<\frac 2\beta$ and $\rho\ge 0$:
\begin{align*}
\loss(w_{t+1}) \leq      \loss(w_t)   - \frac{1}{2}\eta (2  - \eta  \beta)  \norm{\nabla \loss( w_t)}^2 + \frac{\eta^2 \beta^3\rho^2}{2},\quad \forall t.
\end{align*}
\end{lemma}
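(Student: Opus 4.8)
The plan is to prove an analog of the gradient-descent descent lemma by substituting the SAM step into the smoothness inequality and then absorbing the error introduced by the ascent (perturbation) step. Write $g_t \defeq \nabla\loss(w_t)$ and let $w_t^+ \defeq w_t + \rho\, g_t/\norm{g_t}$ be the ascent iterate, so that one SAM step is $w_{t+1} = w_t - \eta\,\nabla\loss(w_t^+)$. Since $\loss$ is $\beta$-smooth, the standard quadratic upper bound applied to the displacement $w_{t+1}-w_t = -\eta\,\nabla\loss(w_t^+)$ gives
\[
\loss(w_{t+1}) \le \loss(w_t) - \eta\,\inp{g_t}{\nabla\loss(w_t^+)} + \frac{\beta\eta^2}{2}\,\norm{\nabla\loss(w_t^+)}^2 .
\]
Everything then reduces to estimating the inner product and the squared norm on the right-hand side in terms of $\norm{g_t}$.

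The two facts I would use are, first, that the perturbation has exactly norm $\rho$, so the $\beta$-Lipschitzness of the gradient yields $\norm{\delta_t}\le\beta\rho$ for $\delta_t \defeq \nabla\loss(w_t^+)-g_t$; and second, that convexity gives monotonicity of the gradient, $\inp{\nabla\loss(w_t^+)-g_t}{w_t^+-w_t}\ge 0$, which because $w_t^+-w_t$ is a positive multiple of $g_t$ is exactly $\inp{g_t}{\delta_t}\ge 0$. The key is to expand both right-hand terms through $\delta_t$ and combine them \emph{before} estimating: writing $\nabla\loss(w_t^+)=g_t+\delta_t$ and collecting terms shows the expression equals
\[
-\tfrac12\eta(2-\eta\beta)\norm{g_t}^2 + \eta(\eta\beta-1)\inp{g_t}{\delta_t} + \tfrac{\beta\eta^2}{2}\norm{\delta_t}^2 .
\]
The first term is already the advertised descent term, and $\tfrac{\beta\eta^2}{2}\norm{\delta_t}^2 \le \tfrac{\eta^2\beta^3\rho^2}{2}$ is exactly the advertised residual (using $\norm{\delta_t}\le\beta\rho$), so the whole argument hinges on the middle term $\eta(\eta\beta-1)\inp{g_t}{\delta_t}$.

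The step I expect to be the main obstacle is precisely controlling this middle term, whose sign is dictated by $\eta\beta-1$. When $\eta\beta\le 1$ it is nonpositive (using $\inp{g_t}{\delta_t}\ge 0$) and can simply be dropped, delivering the lemma immediately. For $\eta\beta\in(1,2)$, however, the term is positive and, after Cauchy--Schwarz $\inp{g_t}{\delta_t}\le\beta\rho\norm{g_t}$, scales linearly in $\norm{g_t}$, so it cannot be folded into a constant residual without further structure; here one must balance this cross term against the available $\norm{g_t}^2$ budget via a Young-type inequality, using the strict gap $2-\eta\beta>0$ so that neither the descent coefficient nor the residual constant is degraded. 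Tracking this balance carefully over the whole range $\eta<\nicefrac2\beta$ — rather than only the benign regime $\eta\le\nicefrac1\beta$ where the cross term vanishes outright — is the delicate part of the argument, and is where the precise interplay between $\eta$, $\beta$, and $\rho$ must be monitored.
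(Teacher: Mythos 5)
Your decomposition is algebraically the same as the paper's: both start from the smoothness quadratic bound with displacement $-\eta\nabla\loss(w_t^+)$, both use $\norm{\delta_t}\le\beta\rho$ (perturbation of exact norm $\rho$ plus Lipschitz gradients) for the residual, and both use gradient monotonicity of the convex $\loss$ along the ascent direction to get $\inp{g_t}{\delta_t}\ge 0$. Your expansion
$-\tfrac12\eta(2-\eta\beta)\norm{g_t}^2+\eta(\eta\beta-1)\inp{g_t}{\delta_t}+\tfrac{\beta\eta^2}{2}\norm{\delta_t}^2$
is exactly what the paper's chain of inequalities computes, and you are right that everything hinges on the sign of $\eta\beta-1$. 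For $\eta\le\nicefrac1\beta$ your argument is complete and correct.

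The gap is the case $\eta\in(\nicefrac1\beta,\nicefrac2\beta)$, which you defer to an unspecified ``Young-type inequality.'' No such balancing can work, because the lemma as stated is actually false in that regime: take $\loss(w)=\tfrac{\beta}{2}w^2$ in one dimension, for which $\delta_t=\beta\rho\,\sgn(w_t)$ saturates Cauchy--Schwarz, $\inp{g_t}{\delta_t}=\beta\rho\norm{g_t}$. A direct computation gives
$\loss(w_{t+1})-\bigl[\loss(w_t)-\tfrac12\eta(2-\eta\beta)\norm{\nabla\loss(w_t)}^2+\tfrac{\eta^2\beta^3\rho^2}{2}\bigr]=\eta\beta^2\rho(\eta\beta-1)\lvert w_t\rvert$,
which is positive and unbounded in $\lvert w_t\rvert$ whenever $\eta\beta>1$ and $\rho>0$ (e.g.\ $\beta=1$, $\eta=1.5$, $\rho=1$, $w_t=100$ gives $1326.125$ versus $1251.125$). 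So the cross term genuinely contributes $\Theta(\rho\norm{g_t})$ and cannot be absorbed into either the descent coefficient or a constant residual without changing the stated constants. You should also know that the paper's own proof commits precisely the slip you were worried about: it substitutes the lower bound $\inp{\nabla\loss(w_{t+\nicefrac12})}{\nabla\loss(w_t)}\ge\norm{\nabla\loss(w_t)}^2$ into a term carrying the coefficient $-(\eta-\eta^2\beta)$, which is positive when $\eta>\nicefrac1\beta$, so the inequality direction reverses there. In short, your honest flag identifies a real defect shared by the paper; both arguments prove the lemma only for $\eta\le\nicefrac1\beta$, and extending to $\eta<\nicefrac2\beta$ requires weakening the conclusion (e.g.\ an additional $\mathcal O(\eta^2\beta^2\rho\norm{\nabla\loss(w_t)})$ term, or a degraded descent coefficient).
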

\begin{proof}
Let $v_t \triangleq \nabla \loss(w_t)/\norm{\nabla \loss(w_t)}$ and  $w_{t+\nicefrac 12} \triangleq w_t + \rho v_t$ so we have $w_{t+1}=w_t-\eta \nabla \loss(w_{t+\nicefrac 12})$. Then the $\beta$-smoothness of $\loss$ yields
\begin{align*}
\loss(w_{t+1}) \leq \loss(w_t) - \eta \langle  \nabla \loss(w_t),  \nabla \loss( w_{t+\nicefrac 12}  )  \rangle + \frac{\eta^2 \beta}{2} \| \nabla \loss(w_{t+\nicefrac 12}) \|^2.
\end{align*}
We start with upper bounding the norm of $\nabla \loss(w_{t+\nicefrac 12})$:
\begin{align*}
\| \nabla \loss(w_{t+\nicefrac 12}) \|^2  &=    \| \nabla \loss(w_{t+\nicefrac 12}) - \nabla \loss(w_t) \|^2  - \| \nabla \loss( w_t) \|^2   +2 \langle \nabla \loss( w_{t+\nicefrac 12}  ) , \nabla \loss(w_t) \rangle\\
&\leq  \beta^2 \rho^2  - \| \nabla \loss( w_t) \|^2   +2 \langle \nabla \loss( w_{t+\nicefrac 12}  ) , \nabla \loss(w_t) \rangle  ,
\end{align*}  
Hence, as long as $\eta < \frac{2}{\beta}$, we have the following upper bound on $\loss(w_{t+1})$:
\begin{align*}
\loss(w_{t+1}) &\leq \loss(w_t) - \eta \langle \nabla \loss( w_{t+\nicefrac 12}  ) , \nabla \loss(w_t) \rangle + \frac{\eta^2 \beta}{2} \| \nabla \loss(w_{t+\nicefrac 12}) \|^2\\
&\leq \loss(w_t) - \eta \langle \nabla \loss( w_{t+\nicefrac 12}  ) , \nabla \loss(w_t) \rangle + \frac{\eta^2 \beta}{2}  \left(\beta \rho^2  - \| \nabla \loss( w_t) \|^2   +2 \langle \nabla \loss( w_{t+\nicefrac 12}  ) , \nabla \loss(w_t) \rangle  \right)\\
&= \loss(w_t) -  \frac{\eta^2 \beta}{2} \norm{\nabla \loss( w_t)}^2 - (\eta  - \eta^2 \beta) \langle \nabla \loss( w_{t+\nicefrac 12}  ) , \nabla \loss(w_t) \rangle + \frac{\eta^2 \beta^3\rho^2}{2}.
\end{align*}

Now we lower bound $\inp{\nabla \loss( w_{t+\nicefrac{1}{2}})}{ \nabla \loss(w_t) }$. 
Note that 
\begin{align*}
\langle  \nabla \loss( w_t + \rho v_t), \nabla \loss(w_t) \rangle & =   \langle  \nabla \loss( w_t + \rho v_t)-  \nabla \loss(w_t) , \nabla \loss(w_t) \rangle  +   \|  \nabla \loss(w_t) \|^2\\
&= \frac{\norm{\nabla \loss(w_t) }}{\rho }  \langle  \nabla \loss( w_t + \rho v_t )-  \nabla \loss(w_t) , \rho v_t \rangle +  \|  \nabla \loss(w_t) \|^2  \geq    \|  \nabla \loss(w_t) \|^2  \, ,
\end{align*}
where the last inequality uses the following standard fact about convex functions: for any $w_1,w_2$, $\inp{\nabla \loss(w_1) -\nabla \loss(w_2) }{ w_1-w_2 }  \geq 0$. Hence, we arrive at
\begin{align*}
\loss(w_{t+1}) &\leq \loss(w_t) -  \frac{\eta^2 \beta}{2} \norm{\nabla \loss( w_t)}^2 - (\eta  - \eta^2 \beta) \langle \nabla \loss( w_{t+\nicefrac 12}  ) , \nabla \loss(w_t) \rangle + \frac{\eta^2 \beta^3\rho^2}{2}\\
&\leq    \loss(w_t) -  \frac{\eta^2 \beta}{2} \norm{\nabla \loss( w_t)}^2 - (\eta  - \eta^2 \beta)  \norm{\nabla \loss( w_t)}^2 + \frac{\eta^2 \beta^3\rho^2}{2}  \\
&\leq      \loss(w_t)   - \frac{1}{2}\eta (2  - \eta  \beta)  \norm{\nabla \loss( w_t)}^2 + \frac{\eta^2 \beta^3\rho^2}{2} \,, 
\end{align*}
and thus finishing the proof.
\end{proof}

\subsection{USAM Diverges on Quadratic Losses}
\begin{theorem}[USAM over Quadratic Losses]\label{lem:USAM Strongly Convex and Smooth}
Following \citep{bartlett2022dynamics}, consider the quadratic loss $\loss$ induced by a PSD matrix $\Lambda$. Without loss of generality, let $\loss$ minimize at the origin. Formally,
\begin{align} \label{def:quad}
\loss(w) = \frac{1}{2}w^\top\Lambda w\,,\quad \text{where}~\Lambda=\diag(\lambda_1,\ldots,\lambda_d)\,, 
\end{align} 
where $\lambda_{\max}=\lambda_1>\cdots\ge\lambda_d > 0$ and $v_{\max}=e_1,e_2,\dots,e_d$ are the eigenvectors corresponding to $\lambda_1,\lambda_2,\ldots,\lambda_d$, respectively.
Then the iterates of USAM \autoref{eq:definition of USAM} applied to \autoref{def:quad} satisfy:
\begin{enumerate}
\item If $\eta (\lambda_{\max} + \rho \lambda_{\max}^2) < 2$, then the iterates converges to the global minima exponentially fast.
\item If $\eta (\lambda_{\max} + \rho \lambda_{\max}^2) > 2$ and if $\inp{w_0}{v_{\max}}\neq 0$, then the iterates diverge.
\end{enumerate}

Moreover, such a loss function $\loss$ is $\lambda_1$-smooth and $\lambda_d$-strongly-convex.
\end{theorem}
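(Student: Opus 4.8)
The plan is to exploit the fact that on a quadratic loss the USAM map is \emph{linear}, so the whole analysis reduces to tracking a decoupled geometric recursion in each eigendirection. First I would compute the update map explicitly. Since $\nabla \loss(w) = \Lambda w$, the ascent point is $w_t + \rho \nabla \loss(w_t) = (I + \rho \Lambda) w_t$, and hence
\begin{align*}
w_{t+1} = w_t - \eta \Lambda (I + \rho \Lambda) w_t = \left(I - \eta(\Lambda + \rho \Lambda^2)\right) w_t.
\end{align*}
Because $\Lambda$ is diagonal, so is $I - \eta(\Lambda + \rho\Lambda^2)$; writing $w_t^{(i)} \triangleq \langle w_t, e_i \rangle$ for the component along the $i$-th eigenvector, the recursion decouples into
\begin{align*}
w_{t+1}^{(i)} = \left(1 - \eta(\lambda_i + \rho \lambda_i^2)\right) w_t^{(i)}, \qquad \text{so} \qquad w_t^{(i)} = \left(1 - \eta(\lambda_i + \rho \lambda_i^2)\right)^t w_0^{(i)}.
\end{align*}

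The two claims then follow from the standard dichotomy for geometric sequences, i.e.\ whether the multiplier $r_i \triangleq 1 - \eta(\lambda_i + \rho\lambda_i^2)$ has absolute value below or above $1$. For the convergent case, I would use that the map $\lambda \mapsto \lambda + \rho \lambda^2$ is strictly increasing on $(0, \infty)$ whenever $\rho \geq 0$; therefore $\eta(\lambda_{\max} + \rho \lambda_{\max}^2) < 2$ forces $0 < \eta(\lambda_i + \rho\lambda_i^2) < 2$ for \emph{every} $i$, so each $|r_i| < 1$ and every coordinate decays geometrically, yielding exponential convergence of $w_t$ to the origin. This is the one place where a little care is needed: the hypothesis is stated only for $\lambda_{\max}$, so I must invoke monotonicity to control all the remaining eigendirections simultaneously.

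For the divergent case, $\eta(\lambda_{\max} + \rho\lambda_{\max}^2) > 2$ gives $r_1 < -1$, hence $|r_1| > 1$. Provided $w_0^{(1)} = \langle w_0, v_{\max} \rangle \neq 0$, the first coordinate satisfies $|w_t^{(1)}| = |r_1|^t |w_0^{(1)}| \to \infty$, and therefore $\|w_t\| \to \infty$. The exceptional set $\{w_0 : \langle w_0, e_1 \rangle = 0\}$ is a hyperplane and thus has Lebesgue measure zero, matching the statement. Finally, the smoothness and strong-convexity claim is immediate: the Hessian of $\loss$ is $\Lambda$, whose eigenvalues lie in $[\lambda_d, \lambda_1]$, so $\lambda_d I \preceq \nabla^2 \loss \preceq \lambda_1 I$, giving $\lambda_d$-strong-convexity and $\lambda_1$-smoothness.

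Overall there is no serious obstacle here: the entire difficulty evaporates once one observes the linearity of the USAM step on a quadratic. The only item requiring a sentence of justification is the reduction from the per-coordinate conditions to the single hypothesis on $\lambda_{\max}$, which is handled by the monotonicity of $\lambda \mapsto \lambda + \rho\lambda^2$.
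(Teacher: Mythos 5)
Your proposal is correct and follows essentially the same route as the paper: both compute the linear USAM map $w_{t+1} = \left(I - \eta\Lambda - \eta\rho\Lambda^2\right)w_t$ and then read off convergence or divergence from the spectral radius, with the divergence case isolated to the $e_1$ coordinate. Your explicit appeal to the monotonicity of $\lambda \mapsto \lambda + \rho\lambda^2$ to cover all eigendirections is a slightly more careful rendering of the step the paper states as $\norm{I - \eta\Lambda - \eta\rho\Lambda^2} < 1$, but the argument is the same.
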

\begin{proof}
As $\nabla\ell(w) = \Lambda w$, the  USAM update \autoref{eq:definition of USAM} reads
\begin{align*}
w_{t+1}
& = w_t - \eta\nabla\loss\left(w_t+\rho \nabla\loss(w_t)\right)  = \left(I - \eta\Lambda - \eta \rho  \Lambda^2\right)w_t. 
\end{align*}
Hence, if $\eta (\lambda_{\max} + \rho \lambda_{\max}^2) < 2$, then since $\norm{I - \eta\Lambda - \eta \rho  \Lambda^2}<1$, we have that $\norm{w_t}\to 0$ exponentially fast.
On the other hand, if $\eta (\lambda_{\max} + \rho \lambda_{\max}^2)>2$,  then  $|1-\eta\lambda_1 - \eta \lambda_1^2|>1$. Since $\norm{w_{t}} \geq |1-\eta\lambda_1 - \eta \lambda_1^2|^t |\inp{w_0}{e_1}|$, it follows that the iterates diverge as long as $\inp{w_0}{v_{\max}}\neq 0$.
Finally, as $\nabla^2 \loss(w)=\Lambda$, we directly know that $\loss$ is $\lambda_1$-smooth and $\lambda_d$-strongly-convex.
\end{proof} 

\section{Omitted Proof for Scalar Factorization Problems}
\label{sec:appendix_square_loss}

\begin{theorem}[Formal Version of \autoref{thm:square loss}]\label{thm:square loss formal}
Consider the scalar factorization loss $\loss(x,y)=\frac 12(xy)^2$. If $(x_0,y_0)$ satisfies $2x_0>y_0>x_0\gg 0$ and $\eta=(x_0^2+y_0^2)^{-1}$, then i) SAM finds a neighborhood of the origin with radius $\mathcal O(\rho)$ for all $\rho$, and ii) USAM diverges as long as $\rho\ge 15\eta$.
\end{theorem}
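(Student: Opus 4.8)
The plan is to handle the two halves by completely different mechanisms: the USAM claim reduces to exhibiting a self-reinforcing geometric blow-up of the loss, whereas the SAM claim is a stability-plus-drift statement that I would prove by a phase analysis of the explicit two-dimensional dynamics. For both I first record the exact coordinate updates. Since $\nabla\loss(x,y)=xy\,(y,x)^\top$ with $\|\nabla\loss(x,y)\|=|xy|\sqrt{x^2+y^2}$, writing $r_t=\sqrt{x_t^2+y_t^2}$ and $s_t=\sgn(x_ty_t)$, the SAM ascent point is $w_{t+\nicefrac12}=(x_t+\rho s_t y_t/r_t,\;y_t+\rho s_t x_t/r_t)$, and with $\tilde x_t,\tilde y_t$ its coordinates the descent step is $x_{t+1}=x_t-\eta\tilde x_t\tilde y_t^2$, $y_{t+1}=y_t-\eta\tilde x_t^2\tilde y_t$. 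For USAM the update is purely multiplicative: setting $c\triangleq\rho/\eta\ge15$ and $u_t\triangleq\eta x_t^2$, $v_t\triangleq\eta y_t^2$, one gets $x_{t+1}=x_tA_t$ and $y_{t+1}=y_tB_t$ with $A_t=1-v_t(1+cu_t)^2(1+cv_t)$ and $B_t=1-u_t(1+cu_t)(1+cv_t)^2$, so that $\loss(w_{t+1})=\loss(w_t)\,(A_tB_t)^2$.

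I would dispatch USAM (part ii) first, as it is the cleaner half. The conditions $2x_0>y_0>x_0$ and $\eta=(x_0^2+y_0^2)^{-1}$ pin down $u_0\in(\nicefrac{1}{5},\nicefrac{1}{2})$ and $v_0\in(\nicefrac{1}{2},\nicefrac{4}{5})$, while $\rho\ge15\eta$ gives $cu_0\ge3$ and $cv_0\ge\nicefrac{15}{2}$. Substituting these into $A_0,B_0$ shows both factors are huge in magnitude (e.g.\ $|A_0|>67$ and $|B_0|>56$), so after one step $u_1=u_0A_0^2\ge1$ and $v_1=v_0B_0^2\ge1$. I then prove by induction the invariant $u_t,v_t\ge1$: whenever it holds, $1+cu_t,1+cv_t\ge16$, which forces $v_t(1+cu_t)^2(1+cv_t)\ge4096$ and symmetrically for the $B_t$-factor, hence $|A_t|,|B_t|>1$; since $u_{t+1}=u_tA_t^2$ and $v_{t+1}=v_tB_t^2$, the invariant persists. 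Consequently $|A_tB_t|>1$ (indeed $\gg1$) for every $t\ge1$, so $\loss(w_{t+1})=\loss(w_t)(A_tB_t)^2\to\infty$, i.e.\ USAM diverges from this fixed nonzero initialization. The only subtlety is that the ratio $y_t/x_t$ need not stay in $[1,2]$, so the argument must be phrased through the scale-free invariant $u_t,v_t\ge1$ rather than through the original ordering of $x_0,y_0$.

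For SAM (part i) I would run a phase analysis in the spirit of \Cref{thm:SAM ell(xy) informal}. Crucially, $\loss=\tfrac12(xy)^2$ has indefinite Hessian away from the axes, so the convex SAM descent lemma \Cref{lem:SAM descent} is unavailable and the stability must be argued directly, using that the normalized ascent has norm exactly $\rho$ and hence keeps $w_{t+\nicefrac12}$ within $\rho$ of $w_t$. Phase (a), \emph{approach to an axis}: from $y_0>x_0>0$ the factor multiplying $x_t$ contracts (with possible sign oscillation) while $y_t$ stays order $1$, driving $|x_t|$ down to $\mathcal{O}(\rho)$ in $\mathcal{O}(\nicefrac{1}{\eta})$ steps. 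Phase (b), \emph{drift to the origin}: once $|x_t|$ is small the normalized gradient is essentially the unit $x$-vector, so $\tilde x_t\approx x_t+\rho s_t$ and the $y$-update becomes $y_{t+1}\approx(1-\eta\rho^2)y_t$, an exponential decay that does not vanish as $x_t\to0$; meanwhile the $x$-update keeps $|x_t|$ oscillating at scale $\mathcal{O}(\rho)$. Phase (c), \emph{stabilization}: when $\|w_t\|=\mathcal{O}(\rho)$ the gradient at $w_{t+\nicefrac12}$ is only $\mathcal{O}(\rho^3)$, so each step moves the iterate by $\mathcal{O}(\eta\rho^3)$ and the trajectory is trapped in an $\mathcal{O}(\rho)$-ball about the origin, uniformly in $\rho$.

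The main obstacle is Phase (b)–(c) of the SAM argument: one must control the sign and magnitude of $\tilde x_t\tilde y_t^2$ and $\tilde x_t^2\tilde y_t$ across the phase boundaries, show that the non-diminishing $y$-decrement indeed persists until $y_t=\mathcal{O}(\rho)$, and rule out overshoot precisely in the regime where $|x_t|$ and $\rho$ are comparable. These require the detailed case splits carried out in \Cref{sec:appendix_square_loss}, in contrast to the USAM half, which is essentially mechanical once the invariant $u_t,v_t\ge1$ is in place.
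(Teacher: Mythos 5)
Your overall route matches the paper's: the USAM half is a one-step blow-up followed by an inductive multiplicative divergence, and the SAM half is a three-stage phase analysis (approach the $y$-axis, exponential decay of $y_t$ at rate $1-\Theta(\eta\rho^2)$, trapping in an $\mathcal O(\rho)$-ball). Your USAM argument is complete and correct, and in fact cleaner than the paper's: the paper's Theorem~\ref{lem:square loss USAM} inducts on $\lvert x_{t+1}\rvert\ge 2\lvert x_t\rvert$, $\lvert y_{t+1}\rvert\ge 2\lvert y_t\rvert$ using $x_t^2\ge x_0^2$, $y_t^4\ge y_0^4$, whereas your scale-free invariant $u_t,v_t\ge 1$ (with $u_t=\eta x_t^2$, $v_t=\eta y_t^2$, $c=\rho/\eta$) sidesteps any dependence on the ratio $y_t/x_t$ and the numerics ($u_0\in(\nicefrac15,\nicefrac12)$, $cv_0\ge\nicefrac{15}{2}$, hence $\lvert A_0\rvert>67$, $u_1\ge 1$, and then $\lvert A_t\rvert,\lvert B_t\rvert\ge 16^3-1$ forever) all check out.

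Two remarks on the SAM half. First, your Phase (a) mechanism differs from the paper's: you propose tracking the contraction of $x_t$ directly, while the paper instead proves a \emph{local} descent lemma ($\loss(w_{t+1})<\loss(w_t)$ whenever $x_t,y_t\ge 5\rho$) using $\beta$-smoothness restricted to the ball $\{x^2+y^2\le x_0^2+y_0^2\}$ — so your remark that "the descent lemma is unavailable" is only half true; convexity is unavailable, but local smoothness suffices and is what the paper uses. Your direct-contraction route also works here (since $\eta y_t^2$ stays bounded below by a constant in this phase) but needs a coupled lower bound on $y_t$. Second, and more substantively, your stated justification for Phase (c) — that each step has size $\mathcal O(\eta\rho^3)$ once $\|w_t\|=\mathcal O(\rho)$ — does not by itself give trapping, since arbitrarily small steps can still accumulate and escape any ball. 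The paper's trapping argument instead rests on the sign structure ($\sgn(x_t+\rho y_tz_t^{-1})=\sgn(x_t)$, so each coordinate always moves \emph{toward} its axis) combined with a bound on the single-step overshoot (\Cref{eq:square loss x_t lower bound}), which yields the invariant $\lvert x_{t+1}\rvert,\lvert y_{t+1}\rvert\le 5\rho$. You gesture at both ingredients ("oscillating at scale $\mathcal O(\rho)$", "rule out overshoot") but defer them, so as written the SAM half is a correct plan rather than a proof; the missing lemmas are exactly the monotonicity-plus-bounded-overshoot estimates that the paper supplies.
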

This theorem is a combination of \autoref{lem:square loss SAM} and \autoref{lem:square loss USAM} that we will show shortly.

\subsection{SAM Always Converges on Scalar Factorization Problems}

Let $w_t=[x_t\quad y_t]^\top$ be the $t$-th iterate. Denote $\nabla_t=\nabla \loss(w_t)=[x_ty_t^2 \quad x_t^2 y_t]^\top$. For each step $t$, the actual update of SAM is therefore the gradient taken at
\begin{equation*}
\tilde w_t=w_t+\rho \frac{\nabla_t}{\lVert \nabla_t\rVert}=\begin{bmatrix}x_t+\rho y_t z_t^{-1}\\y_t+\rho x_t z_t^{-1}\end{bmatrix},\quad \text{where } z_t^{-1}=\frac{\sgn(x_ty_t)}{\sqrt{x_t^2+y_t^2}}.
\end{equation*}

By denoting $\tilde \nabla_t=\nabla \loss(\tilde w_t)$, the update rule of SAM is
\begin{equation*}
\begin{bmatrix}
x_{t+1} \\ y_{t+1}
\end{bmatrix}=w_{t+1}=w_t-\eta \tilde \nabla_t=
\begin{bmatrix}
x_t-\eta \left (x_t+\rho y_tz_t^{-1}\right ) \left (y_t+\rho x_tz_t^{-1}\right )^2 \\ y_t-\eta \left (x_t+\rho y_tz_t^{-1}\right )^2 \left (y_t+\rho x_tz_t^{-1}\right )
\end{bmatrix}.
\end{equation*}

We make the following claim:
\begin{theorem}[SAM over Scalar Factorization Problems]\label{lem:square loss SAM}
Under the setting of \autoref{thm:square loss formal}, there exists some threshold $T$ for SAM such that $\lvert x_t\rvert,\lvert y_t\rvert\le 5\rho$ for all $t\ge T$.
\end{theorem}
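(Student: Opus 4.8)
The plan is to carry out a three-phase analysis of the coupled iteration, analogous to the single-neuron result of \Cref{thm:SAM ell(xy) informal}. Since $\ell$ is the square here, the explicit SAM update written out above shows that $y_t$ remains positive throughout: near the manifold of minimizers $\{x=0\}$ one has $y_{t+1}\approx(1-\eta(|x_t|+\rho)^2)y_t$ with the factor in $(0,1)$ because $\eta\approx y_0^{-2}$ keeps $\eta(|x_t|+\rho)^2$ well below one; hence it suffices to track the pair $(|x_t|,y_t)$ with $y_t>0$. The mechanism I want to isolate is that, near $\{x=0\}$, the normalized ascent step moves the $x$-coordinate of $\tilde w_t$ to roughly $\sgn(x_t)(|x_t|+\rho)$ while leaving $y_t$ almost unchanged, so the descent gradient $\tilde\nabla_t$ shrinks $y_t$ by the factor $1-\eta(|x_t|+\rho)^2\le 1-\eta\rho^2<1$ at every step. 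This non-vanishing contraction — present even at $x_t=0$ — is precisely the drift that distinguishes SAM from GD and USAM and that drives $y_t$ down to $\mathcal O(\rho)$.

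First I would handle the \pone. Starting from $2x_0>y_0>x_0\gg0$ and $\eta=(x_0^2+y_0^2)^{-1}$, the perturbation terms are lower order, so $x_{t+1}\approx(1-\eta y_t^2)x_t$ with $\eta y_0^2\in(\tfrac12,\tfrac45)$, giving geometric contraction of $|x_t|$, while $y_{t+1}\approx(1-\eta x_t^2)y_t$ contracts $y_t$ only until $|x_t|$ is small, after which $y_t$ stabilizes near $\sqrt{y_0^2-x_0^2}$ via the approximately conserved balancedness $y_t^2-x_t^2$. I would make this rigorous by showing that $|x_t|$ equilibrates: the contraction $(1-\eta y_t^2)|x_t|$ is offset by the perturbation-induced increment $\eta\rho y_t^2$, so $|x_t|$ is driven into, and then trapped within, an $\mathcal O(\rho)$-interval around the fixed point $|x|\approx\rho$. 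The output of this phase is a round $\tenter$ with $|x_{\tenter}|=\mathcal O(\rho)$ and $y_{\tenter}>0$ still much larger than $\rho$.

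The heart of the argument is the \ptwo, where I would fix a region $\{|x|\le C\rho,\ y>0\}$ and prove by induction that (a) $|x_t|$ never leaves this box and (b) $y_t$ strictly decreases, in fact by either an exponential factor $y_{t+1}\le(1-\eta\rho^2)y_t$ or a constant amount $y_{t+1}\le y_t-\eta\rho$, until $y_t\le|x_t|$. \textbf{The main obstacle lies exactly here}, because claims (a) and (b) are mutually dependent: bounding $|x_t|$ needs $\eta y_t^2$ to stay controlled (an edge-of-stability condition), since the $x$-update is amplified by $y_t^2$; yet proving that $y_t$ decreases — and hence that $\eta y_t^2$ shrinks — relies on $|x_t|$ already being $\mathcal O(\rho)$ so that the perturbation dominates the $y$-dynamics. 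Breaking this circularity requires a carefully chosen joint invariant that tracks $|x_t|$ and $\eta y_t^2$ simultaneously and that accommodates the sign flips of $x_t$ entering through the $\sgn(x_ty_t)$ factor in $z_t^{-1}$, so that the induction closes at each step.

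Finally, for the \pthree, once $y_t=\mathcal O(\rho)$ all coordinates of $\tilde w_t$ are $\mathcal O(\rho)$, and a direct estimate of the update shows the box $\{|x|\le5\rho,\ |y|\le5\rho\}$ is forward-invariant: each SAM step then displaces $w_t$ by only $\mathcal O(\eta\rho^3)$, which is negligible against the radius $5\rho$ because $\eta\rho^2\ll1$. Taking $T$ to be the first round at which $y_t\le5\rho$ — finite by Phases~1 and~2 — yields $|x_t|,|y_t|\le5\rho$ for all $t\ge T$, completing the proof.
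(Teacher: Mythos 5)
Your route differs from the paper's: you transplant the three-phase analysis of the single-neuron $\ell(xy)$ model, whereas the paper's proof of \Cref{lem:square loss SAM} is a shorter two-step argument. It first establishes by a pure sign computation that each coordinate always moves toward its axis (if $x_t\ge 0$ then $x_{t+1}\le x_t$, and likewise for $y_t$), then proves a descent lemma for the loss itself --- $\loss(w_{t+1})<\loss(w_t)$ whenever $x_t,y_t\ge 5\rho$, using $\beta$-smoothness with $\eta=\nicefrac 1\beta$ and the elementary bound $(x_t+\rho)(y_t+\rho)^2\le 1.2^3\,x_ty_t^2<2x_ty_t^2$ --- and finally shows that once one coordinate drops below $5\rho$, the other contracts as $y_{t+1}\le(1-\frac 12\eta\rho^2)y_t$ while the small coordinate stays trapped in $[-5\rho,5\rho]$. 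There is no separate ``balancedness'' phase and no loss-landscape bookkeeping beyond the single descent inequality.

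The genuine gap is your unresolved Phase 2. You explicitly leave the induction open, asserting that bounding $|x_t|$ and controlling $\eta y_t^2$ are circularly dependent and that closing the loop ``requires a carefully chosen joint invariant'' which you do not supply. That circularity is illusory, and the paper's monotonicity observation dissolves it in one line: since $y_{t+1}\le y_t$ whenever $y_t>0$ (sign analysis only, no quantitative control required), one has $y_t\le y_0$ throughout the relevant phase, hence $\eta y_t^2\le \eta y_0^2=\frac{y_0^2}{x_0^2+y_0^2}\le\frac 45$ automatically from the hypothesis $y_0\le 2x_0$ and the choice $\eta=(x_0^2+y_0^2)^{-1}$. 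This single a priori bound is all that is needed to keep $|x_{t+1}|\le 5\rho$ (the paper's \Cref{eq:square loss x_t lower bound}), and the contraction of $y_t$ then needs only $y_t\ge|x_t|$, which holds while $y_t\ge 5\rho\ge|x_t|$. In other words, the ``edge-of-stability condition'' you worry about never has to be re-derived along the trajectory; it is inherited from the initialization. As written, your proposal stops exactly where the argument has to be made, so it does not yet constitute a proof; with the monotonicity lemma inserted up front, your Phases 2 and 3 would go through essentially as the paper's do.
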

\begin{proof}
We first observe that SAM over $\loss(x,y)$ always pushes the iterate towards the minima (which are all the points on the $x$- and $y$- axes). Formally:
\begin{itemize}
\item If $x_t\ge 0$, then $x_{t+1}\le x_t$. If $x_t\le 0$, then $x_{t+1}\ge x_t$.
\item If $y_t\ge 0$, then $y_{t+1}\le y_t$. If $y_t\le 0$, then $y_{t+1}\ge y_t$.
\end{itemize}
This observation can be verified by noticing $\sgn(\rho y_tz_t^{-1})=\sgn(y_t)\sgn(x_ty_t)=\sgn(x_t)$ and similarly $\sgn(\rho x_tz_t^{-1})=\sgn(y_t)$. In other words, $\sgn(x_t+\rho y_tz_t^{-1})=\sgn(x_t)$ and thus always pushes $x_t$ towards the $y$-axis. The same also holds for $y_t$ by symmetry.

In analog to the descent lemma for GD, we can show the following lemma:
\begin{lemma}
When picking $\eta=(x_0^2+y_0^2)^{-1}$, we have $\loss(w_{t+1})-\loss(w_t)<0$ as long as $x_t,y_t\ge 5\rho$.
\end{lemma}
\begin{proof}
As $\nabla^2 \loss(x,y)=\begin{bmatrix}y^2 & 2xy \\ 2xy & x^2\end{bmatrix}$, we know $\loss$ is $\beta\triangleq (x_0^2+y_0^2)$-smooth inside the region $\{(x,y):x^2+y^2\le \beta\}$.
Then we have (recall that $\eta=(x_0^2+y_0^2)^{-1}=\nicefrac 1\beta$)
\begin{align*}
\loss(w_{t+1})-\loss(w_t)&\le \langle \nabla \loss(w_t),w_{t+1}-w_t\rangle+\frac \beta2 \lVert w_{t+1}-w_t\rVert^2\\
&=-\eta \langle \nabla_t,\tilde \nabla_t\rangle+\frac \beta2\eta^2\lVert \tilde \nabla_t\rVert^2=-\eta \left (\langle \nabla_t-\tfrac 12 \tilde \nabla_t,\tilde \nabla_t\rangle\right ).
\end{align*}

To make sure that it's negative, we simply want
\begin{align*}
0\le \langle \nabla_t-\tfrac 12 \tilde \nabla_t,\tilde \nabla_t\rangle&=\left (x_ty_t^2-\frac 12\left (x_t+\rho y_tz_t^{-1}\right ) \left (y_t+\rho x_tz_t^{-1}\right )^2\right )\left (x_t+\rho y_tz_t^{-1}\right ) \left (y_t+\rho x_tz_t^{-1}\right )^2+\\&\quad \left (x_t^2y_t-\frac 12\left (x_t+\rho y_tz_t^{-1}\right )^2 \left (y_t+\rho x_tz_t^{-1}\right )\right )\left (x_t+\rho y_tz_t^{-1}\right )^2 \left (y_t+\rho x_tz_t^{-1}\right ),
\end{align*}
which can be ensured once
\begin{align*}
\left (x_t+\rho y_tz_t^{-1}\right ) \left (y_t+\rho x_tz_t^{-1}\right )^2\le 2x_ty_t^2,\quad
\left (x_t+\rho y_tz_t^{-1}\right )^2 \left (y_t+\rho x_tz_t^{-1}\right )\le 2x_t^2y_t.
\end{align*}

If we have $x_t,y_t\ge 5\rho$, then as $z_t^{-1}\le \min\{x_t^{-1},y_t^{-1}\}$, we have
\begin{equation*}
\left (x_t+\rho y_tz_t^{-1}\right ) \left (y_t+\rho x_tz_t^{-1}\right )^2\le (x_t+\rho)(y_t+\rho)^2\le 1.2^3 x_ty_t^2<2x_ty_t^2,
\end{equation*}
which shows the first inequality. The second one follows from symmetry.
\end{proof}

Therefore, SAM will progress until $x_t\le 5\rho$ or $y_t\le 5\rho$. Without loss of generality, assume that $x_t\le 5\rho$; we then claim that $y_t$ will soon decrease to $\mathcal O(\rho)$.
\begin{lemma}
Suppose that $\lvert x_t\rvert\le 5\rho$ but $y_t\ge 5\rho$. Then $\lvert x_{t+1}\rvert\le 5\rho$ but $y_{t+1}\le (1-\frac 12 \eta \rho^2)y_t$.
\end{lemma}
\begin{proof}
First, show that $\lvert x_t\rvert$ remains bounded by $5\rho$. Assume $x_t\ge 0$ without loss of generality:
\begin{align}
x_{t+1}&=x_t-\eta \left (x_t+\rho y_t z_t^{-1}\right )\left (y_t+\rho x_t z_t^{-1}\right )^2 \nonumber\\
&\ge x_t-\eta \left (5\rho+\rho\right )\left (y_t+\rho\right )^2 \nonumber\\
&\ge -\eta 6\rho \left (\frac{4}{5\eta}+\rho^2+2\rho \sqrt{\frac{4}{5\eta}}\right )\ge -5\rho,\label{eq:square loss x_t lower bound}
\end{align}
where the second last line uses $y_t^2\le y_0^2=\frac{y_0^2}{x_0^2+y_0^2}\eta^{-1}\le \frac 45\eta^{-1}$ and the last one uses $\eta=(x_0^2+y_0^2)^{-1}\ll 1$.
Meanwhile, we see that $y_t$ decreases exponentially fast by observing the following:
\begin{align*}
y_{t+1}&=y_t-\eta (y_t+\rho x_tz_t^{-1})(x_t+\rho y_t z_t^{-1})^2\\
&\le y_t-\eta y_t (\rho y_tz_t^{-1})^2\\
&\le \left (1-\frac 12 \eta \rho^2\right )y_t,
\end{align*}
where the last line uses $z_t^{-1}=(x_t^2+y_t^2)^{-1/2}\ge (2y_t^2)^{-1/2}=2^{-1/2} y_t^{-1}$ as $y_t\ge 5\rho\ge \lvert x_t\rvert$.
\end{proof}

So eventually we have $\lvert x_t\rvert,\lvert y_t\rvert\le 5\rho$. Recall that \autoref{eq:square loss x_t lower bound} infers $\lvert x_{t+1}\rvert\le 5\rho$ from $\lvert x_t\rvert\le 5\rho$.
Hence, by symmetry, we conclude that $\lvert x_{t+1}\rvert,\lvert y_{t+1}\rvert\le 5\rho$ hold as well.
Therefore, SAM always finds an $\mathcal O(\rho)$-neighborhood of the origin, i.e., it is guaranteed to converge regardless of $\rho$.
\end{proof}

\subsection{USAM Diverges with Small $\rho$}
For USAM, the dynamics can be written as
\begin{equation}\label{eq:square loss USAM}
\begin{bmatrix}
x_{t+1} \\ y_{t+1}
\end{bmatrix}=\begin{bmatrix}
x_t \\ y_t
\end{bmatrix}-\eta \nabla \loss \begin{pmatrix}
x_t+\rho x_t y_t^2 \\
y_t+\rho x_t^2 y_t
\end{pmatrix}=\begin{bmatrix}
x_t - \eta (x_t+\rho x_t y_t^2) (y_t+\rho x_t^2 y_t)^2 \\
y_t - \eta (x_t+\rho x_t y_t^2)^2 (y_t+\rho x_t^2 y_t)
\end{bmatrix}.
\end{equation}

We make the following claim which is similar to \autoref{lem:USAM Strongly Convex and Smooth}:
\begin{theorem}[USAM over Scalar Factorization Problems]\label{lem:square loss USAM}
Under the setup of \autoref{thm:square loss formal}, for any $\rho \ge 15\eta$, $\lvert x_{t+1}\rvert\ge 2\lvert x_t\rvert$ and $\lvert y_{t+1}\rvert\ge 2\lvert y_t\rvert$ for all $t\ge 1$; in other words, USAM diverges exponentially fast.
\end{theorem}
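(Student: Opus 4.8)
The plan is to exploit the product structure of $\loss(x,y)=\tfrac12(xy)^2$, which turns the USAM update \eqref{eq:square loss USAM} into a purely \emph{multiplicative} recursion. Factoring $x_t+\rho x_t y_t^2 = x_t(1+\rho y_t^2)$ and $y_t+\rho x_t^2 y_t = y_t(1+\rho x_t^2)$ gives
\begin{align*}
x_{t+1} &= x_t\bigl[1-\eta\, y_t^2(1+\rho y_t^2)(1+\rho x_t^2)^2\bigr],\\
y_{t+1} &= y_t\bigl[1-\eta\, x_t^2(1+\rho y_t^2)^2(1+\rho x_t^2)\bigr].
\end{align*}
Hence $|x_{t+1}|\ge 2|x_t|$ is equivalent to the $x$-bracket having magnitude at least $2$, for which it suffices that $\eta\, y_t^2(1+\rho y_t^2)(1+\rho x_t^2)^2\ge 3$, and symmetrically for $y$. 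The entire proof reduces to showing these two ``factor $\ge 3$'' inequalities hold at every step.

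First I would record the numerical consequences of the hypotheses $2x_0>y_0>x_0$ and $\eta=(x_0^2+y_0^2)^{-1}$. From $y_0>x_0$ one gets $\eta y_0^2=y_0^2/(x_0^2+y_0^2)>\tfrac12$; from $y_0<2x_0$ one gets $\eta x_0^2=x_0^2/(x_0^2+y_0^2)>\tfrac15$. The assumption $\rho\ge 15\eta$ then forces $\rho x_0^2\ge 15\,\eta x_0^2>15\cdot\tfrac15=3$ (so $(1+\rho x_0^2)^2\ge 16$) and $\rho y_0^2\ge 15\,\eta y_0^2>\tfrac{15}{2}$ (so $1+\rho y_0^2>8.5$). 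The appearance of exactly $\tfrac15$ here — which is where the upper bound $y_0<2x_0$ is used — is precisely what pins the threshold constant $15$.

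The core is an induction on $t$ with the invariant $|x_t|\ge x_0$ and $|y_t|\ge y_0$. Under this invariant the products $\eta y_t^2(1+\rho y_t^2)(1+\rho x_t^2)^2$ and $\eta x_t^2(1+\rho y_t^2)^2(1+\rho x_t^2)$ are increasing in $|x_t|,|y_t|$, hence minimized at the lower bounds, so
\begin{align*}
\eta\, y_t^2(1+\rho y_t^2)(1+\rho x_t^2)^2 &\ge \tfrac12\cdot 1\cdot(1+\rho x_0^2)^2 \ge \tfrac12\cdot16 = 8\ge 3,\\
\eta\, x_t^2(1+\rho y_t^2)^2(1+\rho x_t^2) &\ge \tfrac15\,(1+\rho y_0^2)^2 \ge \tfrac15\,(8.5)^2 > 3.
\end{align*}
This yields $|x_{t+1}|\ge 2|x_t|$ and $|y_{t+1}|\ge 2|y_t|$; since doubling only increases magnitude, $|x_{t+1}|\ge 2x_0\ge x_0$ and $|y_{t+1}|\ge y_0$, so the invariant is preserved and the induction closes. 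The base case $t=0$ holds trivially with equality, so the doubling holds for every $t\ge 0$ (in particular for $t\ge 1$ as stated), and exponential divergence follows from $|x_t|\ge 2^t x_0\to\infty$.

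The only point requiring care — and the main (if modest) obstacle — is ensuring that the one-time numerical estimate $\rho x_0^2>3$ survives to all later steps even though the relation $y_t<2x_t$ need \emph{not} persist after the first iteration (signs flip and the ratio $y_t/x_t$ may change). The resolution is to separate the bookkeeping: the bounds $\rho x_0^2>3$ and $1+\rho y_0^2>8.5$ are derived once at $t=0$ from the initial geometry, and are then transported forward \emph{solely} through the self-propagating lower bounds $|x_t|\ge x_0$, $|y_t|\ge y_0$, since these give $\rho x_t^2\ge\rho x_0^2>3$ and $\rho y_t^2\ge\rho y_0^2$ directly. Everything else is the routine substitution displayed above.
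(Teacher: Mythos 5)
Your proof is correct and takes essentially the same approach as the paper's: an induction maintaining $\lvert x_t\rvert\ge x_0$ and $\lvert y_t\rvert\ge y_0$, reduction of the doubling claim to showing the relevant multiplicative factor is at least $3$, and a numerical check using $2x_0>y_0>x_0$, $\eta=(x_0^2+y_0^2)^{-1}$, and $\rho\ge 15\eta$. Your explicit factorization $x_t+\rho x_t y_t^2=x_t(1+\rho y_t^2)$ merely organizes the same crude lower bound slightly more transparently than the paper, which instead discards the additive $1$ and keeps only the pure $\rho$-power terms to arrive at the condition $\rho^3 x_t^2 y_t^4\ge 15$.
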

\begin{proof}
Prove by induction. From \autoref{eq:square loss USAM}, our conclusion follows once
\begin{equation*}
\eta \big \lvert (x_t+\rho x_t y_t^2) (y_t+\rho x_t^2 y_t)^2\big \rvert \ge 3 \lvert x_t\rvert,\quad \eta \big \lvert (x_t+\rho x_t y_t^2)^2 (y_t+\rho x_t^2 y_t)\big \rvert \ge 3\lvert y_t\rvert.
\end{equation*}
According to our setup that $\eta=(x_0^2+y_0^2)^{-1}$, $y_0\le 2x_0$, and the induction statement that, we have $\eta\ge (5x_t^2)^{-1}$. The second inequality then holds as long as
\begin{equation*}
\big \lvert (\rho x_t y_t^2)^2 (\rho x_t^2 y_t)\big \rvert \ge 15 x_t^2 \lvert y_t\rvert,\quad \text{i.e., } \rho^3 x_t^2 y_t^4\ge 15,
\end{equation*}
which is true as $x_t^2\ge x_0^2$, $y_t^4\ge y_0^4$, $y_0\ge x_0$, and $\rho\ge 15 \eta \ge 3 x_0^{-2}$. Note that the bounds on $\rho$ are very loose, and we made no effort to optimize it; instead, we only aimed to show that USAM starts to diverge from a $\rho=\Theta(\eta)\ll 1$. 
\end{proof}

\section{Assumptions in the Single-Neuron Linear Network Model}
\label{sec:appendix_single_neuron}

\textbf{Assumptions on $\ell$.}
Following \citet{ahn2022learning}, we make the following  assumptions about $\ell$:
\begin{enumerate}[label=\textcolor{blue}{(A\arabic*)}, ref=(A\arabic*)]
\item \label{A:cvx_lips} $\ell$ is a convex, even, $1$-Lipschitz function that is minimized at $0$.
\item \label{A:origin_rate}
$\ell$ is twice continuously differentiable near the origin with $\ell''(0) = 1$, which infers the existence of a constant $c>0$ such that $\lvert \ell'(s)\rvert\le \lvert s\rvert$ for all $\lvert s\rvert\le c$.
\item \label{A:bdd_der} We further assume a ``linear tail'' away from the minima, i.e.,  $\lvert \ell'(s)\rvert\ge c/2$ for all $\lvert s\rvert\ge c$ and $|\ell'(s)|\geq \lvert s\rvert/2$ for $|s|\leq c$.
\end{enumerate}

Some concrete example of loss functions satisfying the above assumption include a  symmetrized logistic loss $\frac{1}{2}\log(1+\exp(-2s)) + \frac{1}{2}\log(1+\exp(2s))$  and a square root loss $ \sqrt{1+s^2}$  One may refer to their paper for more details.

\section{Omitted Proof of SAM Over Single-Neuron Linear Networks}
\label{sec:appendix_sam_ell(xy)}

\begin{theorem}[Formal Version of \autoref{thm:SAM ell(xy) informal}]
\label{thm:SAM ell(xy) formal}
For a loss $\loss$ over $(x,y)\in \mathbb R^2$ defined as $\loss(x,y)=\ell(xy)$ where $\ell$ satisfies Assumptions \ref{A:cvx_lips}, \ref{A:origin_rate}, and \ref{A:bdd_der}, if the initialization $(x_0,y_0)$ satisfies:
\begin{equation*}
y_0\ge x_0\gg 0,\quad y_0^2-x_0^2=\frac \gamma \eta,\quad y_0^2=\cy \frac \gamma \eta,
\end{equation*}
where $\gamma\in [\frac 12,2]$ and $\cy\ge 1$ is constant, then for all hyper-parameter configurations $(\eta,\rho)$ such that\footnote{As $y_0\gg 0$, we must have $\eta\ll 1$; thus, these conditions automatically hold when $\eta=o(1)$ and $\rho=\mathcal O(1)$.}
\begin{equation*}
\eta \rho + \sqrt{\cy \gamma \eta}\le \min\left \{\frac 12,C\right \}\sqrt{\frac \gamma \eta},\quad \frac{4\sqrt \cy}{c}\eta^{-1}=\mathcal O(\min\{\eta^{-1.5}\rho^{-1}\gamma^{\nicefrac 12},\eta^{-2}\}),
\end{equation*}
we can decompose the trajectory of SAM (defined in \autoref{eq:definition of SAM}) into three phases, whose main conclusions are stated separately in three theorems and are informally summarized here:
\begin{enumerate}
\item \textbf{(\autoref{lem:phase1 SAM})} Until $x_t=\mathcal O(\sqrt{\gamma \eta})$, we must have $y_t=\Omega(\sqrt{\gamma/\eta})$, and $x_{t+1}\le x_t-\Omega(\sqrt{\gamma\eta})$.
\item 
\textbf{(\autoref{lem:phase2 SAM})} After \pone and until $y_t\le \lvert x_t\rvert$, $\lvert x_t\rvert=\mathcal O(\eta \rho+\sqrt{\eta})$ still holds. Meanwhile, $y_{t+1}\le y_t-\min\{\Omega(\eta \rho^2) y_t,\Omega(\eta \rho)\}$ (i.e., $y_t$ either drops by $\Omega(\eta \rho)$ or decays by $\Omega(\eta \rho^2)$).
\item \textbf{(\autoref{lem:phase3 SAM})} After \ptwo, we always have $\lvert x_t\rvert,\lvert y_t\rvert=\mathcal O(\eta \rho+\sqrt{\eta})$.
\end{enumerate}
\end{theorem}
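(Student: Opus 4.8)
The plan is to prove the three phase-lemmas (\Cref{lem:phase1 SAM,lem:phase2 SAM,lem:phase3 SAM}) in sequence and chain their conclusions, after first recording the exact one-step SAM map. Writing $z_t=\|w_t\|=\sqrt{x_t^2+y_t^2}$ and using $\sgn(\ell'(x_ty_t))=\sgn(x_ty_t)$ (since $\ell$ is convex, even, and minimized at $0$), the ascent point is $\tilde w_t=(\tilde x_t,\tilde y_t)$ with $\tilde x_t=x_t+\rho\,\sgn(x_ty_t)y_t/z_t$ and $\tilde y_t=y_t+\rho\,\sgn(x_ty_t)x_t/z_t$, so the update reads $x_{t+1}=x_t-\eta g_t\tilde y_t$ and $y_{t+1}=y_t-\eta g_t\tilde x_t$ where $g_t=\ell'(\tilde x_t\tilde y_t)$. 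By the symmetry of $\ell$ I take $x_0,y_0>0$ without loss of generality, and I note $\sgn(\tilde x_t)=\sgn(x_t)$, $\sgn(\tilde y_t)=\sgn(y_t)$.

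The single most useful device is an approximate invariant for $y_t^2-x_t^2$. In the regime $x_t,y_t>0$ governing \pone, a direct expansion using $\tilde x_t y_t-\tilde y_t x_t=\rho(y_t^2-x_t^2)/z_t$ and $\tilde x_t^2-\tilde y_t^2=-(y_t^2-x_t^2)(1-\rho^2/z_t^2)$ yields the exact identity
\[
y_{t+1}^2-x_{t+1}^2=(y_t^2-x_t^2)\Bigl[1-\tfrac{2\eta g_t\rho}{z_t}-\eta^2g_t^2\bigl(1-\tfrac{\rho^2}{z_t^2}\bigr)\Bigr].
\]
Since $|g_t|\le1$ by $1$-Lipschitzness and (inductively) $z_t=\Omega(\sqrt{\gamma/\eta})$, each bracket equals $1-\mathcal O(\eta^{3/2}\rho+\eta^2)$, so over the $\mathcal O(1/\eta)$ steps of the first two phases $y_t^2-x_t^2$ stays within a constant factor of its initial value $\gamma/\eta$; as $y_t^2\ge y_t^2-x_t^2$ this immediately gives the lower bound $y_t=\Omega(\sqrt{\gamma/\eta})$ claimed in \Cref{lem:phase1 SAM} (and closes the inductive use of $z_t=\Omega(\sqrt{\gamma/\eta})$, since $z_t^2\ge y_t^2-x_t^2$).

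For \pone I combine this with a product bound: while $x_t$ is still large, $\tilde x_t\tilde y_t\gtrsim c$, so \ref{A:bdd_der} forces $g_t\ge c/2$, and the $x$-update gives $x_{t+1}\le x_t-\eta\tfrac{c}{2}\tilde y_t\le x_t-\Omega(\sqrt{\gamma\eta})$; hence $x_t$ reaches $\mathcal O(\sqrt{\gamma\eta})$ within $\mathcal O(1/\eta)$ steps. For \ptwo (\Cref{lem:phase2 SAM}), the crucial point is that normalization pins the ascent coordinate at $|\tilde x_t|\approx\rho$ even as $x_t\to0$: since $|\tilde x_t|\le|x_t|+\rho$ and $g_t\le1$, the $x$-jump obeys $\eta g_t\tilde y_t\le\eta y_t+\eta\rho=\mathcal O(\sqrt\eta+\eta\rho)$, so an inductive argument keeps $|x_t|=\mathcal O(\sqrt\eta+\eta\rho)$ through the sign flips of $x_t$. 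Meanwhile $g_t\tilde x_t>0$ (both carry sign $\sgn(x_t)$), so $y_t$ strictly decreases, and splitting on whether $\rho y_t\gtrless c$ produces exactly the two advertised regimes — a constant drop $y_{t+1}\le y_t-\Omega(\eta\rho)$ via $g_t\ge c/2$, or an exponential decay $y_{t+1}\le(1-\Omega(\eta\rho^2))y_t$ via $|g_t|\ge|\tilde x_t\tilde y_t|/2$ from \ref{A:bdd_der}. For \pthree (\Cref{lem:phase3 SAM}) I would show by a sign-and-magnitude case analysis that once $y_t\le|x_t|=\mathcal O(\sqrt\eta+\eta\rho)$ the ascent-gradient step is a net inward map with additive perturbation of order $\eta\rho$, making the box $\{|x|,|y|=\mathcal O(\sqrt\eta+\eta\rho)\}$ forward-invariant.

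I expect \ptwo to be the main obstacle. The difficulty is precisely the control of $x_t$: unlike GD, the SAM $x$-jump does \emph{not} shrink as $x_t\to0$ (that is exactly the drift mechanism), so $x_t$ repeatedly overshoots zero, and one must show the overshoot magnitude never escapes the $\mathcal O(\sqrt\eta+\eta\rho)$ band while simultaneously tracking $y_t$ across both the $\rho y_t\ge c$ and $\rho y_t<c$ regimes — all using only the loose pointwise inequalities \ref{A:origin_rate}–\ref{A:bdd_der} on $\ell'$ rather than a closed form. Keeping the accumulated constants consistent so that the hypotheses $\eta\rho+\sqrt{\cy\gamma\eta}\le\min\{\tfrac12,C\}\sqrt{\gamma/\eta}$ close each induction is the delicate bookkeeping.
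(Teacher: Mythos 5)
Your proposal is correct and follows essentially the same route as the paper's proof: the same three-phase decomposition, the same exact identity for $y_{t+1}^2-x_{t+1}^2$ with bracket $1-2\eta\ell_t'\rho z_t^{-1}-\eta^2(\ell_t')^2(1-\rho^2 z_t^{-2})$ driving the \pone lower bound on $y_t$, the same $\ell_t'\ge c/2$ argument for the $\Omega(\sqrt{\gamma\eta})$ per-step drop of $x_t$, the same bounded-overshoot induction keeping $\lvert x_t\rvert=\mathcal O(\eta\rho+\sqrt{\eta})$, the same two-case split (via Assumption \ref{A:bdd_der}) yielding the $\Omega(\eta\rho)$ drop versus $(1-\Omega(\eta\rho^2))$ decay of $y_t$, and the same forward-invariant box for \pthree. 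One small overstatement: the near-invariance of $y_t^2-x_t^2$ holds only through \pone (for $t\le \tzero$), not "over the first two phases" --- in \ptwo this quantity collapses by design --- but since you never use the lower bound on $y_t$ after \pone, this does not affect the argument.
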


\subsection{Basic Properties and Notations}
Recall the update rule of SAM in \autoref{eq:definition of SAM}: $w_{t+1}\gets w_t-\eta \nabla \loss(w_t+\rho \frac{\nabla \loss(w_t)}{\lVert \nabla \loss(w_t)\rVert})$.
By writing $w_t$ as $\begin{bmatrix}x_t\\y_t\end{bmatrix}$, substituting $\loss(x,y)=\ell(xy)$, and utilizing the expressions of $\nabla \loss(x,y)$ in \autoref{eq:hessian_lxy}, we have:
\begin{equation}
w_t+\rho \frac{\nabla \loss(w_t)}{\lVert \nabla \loss(w_t)\rVert}
=\begin{bmatrix}x_t\\y_t\end{bmatrix}+\rho\frac{\ell'(x_ty_t)}{\lvert \ell'(x_ty_t)\rvert \sqrt{x_t^2+y_t^2}}\begin{bmatrix}y_t\\x_t\end{bmatrix}
=\begin{bmatrix}x_t\\y_t\end{bmatrix}+\rho\frac{\sgn(x_ty_t)}{\sqrt{x_t^2+y_t^2}}\begin{bmatrix}y_t\\x_t\end{bmatrix},\label{eq:SAM in ell(xy)}
\end{equation}
where the second step uses the fact that $\ell$ is a even function so $\ell'(t)$ has the same sign with $t$.
Define $z_t=\frac{\sqrt{x_t^2+y_t^2}}{\sgn(x_ty_t)}$.
Then $w_t+\rho \frac{\nabla \loss(w_t)}{\lVert \nabla \loss(w_t)\rVert}=\begin{bmatrix}x_t+\rho y_t z_t^{-1}\\y_t+\rho x_tz_t^{-1}\end{bmatrix}$.
Further denoting $\ell'\big ( (x_t+\rho y_tz_t^{-1})(y_t +\rho x_t z_t^{-1})\big )$ by $\ell_t'$, one can simplify the update rule \autoref{eq:definition of SAM} as follows:
\begin{align}
w_{t+1}&=\begin{bmatrix}x_t\\y_t\end{bmatrix}-\eta \ell'\big ( (x_t+\rho y_tz_t^{-1})(y_t +\rho x_t z_t^{-1})\big )  \begin{bmatrix}y_t+\rho x_t z_t^{-1}\\x_t+\rho y_tz_t^{-1}\end{bmatrix} \nonumber\\
&=\begin{bmatrix}x_t\\y_t\end{bmatrix}-\eta \ell'_t \begin{bmatrix}y_t+\rho x_t z_t^{-1}\\x_t+\rho y_tz_t^{-1}\end{bmatrix}.\label{eq:SAM without approximation}
\end{align}

\textbf{Assumption.}
Following \autoref{remark:ill-definedness}, we assume that $x_t,y_t\neq 0$ for all $t$.

We are ready to give some basic properties of \autoref{eq:SAM without approximation}. First, we claim that the sign of $\ell_t'$ is the same as the product $x_ty_t$. Formally, we have the following lemma:
\begin{lemma}[Sign of Gradient in SAM] \label{lem:sign SAM}
If $x_t\neq 0,y_t\neq 0$, then  $\sign(x_t) = \sign(x_t+\rho y_tz_t^{-1})$ and $\sign(y_t)=\sign(y_t +\rho x_t z_t^{-1})$.
In particular, if $x_t\neq 0,y_t\neq 0$, we must have  $\sign(\ell'_t) = \sign(x_t y_t)$.
\end{lemma}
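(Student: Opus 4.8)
The plan is to reduce both sign identities to a single observation about the rank-one perturbation that SAM adds to each coordinate, and then read off the sign of $\ell_t'$ from these. First I would substitute the explicit form $z_t^{-1} = \sign(x_t y_t)/\sqrt{x_t^2+y_t^2}$ coming from \Cref{eq:SAM in ell(xy)} into the two additive terms. Since $\sqrt{x_t^2+y_t^2}>0$ and $\rho\ge 0$, the term $\rho y_t z_t^{-1}$ has sign $\sign(y_t)\,\sign(x_t y_t)$; using $\sign(x_t y_t)=\sign(x_t)\sign(y_t)$ and $\sign(y_t)^2=1$ (valid because $y_t\neq 0$), this equals $\sign(x_t)$. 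By the symmetric computation, $\rho x_t z_t^{-1}$ has sign $\sign(y_t)$.

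Next I would argue that adding a term of the same sign to a nonzero number cannot flip its sign. Because $x_t\neq 0$ and $\rho y_t z_t^{-1}$ either vanishes (when $\rho=0$) or shares the sign of $x_t$ (when $\rho>0$), both summands of $x_t+\rho y_t z_t^{-1}$ lie in the closed half-line determined by $\sign(x_t)$, with the term $x_t$ strictly inside it; hence $\sign(x_t+\rho y_t z_t^{-1})=\sign(x_t)$. The identical reasoning applied to $y_t+\rho x_t z_t^{-1}$, using $y_t\neq 0$ and that $\rho x_t z_t^{-1}$ carries the sign of $y_t$, gives $\sign(y_t+\rho x_t z_t^{-1})=\sign(y_t)$. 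These are exactly the two displayed claims.

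For the final assertion, I would multiply the two identities to obtain $\sign\big((x_t+\rho y_t z_t^{-1})(y_t+\rho x_t z_t^{-1})\big)=\sign(x_t)\sign(y_t)=\sign(x_t y_t)$, and then invoke Assumption \ref{A:cvx_lips}: since $\ell$ is convex, even, and minimized at $0$, its derivative $\ell'$ is odd and satisfies $\sign(\ell'(s))=\sign(s)$. Applying this with $s=(x_t+\rho y_t z_t^{-1})(y_t+\rho x_t z_t^{-1})$ and recalling $\ell_t'=\ell'(s)$ yields $\sign(\ell_t')=\sign(x_t y_t)$, as desired.

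I do not expect a serious obstacle here, as the lemma is essentially a bookkeeping argument on signs. The only point requiring care is the sign-preservation step: it genuinely relies on the base coordinates $x_t$ and $y_t$ being nonzero (precisely the standing hypothesis of the lemma, which in turn is guaranteed by the well-definedness assumption following \Cref{remark:ill-definedness}), and the degenerate case $\rho=0$ should be noted separately since then the perturbation vanishes and the identities are immediate. Phrasing the step via the half-line containment, with at least one summand strictly interior, keeps the argument clean and avoids any casework on the relative magnitudes of $x_t$ and $\rho y_t z_t^{-1}$.
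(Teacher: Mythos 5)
Your proposal is correct and follows essentially the same route as the paper's proof: compute $\sign(y_t z_t^{-1})=\sign(x_t)$ and $\sign(x_t z_t^{-1})=\sign(y_t)$, conclude that the perturbation cannot flip the sign of a nonzero coordinate, and then read off $\sign(\ell_t')$ from the sign of the product together with the oddness of $\ell'$. The only differences are presentational — you make explicit the $\rho=0$ degenerate case and the half-line containment argument, which the paper leaves implicit.
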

\begin{proof}
Note that $\sign(z_t) = \sign(x_ty_t)$, so  $\sign(y_t z_t^{-1}) = \sign (x_t)$.
Similarly, $\sign(x_tz_t^{-1}) = \sign(y_t)$.
Thus, we have $\sign(x_t) = \sign(x_t+\rho y_tz_t^{-1})$ and $\sign(y_t)=\sign(y_t +\rho x_t z_t^{-1})$,
which in turn shows that $\sign(\ell'_t) = \sign\left(\sign(x_t+\rho y_tz_t^{-1})\sign(y_t +\rho x_t z_t^{-1})\right) =\sign(x_ty_t)$.
\end{proof}

Using \autoref{lem:sign SAM}, the SAM update can be equivalently written as
\begin{align}
\begin{bmatrix}x_{t+1}\\y_{t+1}\end{bmatrix}
&=\begin{bmatrix}x_t\\y_t\end{bmatrix}-\eta \ell'_t \begin{bmatrix}y_t\\x_t\end{bmatrix}-\eta |\ell'_t| \begin{bmatrix} \rho x_t (x_t^2+y_t^2)^{-\nicefrac 12}\\ \rho y_t (x_t^2+y_t^2)^{-\nicefrac 12}\end{bmatrix}.\label{eq:SAM simplified}
\end{align}

We then claim the following property, which can be intuitively interpolated as SAM always goes towards the minimizes (i.e., both axes) for each step, although sometimes it may overshoot.
\begin{lemma}  \label{lem:monotone SAM}
Suppose that  $x_t,y_t\ne 0$.
Then, the following basic properties hold: 
\begin{itemize}
\item If $x_t>0$, then $x_{t+1}<x_t$. If $x_t<0$, then $x_{t+1}>x_t$.
\item If $y_t>0$, then $y_{t+1}<y_t$. If $y_t<0$, then $y_{t+1}>y_t$.
\end{itemize} 
\end{lemma}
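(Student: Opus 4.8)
The plan is to work directly from the simplified update rule \Cref{eq:SAM simplified} and show that, in each coordinate, the increment carries the sign opposite to the coordinate itself. First I would isolate the $x$-increment,
\[
x_{t+1}-x_t = -\eta\,\ell'_t\,y_t - \eta\,|\ell'_t|\,\rho\,x_t\,(x_t^2+y_t^2)^{-\nicefrac12}.
\]
The only term whose sign is not transparent is $\ell'_t\, y_t$, so the key step is to invoke \Cref{lem:sign SAM}, which gives $\sign(\ell'_t)=\sign(x_t y_t)$. Writing $\ell'_t = |\ell'_t|\,\sign(x_t)\,\sign(y_t)$ then converts $\ell'_t\, y_t$ into $|\ell'_t|\,\sign(x_t)\,|y_t|$, putting the gradient term on the same footing as the perturbation term.

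Next I would factor $-\eta\,|\ell'_t|\,\sign(x_t)$ out of both terms, using $x_t=\sign(x_t)|x_t|$, to obtain
\[
x_{t+1}-x_t = -\eta\,|\ell'_t|\,\sign(x_t)\Bigl(|y_t| + \rho\,|x_t|\,(x_t^2+y_t^2)^{-\nicefrac12}\Bigr).
\]
Since $y_t\neq 0$, the parenthesized quantity is strictly positive, so the whole increment has sign $-\sign(x_t)$ provided $|\ell'_t|>0$. To see that $\ell'_t$ does not vanish, I would again appeal to \Cref{lem:sign SAM}: because $x_t,y_t\neq 0$, both factors $x_t+\rho y_t z_t^{-1}$ and $y_t+\rho x_t z_t^{-1}$ keep the signs of $x_t$ and $y_t$ respectively, hence are nonzero, so the argument of $\ell'$ is nonzero and Assumption \ref{A:bdd_der} forces $|\ell'_t|>0$. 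This yields exactly the claimed monotonicity for $x_t$, and the statement for $y_t$ follows verbatim after exchanging the roles of $x$ and $y$, thanks to the symmetry of \Cref{eq:SAM simplified}.

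I do not expect a genuine obstacle here; the computation is a direct sign-tracking argument. The one point that must not be glossed over — and the only place where an assumption is really used — is the non-vanishing of $\ell'_t$, which is why I would state explicitly how the sign lemma (guaranteeing the argument of $\ell'$ is nonzero) combines with the lower bound $|\ell'(s)|\ge |s|/2$ from Assumption \ref{A:bdd_der}. Everything else reduces to the factorization above.
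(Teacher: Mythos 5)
Your proof is correct and follows essentially the same route as the paper's: start from the update \Cref{eq:SAM simplified}, use \Cref{lem:sign SAM} to resolve the sign of $\ell_t'$, and factor the increment as $-\eta\lvert\ell_t'\rvert\sgn(x_t)$ times a positive quantity. If anything, you are more careful than the paper, which concludes only $x_{t+1}\le x_t$ and leaves the strictness (i.e., the non-vanishing of $\ell_t'$) implicit, whereas you justify it via \Cref{lem:sign SAM} and Assumption \ref{A:bdd_der}.
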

\begin{proof}
We only show the case that $x_t>0$ as the other cases are similar. Recall the dynamics of $x_t$:
\begin{align*}
x_{t+1} &= x_t -\eta \ell_t' \cdot \rho z_t^{-1}x_t - \eta \ell_t' \cdot y_t\,.  
\end{align*} 
Since $x_t\ne 0$, \autoref{lem:sign SAM} implies that  $\sign(\ell'_t) = \sign(x_ty_t)$, and so 
\begin{align*}
x_{t+1} &= x_t -\eta \ell_t' \cdot \rho z_t^{-1}x_t - \eta \ell_t' \cdot y_t \\
&= x_t -\eta \sgn(x_ty_t)\lvert \ell_t'\rvert \cdot \rho \sgn(x_ty_t)(x_t^2+y_t^2)^{-\nicefrac 12}x_t - \eta \sgn(x_ty_t)\lvert \ell_t'\rvert \cdot \sgn(y_t)\lvert y_t\rvert\\
&= x_t -\eta |\ell_t'| \cdot \rho (x_t^2+y_t^2)^{-\nicefrac 12} x_t - \eta |\ell_t'| \cdot |y_t| \leq x_t,
\end{align*} 
where the last line uses the assumption that $x_t>0$.
\end{proof}

\subsection{\pone: $x_t$ Decreases Fast while $y_t$ Remains Large}
As advertised in \autoref{thm:SAM ell(xy) formal}, we group all rounds until $x_t\le \frac c4\sqrt{\gamma \eta}$ as \pone. Formally:
\begin{definition}[\pone]\label{def:definition of Phase I}
Let $\tone$ be the largest time  such that $x_t>\frac c4\sqrt{\gamma \eta}$ for all $t\le \tone$. 
We call the iterations $[0, \tone ]$ \textbf{\pone}.
\end{definition}

\begin{theorem}[Main Conclusion of \pone; SAM Case]\label{lem:phase1 SAM}
For $\tzero =\Theta(\min\{\eta^{-1.5}\rho^{-1}\gamma^{\nicefrac 12},\eta^{-2}\})$,
we have $y_t
\geq \frac 12\sqrt{\gamma/\eta}$ for all $t\le \min\{\tzero,\tone\}$ under the conditions of \autoref{thm:SAM ell(xy) formal}.
Moreover, we have $x_{t+1}\le x_t-\frac c4\sqrt{\gamma \eta}$ for all $t\le \min\{\tzero,\tone\}$, which consequently infers $\tone\le \tzero$ under the conditions of \autoref{thm:SAM ell(xy) formal}, i.e., $\min\{\tzero,\tone\}$ is just $\tone$. This shows the first claim of \autoref{thm:SAM ell(xy) formal}.
\end{theorem}
\begin{remark*}
This theorem can be intuitively explained as follows:
At initialization, $x_0,y_0$ are both $\Theta(\sqrt{1/\eta})$. However, after $\tone$ iterations, we have $|x_{\tone+1}| = \mathcal O(\sqrt{\eta})$;
meanwhile, $y_t$ is still of order $\Omega(\sqrt{1/\eta})$ (much larger than $\mathcal O(\sqrt{\eta})$). Hence, $\lvert x_t\rvert$ and $\lvert y_t\rvert$ get widely separated in \pone.
\end{remark*}
\begin{proof}
This theorem is a direct consequence of \autoref{lem:y_t cannot be too small in I SAM} and \autoref{lem:phase1}.
\end{proof}

\begin{lemma}\label{lem:y_t cannot be too small in I SAM} 
There exists $\tzero =\Theta(\min\{\eta^{-1.5}\rho^{-1}\gamma^{\nicefrac 12},\eta^{-2}\})$ such that $|y_t|
\geq \frac 12\sqrt{\gamma/\eta}$ for $t\le  \tzero $. 
If $\eta$ is sufficiently small s.t.  $ \eta \rho +  \sqrt{\cy  \gamma\eta } \leq \frac 12 \sqrt{  \gamma / \eta}$, 
then $y_t \geq \frac 12 \sqrt{  \gamma / \eta}$ for $t\leq \min\{\tone,\tzero\}$.
\end{lemma}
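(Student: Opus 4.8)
The plan is to avoid accounting for the decrease of $y_t$ step by step — which only certifies a horizon of order $\eta^{-1}$, short of the claimed $\eta^{-2}$ when $\rho$ is small — and instead track the potential $D_t \triangleq y_t^2 - x_t^2$, which obeys an exact \emph{multiplicative} recursion. Since $y_t^2 \ge D_t$ and $D_0 = \gamma/\eta$, a lower bound $D_t \ge \frac14 D_0$ immediately yields $\lvert y_t\rvert \ge \frac12\sqrt{\gamma/\eta}$.

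First I would derive this recursion. Writing the update \Cref{eq:SAM without approximation} as $x_{t+1} = x_t - \eta\ell'_t(y_t + \rho x_t z_t^{-1})$ and $y_{t+1} = y_t - \eta\ell'_t(x_t + \rho y_t z_t^{-1})$ and setting $u = x_t + \rho y_t z_t^{-1}$, $v = y_t + \rho x_t z_t^{-1}$, a direct expansion of $y_{t+1}^2 - x_{t+1}^2$ using the identities $u y_t - v x_t = \rho z_t^{-1} D_t$ and $u^2 - v^2 = -D_t(1 - \rho^2 z_t^{-2})$ gives
\[
D_{t+1} = D_t\Bigl[1 - 2\eta\ell'_t\rho z_t^{-1} - \eta^2(\ell'_t)^2\bigl(1 - \rho^2 z_t^{-2}\bigr)\Bigr].
\]
By \Cref{lem:sign SAM}, $\ell'_t z_t^{-1} = \lvert\ell'_t\rvert/\sqrt{x_t^2+y_t^2}\ge 0$ regardless of the signs of $x_t,y_t$; with $\lvert\ell'_t\rvert\le 1$ (1-Lipschitzness) and discarding the non-negative term $\eta^2(\ell'_t)^2\rho^2 z_t^{-2}$, the bracket is at least $1 - 2\eta\rho/\sqrt{x_t^2+y_t^2} - \eta^2 \ge 1 - 2\eta\rho/\lvert y_t\rvert - \eta^2$.

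Next I would prove the first (magnitude) claim by a bootstrap. Assuming inductively that $\lvert y_s\rvert \ge \frac12\sqrt{\gamma/\eta}$ for $s<t$, we have $1/\lvert y_s\rvert \le 2\sqrt{\eta/\gamma}$, so each factor is at least $1 - b_s$ with $0 \le b_s \le 4\eta^{3/2}\rho\gamma^{1/2}/\gamma + \eta^2$; applying $\prod_{s<t}(1-b_s) \ge 1 - \sum_{s<t} b_s$ yields $D_t/D_0 \ge 1 - t\bigl(4\eta^{3/2}\rho\gamma^{-1/2} + \eta^2\bigr)$. Choosing $\tzero = \Theta(\min\{\eta^{-3/2}\rho^{-1}\gamma^{1/2},\eta^{-2}\})$ with a small enough constant forces this sum below $\frac34$, whence $D_t \ge \frac14 D_0$ and $\lvert y_t\rvert \ge \frac12\sqrt{\gamma/\eta}$, closing the induction; the two branches of the $\min$ are precisely what make $t\cdot 4\eta^{3/2}\rho\gamma^{-1/2}$ and $t\eta^2$ separately $O(1)$.

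Finally, for the signed claim I would upgrade $\lvert y_t\rvert$ to $y_t$ using positivity inside \pone. By \Cref{lem:monotone SAM} the coordinates shrink in magnitude, so $x_t \le x_0 = \sqrt{(\cy-1)\gamma/\eta}$ for $t \le \tone$, giving the one-step bound $\lvert y_{t+1}-y_t\rvert = \eta\lvert\ell'_t\rvert(x_t + \rho y_t z_t^{-1}) \le \eta(x_0+\rho) \le \sqrt{\cy\gamma\eta} + \eta\rho$. Under the hypothesis $\eta\rho + \sqrt{\cy\gamma\eta} \le \frac12\sqrt{\gamma/\eta}$ this is at most $\frac12\sqrt{\gamma/\eta}$, so no single step can carry $y_t$ from $\ge\frac12\sqrt{\gamma/\eta}$ across the forbidden band of width $\sqrt{\gamma/\eta}$ to $\le -\frac12\sqrt{\gamma/\eta}$; since $y_0 = \sqrt{\cy\gamma/\eta}>0$, combining this with the already-proven magnitude bound and inducting forces $y_t \ge \frac12\sqrt{\gamma/\eta}$ for all $t \le \min\{\tone,\tzero\}$. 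I expect the crux to be identifying $D_t = y_t^2 - x_t^2$ and verifying its clean multiplicative form — this is what converts a lossy linear bound into the sharp $\eta^{-2}$ horizon.
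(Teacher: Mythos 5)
Your proposal is correct and follows essentially the same route as the paper: both proofs hinge on the exact multiplicative recursion for $D_t=y_t^2-x_t^2$ (your bracket $1-2\eta\ell'_t\rho z_t^{-1}-\eta^2(\ell'_t)^2(1-\rho^2 z_t^{-2})$ is just the expanded form of the paper's $(1-\eta\ell'_t\rho z_t^{-1})^2-(\eta\ell'_t)^2$), bound the factor below by $1-2\eta\rho/\lvert y_t\rvert-\eta^2$ via the same sign and Lipschitz facts, and choose $\tzero$ so the accumulated product stays above $\tfrac14$. The upgrade from $\lvert y_t\rvert$ to $y_t$ via the one-step decrement bound $\eta\rho+\sqrt{\cy\gamma\eta}$ and positivity is also the paper's argument, so no substantive difference remains.
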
 
\begin{proof}
Let $\tzero$ be defined as follows:
\begin{align}
\tzero \triangleq \max\left\{t~:~ \left (1- \frac{8}{\sqrt{\cy}}  \eta^{1.5} \gamma^{-\nicefrac 12} \rho    - \eta^2 \right )^t \geq \frac{1}{4}\right\}\,.
\end{align}
Then, it follows that $\tzero =\Theta(\min\{\eta^{-1.5}\rho^{-1}\gamma^{\nicefrac 12},\eta^{-2}\})$.
We first prove the following claim.
\begin{claim}\label{claim:square of y_t LB}
For all $t\leq \tzero$, $y_t^2 \geq \frac{1}{4}\gamma/\eta$.
\end{claim}
\begin{proof}
We prove this claim by induction.
We trivially have $y_0^2 = {\cy\gamma}/{ \eta}\ge \frac 14 \cy\gamma /\eta$. Now suppose that the conclusion holds up to some $t<\tzero$, i.e., $y_{t'}^2 \geq \frac{1}{4}\gamma/\eta$ for all $t'\le t$. We consider $y_{t+1}^2-x_{t+1}^2$:
\begin{align*}
y_{t+1}^2 - x_{t+1}^2 &=
(1-\eta \ell'_t \cdot \rho z_t^{-1})^2 (y_t^2-x_t^2) - (\eta \ell'_t )^2 (y_t^2 -x_t^2 ) \\
&= (1-2\eta \ell'_t \cdot \rho z_t^{-1} + \eta^2 (\ell'_t \cdot \rho z_t^{-1})^2  - \eta^2 (\ell'_t)^2  )\cdot (y_t^2 -x_t^2)\\
&\overset{(a)}{\geq} (1-2\eta \rho z_t^{-1}   - \eta^2 )\cdot (y_t^2 -x_t^2)\\
&\geq (1-2  \eta \rho y_t^{-1}   - \eta^2 )\cdot (y_t^2 -x_t^2)\\
&\overset{(b)}{\geq} (1- \frac{8}{\sqrt{\cy}}  \eta^{1.5} \gamma^{-\nicefrac 12} \rho    - \eta^2 )\cdot (y_t^2 -x_t^2),
\end{align*}
where (a) used $\ell'_t\le 1$ (thanks to Assumption \ref{A:cvx_lips}) and (b) used the induction hypothesis $y_t^2\ge  \frac 14 \cy\gamma /\eta$. 
This further implies that
\begin{equation*}
y_{t+1}^2-x_{t+1}^2 \ge \left (1- \frac{8}{\sqrt{\cy}}  \eta^{1.5} \gamma^{-\nicefrac 12} \rho    - \eta^2 \right )^{t+1}(y_0^2-x_0^2) = \left (1- \frac{8}{\sqrt{\cy}}  \eta^{1.5} \gamma^{-\nicefrac 12} \rho    - \eta^2 \right )^{t+1} \frac{\gamma}{\eta }.
\end{equation*}
Thus, by the definition of $\tzero$, we must have $y_{t+1}^2\ge y_{t+1}^2-x_{t+1}^2\ge \frac 14  \gamma / \eta$, which proves the claim.
\end{proof}

Next, we prove the second conclusion.
\begin{claim}\label{claim:raw y_t LB}
If $ \eta \rho +  \sqrt{ \cy \gamma\eta } \leq \frac 12 \sqrt{ \gamma / \eta}$, then $y_t \geq \frac 12 \sqrt{  \gamma / \eta}$ for $t\leq \min\{\tzero,\tone\}$.
\end{claim}
\begin{proof}
Still show by induction. Let $y_t\geq \frac 12 \sqrt{ \gamma / \eta}>0$ for some $t<\min\{\tzero,\tone\}$. Consider $y_{t+1}$.

By \autoref{def:definition of Phase I}, $x_t$ is positive for all $t\le \tone$. Thus, using \autoref{lem:monotone SAM}, we have $x_t\leq x_0\leq y_0 = \sqrt{\cy \gamma/\eta}$. 
Since $x_t, y_t>0$, we have $\sign(\ell'_t)=\sign(x_ty_t)=\sign(x_t)>0$ and hence \eqref{eq:SAM simplified} gives
\begin{align*}
y_{t+1}  &= y_t- \eta |\ell'_t|  \rho  (x_t^2+y_t^2)^{-\nicefrac 12} |y_t|-\eta  |\ell'_t| \lvert x_t\rvert\\
&\geq y_t -\eta \rho - \eta |x_t|,
\end{align*}
where we used the facts that $\lvert \ell_t'\rvert\le 1$ and $(x_t^2+y_t^2)^{-\nicefrac 12}\le \min\{\lvert x_t\rvert^{-1},\lvert y_t\rvert^{-1}\}$. Hence, we get
\begin{align*}
y_{t+1}  \geq y_t -\eta \rho -  \sqrt{\cy  \gamma\eta } \geq 0 
\end{align*}  
since $ \eta \rho +  \sqrt{\cy  \gamma\eta } \leq \frac 12 \sqrt{  \gamma / \eta}\le y_t$. 
By \autoref{claim:square of y_t LB}, $y_{t+1}\geq 0$ implies $y_{t+1}\geq \frac 12 \sqrt{ \gamma / \eta}$ as well.
\end{proof}

Combining \autoref{claim:square of y_t LB} and \autoref{claim:raw y_t LB} finishes the proof of \autoref{lem:y_t cannot be too small in I SAM}.
\end{proof}

\begin{lemma} \label{lem:phase1}
For any $t\leq \min\{\tzero,\tone\}$,  we have 
\begin{align*}
x_{t+1} \le x_t-\frac c4\sqrt{\gamma\eta}\,.
\end{align*} 
In particular, if $\eta$ is sufficiently small s.t. $\frac{4\sqrt \cy}{c} \eta^{-1}\leq \tzero$, then we must have $\tone \leq \frac{4}{c} \eta^{-1}-1<\tzero$. 
\end{lemma}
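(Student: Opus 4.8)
The plan is to establish the claimed per-step drop by first fixing all the signs, then isolating the dominant term in the $x$-update, and finally bounding the gradient magnitude $\ell_t'$ from below. Throughout \pone we have $x_t > \frac c4\sqrt{\gamma\eta} > 0$ by \Cref{def:definition of Phase I} and $y_t \ge \frac12\sqrt{\gamma/\eta} > 0$ by \Cref{lem:y_t cannot be too small in I SAM}; hence \Cref{lem:sign SAM} tells us that both perturbed coordinates keep the sign of $(x_t,y_t)$ and that $\ell_t' > 0$. Starting from the simplified update \Cref{eq:SAM simplified}, the $x$-coordinate decreases by $\eta \ell_t' y_t + \eta \ell_t'\rho x_t (x_t^2+y_t^2)^{-\nicefrac12}$, a sum of two nonnegative terms. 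I would discard the second (perturbation) term and reduce the entire lemma to the single estimate $\eta \ell_t' y_t \ge \frac c4\sqrt{\gamma\eta}$.

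For this core estimate I would combine $y_t \ge \frac12\sqrt{\gamma/\eta}$ with a lower bound on $\ell_t'$. The argument of $\ell'$ is $u_t = (x_t + \rho y_t z_t^{-1})(y_t + \rho x_t z_t^{-1})$, and since \Cref{lem:sign SAM} shows the $\rho$-perturbation only enlarges each factor in magnitude, $u_t \ge x_t y_t > 0$. Rewriting Assumption \ref{A:bdd_der} in the unified form $\ell'(s) \ge \frac12 \min\{s,c\}$ for $s>0$, in the ``linear-tail'' regime $u_t \ge c$ we get $\ell_t' \ge c/2$, whence $\eta \ell_t' y_t \ge \frac{\eta c}{2}\cdot\frac12\sqrt{\gamma/\eta} = \frac c4\sqrt{\gamma\eta}$, exactly the asserted drop. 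This regime covers the bulk of \pone, because $x_t$ starts at $\Theta(\sqrt{1/\eta})$ and $u_t \ge x_t y_t$ is then comfortably above the constant $c$.

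The hard part will be the near-origin regime $u_t < c$, where Assumption \ref{A:bdd_der} gives only $\ell_t' \ge u_t/2 \ge \frac12 x_t y_t$, and hence the weaker bound $\eta \ell_t' y_t \ge \frac\eta2 x_t y_t^2 \ge \frac{\gamma}{8} x_t$ after inserting $y_t^2 \ge \frac{\gamma}{4\eta}$ from \Cref{lem:y_t cannot be too small in I SAM}. I would tame this by noting that $u_t < c$ forces $x_t y_t < c$, i.e. $x_t = \mathcal O(\sqrt{\eta/\gamma})$, so this regime is confined to a thin band of width $\mathcal O(\sqrt\eta)$ just above the \pone threshold and can therefore contribute only $\mathcal O(1)$ iterations; alternatively, once $\rho \gtrsim \sqrt\eta$ the perturbation term $\rho\sqrt{x_t^2+y_t^2} \ge \rho y_t$ alone forces $u_t \ge c$, so the clean bound holds at every step. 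Either way the additive drop degrades by at most a constant factor, and only on a negligible transition set.

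Finally I would telescope. Since each step lowers $x_t$ by $\Omega(\sqrt{\gamma\eta})$ while $x_0 \le y_0 = \sqrt{\cy\gamma/\eta}$, the iterate must fall below the threshold $\frac c4\sqrt{\gamma\eta}$ after at most $\mathcal O(\sqrt{\cy}/(c\eta))$ steps, which is the asserted $\tone = \mathcal O(\eta^{-1})$ bound. Under the smallness hypothesis $\frac{4\sqrt\cy}{c}\eta^{-1} \le \tzero$ this count stays strictly below $\tzero$, so $\min\{\tzero,\tone\} = \tone$, completing both claims of the lemma.
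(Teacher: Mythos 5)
Your main line of argument coincides with the paper's: fix the signs via \Cref{lem:sign SAM}, write the update as $x_{t+1}=x_t-\eta|\ell_t'|\,\rho(x_t^2+y_t^2)^{-\nicefrac 12}x_t-\eta|\ell_t'|\,|y_t|$, discard the nonnegative perturbation term, lower-bound the argument of $\ell'$ by $x_ty_t$, invoke the linear tail of Assumption \ref{A:bdd_der} to get $\ell_t'\ge c/2$, and combine with $y_t\ge\frac 12\sqrt{\gamma/\eta}$ from \Cref{lem:y_t cannot be too small in I SAM}; the final telescoping count $\tone\le\frac{4\sqrt{\cy}}{c}\eta^{-1}-1<\tzero$ is also identical.

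The place where you deviate --- the ``near-origin regime'' $u_t<c$ --- is where your write-up has a genuine gap. The lemma asserts a \emph{per-step} drop $x_{t+1}\le x_t-\frac c4\sqrt{\gamma\eta}$ for every $t\le\min\{\tzero,\tone\}$, so showing that the weaker bound occurs on ``only $\mathcal O(1)$ iterations'' or that the drop ``degrades by at most a constant factor on a negligible set'' does not prove the stated inequality at those iterations; and your alternative escape hatch assumes $\rho\gtrsim\sqrt\eta$, which is not among the hypotheses of \Cref{thm:SAM ell(xy) formal} (its conditions are satisfied even with $\rho=0$). The paper's resolution is that this regime simply does not occur within the range of $t$ covered by the lemma: the claim is made only for $t\le\tone$, i.e., while $x_t$ still exceeds the \pone threshold, and that threshold is calibrated precisely so that, together with $y_t\ge\frac 12\sqrt{\gamma/\eta}$, it forces $x_ty_t\ge c$ and hence $\ell_t'\ge c/2$ at \emph{every} step in range. (To be fair, the paper's own bookkeeping is loose here: its proof uses $x_t>2c\sqrt{\eta/(\cy\gamma)}$ while \Cref{def:definition of Phase I} states the threshold as $\frac c4\sqrt{\gamma\eta}$, and these differ by a constant factor depending on $\gamma$ and $\cy$; so your instinct that something must be checked at the bottom of \pone is not unreasonable, but the correct repair is to align the threshold constants in the definition of $\tone$, not to argue in an amortized fashion.)
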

 
\begin{proof}
Since $x_t>0$ for all $t\leq \tone$, we have $\sign(\ell'_t) = \sign(x_ty_t) = \sign(y_t)$ from \autoref{lem:sign SAM} and \autoref{lem:y_t cannot be too small in I SAM}, so the SAM update \eqref{eq:SAM simplified} becomes
\begin{equation*}
x_{t+1} = x_t -\eta |\ell_t'| \cdot \rho (x_t^2+y_t^2)^{-\nicefrac 12}  x_t - \eta |\ell_t'| \cdot |y_t|.  
\end{equation*} 
Since $x_t> 2c \sqrt{\frac{ \eta}{ \cy\gamma}}$, we have
\begin{align*}
(x_t+\rho y_tz_t^{-1})(y_t +\rho x_t z_t^{-1}) > x_ty_t \geq   2 c \sqrt{\frac{ \eta}{  \gamma}} \cdot \frac 12 \sqrt{\frac{  \gamma}{\eta}} \geq c \,,
\end{align*}
which implies that $\ell'_t \geq c/2$ from Assumption \ref{A:bdd_der}.
Together with \autoref{lem:y_t cannot be too small in I SAM}, we have  
\begin{align*}
x_{t+1}&= x_t -\eta |\ell_t'| \cdot \rho (x_t^2+y_t^2)^{-\nicefrac 12}   z_t^{-1}x_t - \eta |\ell_t'| \cdot |y_t|  \\
&\leq  x_t -    \eta |\ell_t'| \cdot |y_t| 
\le x_t-\eta \frac c2 \frac 12\sqrt{\frac {\gamma}{\eta}}=x_t-\frac c4\sqrt{\gamma\eta}. 
\end{align*}
Let $\tone'\triangleq \frac{\sqrt{\cy \gamma/\eta}}{\frac c4\sqrt{\gamma\eta}} = \frac{4\sqrt \cy}{c} \eta^{-1}$. 
Since $x_0\le \sqrt{\cy \gamma/\eta}$, we have $x_{\tone'}<  \frac c4\sqrt{\gamma\eta}$ as long as $\tone\leq \tzero$.
Thus, it follows that $\tone \leq \tone'-1<\tzero$, as desired.
\end{proof}

\subsection{\ptwo: $y$ Keeps Decreasing Until Smaller Than $\lvert x_t\rvert$}
Then we move on to the second claim of \autoref{thm:SAM ell(xy) formal}. We define all rounds until $y_t<\lvert x_t\rvert$ that are after \pone as \ptwo. Formally, we have the following definition.
\begin{definition}[\ptwo]\label{def:definition of Phase II}
Let $\ttwo$ be the first time that $y_t<|x_t|$. We call $(\tone,\ttwo]$ \textbf{\ptwo}. 
\end{definition}

Before presenting the main conclusion of \ptwo, we first define a threshold $B$ that we measure whether a variable is ``small enough'', which we will use throughout this section. Formally,
\begin{equation*}
B\triangleq \max\left\{ \frac 2 c \sqrt{\frac{ \eta}{ \cy\gamma}},~\eta \rho +\sqrt{\cy \eta \gamma}\right\}.
\end{equation*}

\begin{theorem}[Main Conclusion of \ptwo; SAM Case]\label{lem:phase2 SAM}
Under the conditions of \autoref{thm:SAM ell(xy) formal}, $\lvert y_t\rvert\le \sqrt{\cy \gamma / \eta}$ and $\lvert x_t\rvert\le B$ throughout \ptwo.
Under the same conditions, we further have $y_{t+1}\le y_t-\min\{\frac 12\eta \rho^2 y_t,\frac{c}{2\sqrt 2}\eta \rho\}$ for all $\tone\le t\le \ttwo$ -- showing the second claim of \autoref{thm:SAM ell(xy) formal}.
\end{theorem}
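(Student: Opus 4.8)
The plan is to work throughout from the simplified one-step update \Cref{eq:SAM simplified} together with the sign identity $\sign(\ell'_t)=\sign(x_ty_t)$ of \Cref{lem:sign SAM}, and to exploit the defining feature of \ptwo: for every $t\in(\tone,\ttwo]$ we have $y_t\ge |x_t|\ge 0$, hence $y_t>0$. I would split the theorem into its two assertions — the envelope bounds $|y_t|\le\sqrt{\cy\gamma/\eta}$ and $|x_t|\le B$, and the quantitative per-step decrease of $y_t$ — and establish the decrease first, because the resulting upper envelope on $y_t$ is exactly what controls the $x_t$-iteration in the second part.

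For the upper bound on $y_t$ I would simply invoke monotonicity (\Cref{lem:monotone SAM}): while $y_t>0$ it is strictly decreasing, so $y_t\le y_{\tone}\le y_0=\sqrt{\cy\gamma/\eta}$. The quantitative decrease is the substantive step. Using $\sign(\ell'_t)=\sign(x_t)$ (valid since $y_t>0$), the term $-\eta\ell'_t x_t=-\eta|\ell'_t||x_t|\le 0$ in the $y$-update only helps, so it suffices to bound the perturbation term from below: since $y_t\ge|x_t|$ one has $y_t(x_t^2+y_t^2)^{-\nicefrac12}\ge\nicefrac{1}{\sqrt2}$, giving $y_{t+1}\le y_t-\eta|\ell'_t|\rho\,y_t(x_t^2+y_t^2)^{-\nicefrac12}$. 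The crux is a lower bound on $|\ell'_t|$, and this is precisely where normalization matters: the perturbed argument $s_t\triangleq(x_t+\rho y_tz_t^{-1})(y_t+\rho x_tz_t^{-1})$ satisfies $|s_t|\gtrsim\rho y_t$ even as $x_t\to0$, because $|x_t+\rho y_tz_t^{-1}|\ge\rho y_t(x_t^2+y_t^2)^{-\nicefrac12}\ge\nicefrac{\rho}{\sqrt2}$ while $|y_t+\rho x_tz_t^{-1}|\ge y_t$. A case split via Assumption \ref{A:bdd_der} then finishes: if $|s_t|\ge c$ then $|\ell'_t|\ge\nicefrac c2$, yielding the constant drop $y_{t+1}\le y_t-\frac{c}{2\sqrt2}\eta\rho$; if $|s_t|<c$ then $|\ell'_t|\ge\nicefrac{|s_t|}{2}\gtrsim\rho y_t$, yielding the geometric decay $y_{t+1}\le y_t-\frac12\eta\rho^2 y_t$ (tightening $\nicefrac{1}{\sqrt2}$ to $\approx1$ via $|x_t|\ll y_t$ to recover the stated constant).

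For the envelope $|x_t|\le B$ I would argue by induction on $t\ge\tone$. The base case is the Phase I conclusion \Cref{lem:phase1 SAM}, which places $x_{\tone+1}$ at scale $\mathcal O(\sqrt{\gamma\eta})\le B$. For the step, in $x_{t+1}=x_t-\eta\ell'_t y_t-\eta|\ell'_t|\rho x_t(x_t^2+y_t^2)^{-\nicefrac12}$ the subtracted quantity $\eta\ell'_t y_t$ has the same sign as $x_t$ (\Cref{lem:sign SAM}), so it pulls $x_t$ toward $0$; the magnitude can therefore only grow through overshoot, which is bounded by $\eta|\ell'_t|y_t\le\eta y_t\le\sqrt{\cy\gamma\eta}$ (using $|\ell'_t|\le1$ from Assumption \ref{A:cvx_lips} and the $y$-envelope just proved) plus the perturbation contribution $\le\eta\rho$. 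Hence $|x_{t+1}|\le\eta\rho+\sqrt{\cy\gamma\eta}\le B$, closing the induction.

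I expect the main obstacle to be the lower bound on the effective gradient $|\ell'_t|$ — that is, making rigorous the mechanism by which normalization keeps $s_t$ bounded away from zero near the $y$-axis, and tracking constants through the two-regime case analysis so that the $\min\{\frac12\eta\rho^2 y_t,\frac{c}{2\sqrt2}\eta\rho\}$ form emerges cleanly. A secondary nuisance is the mild coupling between the two inductions together with the boundary behaviour at $t=\ttwo$; these should decouple because the $y$-monotonicity needs only $y_t>0$ (guaranteed by the definition of \ptwo), while the $x$-bound merely consumes the resulting envelope $y_t\le\sqrt{\cy\gamma/\eta}$.
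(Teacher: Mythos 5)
Your proposal is correct and follows essentially the same route as the paper: monotonicity plus an overshoot bound for the envelopes $\lvert x_t\rvert\le B$ and $\lvert y_t\rvert\le\sqrt{\cy\gamma/\eta}$, and for the decrease of $y_t$ the same two-regime case split via Assumption \ref{A:bdd_der}, keyed on the lower bound $\lvert(x_t+\rho y_tz_t^{-1})(y_t+\rho x_tz_t^{-1})\rvert\ge\rho y_t(x_t^2+y_t^2)^{-\nicefrac 12}\cdot y_t\gtrsim\rho y_t$ that the normalization provides (the paper's \Cref{lem:bounded y_t in Phase II,lem:bounded x_t in Phase II,lem:phase2}). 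The only detail to tidy up is the $y$-envelope at the final step $t=\ttwo$, where $y_t$ may change sign so pure monotonicity does not bound $\lvert y_{\ttwo}\rvert$; the paper closes this with the same overshoot estimate you already use for $x_t$, and the residual constant slack in the $\tfrac 12\eta\rho^2 y_t$ term is shared by the paper's own derivation.
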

\begin{remark*}
This theorem can be understood as follows. Upon entering \ptwo, $\lvert x_t\rvert$ is bounded by $\mathcal O(\eta \rho + \sqrt \eta)$. This then gets preserved throughout \ptwo. Meanwhile, in $\ttwo$ iterations, $y_t$ drops rapidly such that $y_{\ttwo+1}\le \lvert x_{\ttwo+1}\rvert=\mathcal O(\eta \rho + \sqrt \eta)$. In other words, $x_t$ and $y_t$ are both ``close enough'' to 0 after \ptwo; thus, SAM finds the flattest minimum (which is the origin).
\end{remark*}
\begin{proof}
The claims are shown in \autoref{lem:bounded y_t in Phase II}, \autoref{lem:bounded x_t in Phase II} and \autoref{lem:phase2}, respectively.
\end{proof}

\begin{lemma}\label{lem:bounded y_t in Phase II}
If $\eta \rho+    \sqrt{ {\cy \gamma\eta }} < \sqrt{\cy\gamma/\eta}$, then during the \ptwo,   $|y_t|\leq y_0\leq \sqrt{\cy\gamma/\eta}$. 
\end{lemma}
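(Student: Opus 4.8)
The plan is to prove the bound by a single induction over the iterates of \ptwo, but to split the analysis into the ``interior'' rounds $\tone<t<\ttwo$ and the terminal round $t=\ttwo$, because these behave very differently. The interior rounds are essentially free: by the definition of $\ttwo$ (\Cref{def:definition of Phase II}), every $t<\ttwo$ satisfies $y_t\ge |x_t|>0$ (recall we assume $x_t,y_t\neq 0$), so $y_t$ is strictly positive there. Feeding $y_t>0$ into the monotonicity property \Cref{lem:monotone SAM} gives $y_{t+1}<y_t$, so $\{y_t\}$ is non-increasing all the way from initialization through the interior of \ptwo. Since \pone also keeps $y_t$ positive (\Cref{lem:y_t cannot be too small in I SAM}) and hence decreasing, we have $y_{\tone}\le y_0$; chaining the monotone decrease into \ptwo yields $0<y_t\le y_{\tone}\le y_0$, so $|y_t|=y_t\le y_0$ for every $\tone<t<\ttwo$.

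The delicate point -- and the main obstacle -- is the terminal round $t=\ttwo$, where by definition $y_{\ttwo}<|x_{\ttwo}|$, so $y_{\ttwo}$ may have just ``overshot'' below zero; monotonicity alone controls it only from above, giving $y_{\ttwo}<y_{\ttwo-1}\le y_0$. To control it from below I would bound the one-step decrease explicitly. Starting from the update \eqref{eq:SAM simplified} and using the sign identity $\sign(\ell'_t)=\sign(x_ty_t)$ of \Cref{lem:sign SAM} together with $y_{\ttwo-1}>0$, both subtracted terms are nonnegative, so
\begin{align*}
y_{\ttwo-1}-y_{\ttwo}
=\eta|\ell'_{\ttwo-1}|\,|x_{\ttwo-1}|
+\eta|\ell'_{\ttwo-1}|\,\rho\,y_{\ttwo-1}\bigl(x_{\ttwo-1}^2+y_{\ttwo-1}^2\bigr)^{-\nicefrac 12}
\le \eta|x_{\ttwo-1}|+\eta\rho,
\end{align*}
where I use $|\ell'_{\ttwo-1}|\le 1$ (Assumption \ref{A:cvx_lips}) and $y(x^2+y^2)^{-\nicefrac 12}\le 1$. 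Because $\ttwo-1<\ttwo$, the defining property of $\ttwo$ gives $|x_{\ttwo-1}|\le y_{\ttwo-1}\le y_0=\sqrt{\cy\gamma/\eta}$, whence $\eta|x_{\ttwo-1}|\le\eta y_0=\sqrt{\cy\gamma\eta}$ and the total drop is at most $\sqrt{\cy\gamma\eta}+\eta\rho$.

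This is precisely where the hypothesis enters: since $y_{\ttwo-1}>0$, the decrease bound gives $y_{\ttwo}>-(\sqrt{\cy\gamma\eta}+\eta\rho)$, and the assumption $\eta\rho+\sqrt{\cy\gamma\eta}<\sqrt{\cy\gamma/\eta}=y_0$ turns this into $y_{\ttwo}>-y_0$. Combined with the upper bound $y_{\ttwo}\le y_0$ from monotonicity, we conclude $|y_{\ttwo}|\le y_0$, completing the induction and hence the lemma. I would emphasize that this argument deliberately avoids the $x_t$ bound of \Cref{lem:bounded x_t in Phase II} (which is proved only afterwards): the only control on $|x_t|$ it needs is the ``free'' bound $|x_t|\le y_t$ valid for all $t<\ttwo$ by the very definition of \ptwo.
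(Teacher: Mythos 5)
Your proposal is correct and follows essentially the same route as the paper: the paper also splits into the case where $\sign(y_{t+1})=\sign(y_t)$ (handled by \Cref{lem:monotone SAM}) and the sign-flip case, where it bounds the overshoot by $y_{t+1}\ge -\eta\rho-\eta|x_t|\ge -\eta\rho-\sqrt{\cy\gamma\eta}$ using $|x_t|\le y_t\le\sqrt{\cy\gamma/\eta}$ and then invokes the hypothesis. Your only cosmetic difference is observing up front that the sign flip can occur only at the terminal round $\ttwo$; the key one-step estimate and the use of the assumption are identical.
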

\begin{proof}
When $\sign(y_{t+1}) = \sign(y_t)$, then \autoref{lem:monotone SAM} implies that $|y_{t+1}|\leq |y_t|$.
Now consider the case $\sign(y_{t+1}) = -\sign(y_t)$. 
For simplicity, say $y_t >0$ and $ y_{t+1}<0$.
Since $y_t>0$, we have $\sign(\ell'_t)=\sign(x_ty_t)=\sign(x_t)$ and hence \autoref{eq:SAM simplified} gives
\begin{align*}
y_{t+1} &= y_t- \eta |\ell'_t|  \rho  (x_t^2+y_t^2)^{-\nicefrac 12} |y_t|-\eta  |\ell'_t| \lvert x_t\rvert\\
&\geq -\eta \rho - \eta |x_t| \geq -\eta \rho - \eta y_t\\
&\geq -\eta \rho - \eta \sqrt{\frac{\cy \gamma}{\eta}} =   -\eta \rho -   \sqrt{ {\cy \gamma\eta }}\,,
\end{align*}
which shows that $|y_{t+1}|\leq \eta \rho+    \sqrt{ {\cy \gamma\eta }}$. This proves the statement.
\end{proof}

\begin{lemma}\label{lem:bounded x_t in Phase II}
Suppose that $|x_t|\leq B=\max\left\{ \frac 2 c \sqrt{\frac{ \eta}{ \cy\gamma}},~\eta \rho +\sqrt{\cy \eta \gamma}\right\}$.
Then  we have 
$|x_{t+1}| \leq  B$.
\end{lemma}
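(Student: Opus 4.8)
The plan is to show that the bound $|x_t|\le B$ is self-propagating by splitting on whether the update overshoots the $y$-axis, i.e.\ on whether $\sign(x_{t+1})=\sign(x_t)$, exactly mirroring the two-case argument already used for the $y$-coordinate in \Cref{lem:bounded y_t in Phase II}. The starting point is the $x$-coordinate of the simplified update \eqref{eq:SAM simplified}, namely $x_{t+1}=x_t-\eta\ell'_t y_t-\eta|\ell'_t|\rho x_t(x_t^2+y_t^2)^{-\nicefrac12}$.

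First, if the update does not flip the sign of $x_t$, then \Cref{lem:monotone SAM} settles the claim immediately: since $x_t>0\Rightarrow x_{t+1}<x_t$ and $x_t<0\Rightarrow x_{t+1}>x_t$, the absence of a sign flip forces $|x_{t+1}|\le|x_t|\le B$. This case requires nothing beyond the monotonicity lemma.

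The only real work is the overshoot case, where without loss of generality $x_t>0$ but $x_{t+1}<0$. Here I would take magnitudes in the update and discard the harmless term $-x_t<0$, leaving $|x_{t+1}|\le\eta\ell'_t y_t+\eta|\ell'_t|\rho x_t(x_t^2+y_t^2)^{-\nicefrac12}$. I then bound each piece separately: by \Cref{lem:sign SAM} the sign of $\ell'_t$ matches that of $y_t$ (as $x_t>0$), so $\ell'_t y_t=|\ell'_t|\,|y_t|\le|y_t|$ using $|\ell'_t|\le 1$ from Assumption \ref{A:cvx_lips}; and the geometric factor obeys $x_t(x_t^2+y_t^2)^{-\nicefrac12}\le 1$, so the second term is at most $\eta\rho$. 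Invoking the Phase-II bound $|y_t|\le\sqrt{\cy\gamma/\eta}$ from \Cref{lem:bounded y_t in Phase II} then gives $|x_{t+1}|\le\eta\rho+\sqrt{\cy\eta\gamma}$, which is precisely the second argument of the maximum defining $B$, hence $|x_{t+1}|\le B$.

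The main point to be careful about is the sign bookkeeping in the overshoot case: one must verify that $\ell'_t y_t$ is genuinely nonnegative, so that the crude replacement $\ell'_t y_t\le|y_t|$ is a valid upper bound rather than an under-count, and then check that the symmetric subcase $x_t<0<x_{t+1}$ follows from the identical computation. Note that only the second term of $B$ is actually used here; the first term $\frac2c\sqrt{\eta/(\cy\gamma)}$ plays no role in this lemma and appears in the definition of $B$ solely to serve the other Phase-II lemmas.
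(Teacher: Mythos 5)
Your proposal is correct and follows essentially the same two-case argument as the paper: monotonicity handles the no-overshoot case, and in the overshoot case the magnitude is bounded by $\eta\rho+\sqrt{\cy\eta\gamma}$ using $|\ell'_t|\le 1$ and $|y_t|\le\sqrt{\cy\gamma/\eta}$. (If anything, your citation of the monotonicity lemma for the first case is cleaner than the paper's reference to the Phase-I descent lemma.)
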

\begin{proof}
By symmetry, we only consider the case where $x_{t} > 0$.  
If $x_{t+1}\geq 0$,  then we have $0\leq x_{t+1} \leq x_t$ due to \autoref{lem:phase1}.
If $x_{t+1}<0$, then since $x_{t} >0$, it follows  that 
\begin{align}
x_{t+1} &= \sgn(x_{t})x_{t+1} = |x_t| -\eta |\ell'_t|   \rho  (x_t^2+y_t^2)^{-\nicefrac 12} |x_{t}|   - \eta |\ell'_t|  |y_{t}|\nonumber\\
&\ge |x_t| - \eta \rho -\eta\sqrt{\cy\gamma/\eta} \geq - \eta \rho -\sqrt{\cy \eta \gamma}\,,\label{eq:x_t bound in II SAM}
\end{align}
where the last inequality is because $|y_{t}|\le y_0\leq \sqrt{\cy\gamma/\eta}$. This concludes the proof.
\end{proof}
\begin{corollary}
Note that, by definition of \pone in \autoref{def:definition of Phase I}, we already have $\lvert x_{\tone+1}\rvert\le B$. Hence, this lemma essentially says $\lvert x_t\rvert\le B=\max\left\{ \frac 2 c \sqrt{\frac{ \eta}{ \cy\gamma}},~\eta \rho +\sqrt{\cy \eta \gamma}\right\}$ for all $\tone<t\le \ttwo$.
\end{corollary}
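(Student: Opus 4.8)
The plan is to prove the one-step invariance $|x_t| \le B \Rightarrow |x_{t+1}| \le B$ directly from the simplified update \eqref{eq:SAM simplified}, treating the sign of $x_{t+1}$ as the only branching point. By the symmetry of the dynamics in the $x$-coordinate I would reduce to the case $x_t > 0$ (the case $x_t < 0$ is identical after flipping signs, and $x_t = 0$ is excluded by the standing assumption $x_t,y_t\ne 0$). The first step is to pin down the sign of the gradient factor: since $x_t > 0$, \Cref{lem:sign SAM} gives $\sgn(\ell_t') = \sgn(x_t y_t) = \sgn(y_t)$, so in the $x$-line of \eqref{eq:SAM simplified} both the $\eta\ell_t' y_t$ term and the normalized perturbation term act to \emph{decrease} $x_t$; this lets me rewrite the update purely in terms of the magnitudes $|x_t|,|y_t|,|\ell_t'|$ as $x_{t+1} = |x_t| - \eta|\ell_t'||y_t| - \eta|\ell_t'|\rho|x_t|(x_t^2+y_t^2)^{-\nicefrac 12}$.

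Next I would split on the sign of $x_{t+1}$. If $x_{t+1} \ge 0$, there is nothing to estimate: monotonicity (\Cref{lem:monotone SAM}) already gives $x_{t+1} < x_t$, so $0 \le x_{t+1} < x_t \le B$ and the bound holds trivially. The only case that requires work is the \emph{overshoot} case $x_{t+1} < 0$, where the step is large enough to push $x$ across the $y$-axis, so that $|x_{t+1}| = -x_{t+1}$. Here I would bound the two decrement terms separately: $|\ell_t'| \le 1$ (Assumption \ref{A:cvx_lips}) together with $(x_t^2 + y_t^2)^{-\nicefrac 12} \le |x_t|^{-1}$ controls the perturbation term by $\eta\rho$, while $|\ell_t'| \le 1$ together with the Phase-II bound $|y_t| \le \sqrt{\cy\gamma/\eta}$ from \Cref{lem:bounded y_t in Phase II} controls the gradient term by $\sqrt{\cy\eta\gamma}$. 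Combining with $|x_t|\ge 0$ gives $x_{t+1} \ge -(\eta\rho + \sqrt{\cy\eta\gamma})$, hence $|x_{t+1}| \le \eta\rho + \sqrt{\cy\eta\gamma} \le B$.

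The one point deserving care — and the only place the precise form of $B$ is used — is recognizing that the overshoot magnitude is capped exactly by the branch $\eta\rho + \sqrt{\cy\eta\gamma}$ of the maximum defining $B$, which is why $B$ is taken as a max rather than just $\frac 2c\sqrt{\eta/(\cy\gamma)}$. I do not anticipate a genuine obstacle: the argument is a sign reduction plus a two-way case analysis with two elementary magnitude bounds, and the real content is imported through \Cref{lem:bounded y_t in Phase II}, which is what keeps $|y_t|$ under control throughout Phase II. Finally, to obtain the corollary I would iterate this invariance: the base case $|x_{\tone+1}| \le B$ is immediate from \Cref{def:definition of Phase I} (Phase I ends once $x_t \le \frac c4\sqrt{\gamma\eta} \le B$), and induction over $\tone < t \le \ttwo$ then yields $|x_t| \le B$ on all of Phase II.
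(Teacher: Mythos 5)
Your proposal is correct and follows essentially the same route as the paper: reduce to $x_t>0$ by symmetry, handle $x_{t+1}\ge 0$ via monotonicity, and in the overshoot case bound the perturbation term by $\eta\rho$ (via $|\ell_t'|\le 1$ and $(x_t^2+y_t^2)^{-\nicefrac 12}|x_t|\le 1$) and the gradient term by $\sqrt{\cy\eta\gamma}$ (via \Cref{lem:bounded y_t in Phase II}), which is exactly the estimate in \Cref{eq:x_t bound in II SAM}. The induction with base case $|x_{\tone+1}|\le B$ from \Cref{def:definition of Phase I} matches the paper's corollary as stated.
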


\begin{lemma}\label{lem:phase2}
If $y_t \geq |x_t|$, then we must have
\begin{equation*}
y_{t+1} \leq y_t- \min\left\{   \frac{1}{2}\eta \rho^2 y_t,~~\frac{c}{2\sqrt{2}}\eta \rho  \right\} \,. 
\end{equation*}
\end{lemma}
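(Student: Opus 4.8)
The plan is to read off the exact one-step change in the $y$-coordinate from the un-approximated SAM update \eqref{eq:SAM without approximation}, namely $y_{t+1} = y_t - \eta \ell'_t\,(x_t + \rho y_t z_t^{-1})$, and then to show that the subtracted quantity is both nonnegative and bounded below by one of the two target rates. First I would note that the hypothesis $y_t \geq |x_t|$, together with the standing assumption $x_t,y_t \neq 0$, forces $y_t > 0$, so I may drop absolute values on $y_t$. By \Cref{lem:sign SAM} we have $\sgn(\ell'_t) = \sgn(x_t y_t) = \sgn(x_t)$ and $\sgn(x_t + \rho y_t z_t^{-1}) = \sgn(x_t)$, so the per-step decrease equals $y_t - y_{t+1} = \eta |\ell'_t|\,\bigl(|x_t| + \rho y_t (x_t^2+y_t^2)^{-\nicefrac 12}\bigr)$, which is manifestly $\geq 0$. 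The single geometric input I will reuse is that $y_t \geq |x_t|$ yields $(x_t^2+y_t^2)^{-\nicefrac 12} \geq (\sqrt 2\, y_t)^{-1}$, so the parenthesized factor $a \triangleq |x_t| + \rho y_t (x_t^2+y_t^2)^{-\nicefrac 12}$ obeys $a \geq \rho/\sqrt2$ (and also the weaker $a \geq \rho y_t (x_t^2+y_t^2)^{-\nicefrac 12}$, which I will need separately).

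The core of the argument is a case split on the magnitude of the argument of $\ell'_t$, namely $u_t \triangleq (x_t + \rho y_t z_t^{-1})(y_t + \rho x_t z_t^{-1})$, whose modulus factorizes as $|u_t| = a\cdot b$ with $b \triangleq y_t + \rho|x_t|(x_t^2+y_t^2)^{-\nicefrac 12} \geq y_t$. In the large-argument case $|u_t| \geq c$, Assumption~\ref{A:bdd_der} gives $|\ell'_t| \geq c/2$; combined with $a \geq \rho/\sqrt2$ this produces $y_t - y_{t+1} \geq \frac{c}{2\sqrt2}\eta\rho$. In the small-argument case $|u_t| < c$, the near-origin part of Assumption~\ref{A:bdd_der} gives $|\ell'_t| \geq |u_t|/2 = ab/2$; substituting into $y_t-y_{t+1} = \eta|\ell'_t|\, a$ yields $y_t - y_{t+1} \geq \frac{\eta}{2} a^2 b$, and then bounding $a \geq \rho y_t(x_t^2+y_t^2)^{-\nicefrac 12}$, $b \geq y_t$, and invoking $y_t \geq |x_t|$ once more collapses $a^2 b$ to order $\rho^2 y_t$, giving $y_t - y_{t+1} = \Omega(\eta\rho^2 y_t)$. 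Taking the smaller of the two guaranteed decreases delivers the claimed $\min\{\frac12\eta\rho^2 y_t, \frac{c}{2\sqrt2}\eta\rho\}$ bound.

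The main obstacle is the small-argument case, where both roles of the normalization must be tracked at once: the two normalized perturbation terms $\rho y_t(x_t^2+y_t^2)^{-\nicefrac 12}$ and $\rho |x_t|(x_t^2+y_t^2)^{-\nicefrac 12}$ appear inside $u_t$, and one must check that the product $a^2 b$ retains the $\rho^2 y_t$ scaling even though the $(x_t^2+y_t^2)^{-\nicefrac 12}$ factors shrink $a$. The point is that $y_t \geq |x_t|$ pins $(x_t^2+y_t^2)^{-\nicefrac 12}$ from below by $(\sqrt2\,y_t)^{-1}$ tightly enough that $a^2 b \gtrsim \rho^2 y_t$; this encodes the mechanism by which SAM keeps contracting $y_t$ — either geometrically through the $\Omega(\eta\rho^2)\,y_t$ term or by a fixed increment through the $\Omega(\eta\rho)$ term — all the way down to $y_t \leq |x_t|$. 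I expect the precise constants to be loose artifacts of these elementary estimates rather than tight, so I would keep the bookkeeping coarse and only track the two scalings $\rho^2 y_t$ and $\rho$.
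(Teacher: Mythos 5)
Your proposal is correct and follows essentially the same route as the paper's proof: the exact update $y_{t+1}=y_t-\eta\ell'_t(x_t+\rho y_t z_t^{-1})$, the sign analysis from \Cref{lem:sign SAM}, the bound $(x_t^2+y_t^2)^{-\nicefrac 12}\ge(\sqrt 2\,y_t)^{-1}$, and the same case split on $\lvert(x_t+\rho y_tz_t^{-1})(y_t+\rho x_tz_t^{-1})\rvert$ versus $c$ via Assumption~\ref{A:bdd_der}, yielding the $\frac{c}{2\sqrt2}\eta\rho$ and $\Omega(\eta\rho^2)y_t$ rates respectively. The only (shared) blemish is that in the small-argument case the chain actually gives $\frac14\eta\rho^2 y_t$ rather than $\frac12\eta\rho^2 y_t$ since $(y_tz_t^{-1})^2\ge\nicefrac12$ only; the paper's own proof has the identical constant slack, and it is immaterial to how the lemma is used.
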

\begin{proof}
Since $y_t \geq |x_t|$, we have $(x_t^2+y_t^2)^{-\nicefrac 12}\ge (2y_t^2)^{-\nicefrac 12}=(\sqrt 2 y_t)^{-1}$. Consider two cases:
\begin{enumerate}
\item [i)] if $\lvert (x_t+\rho y_t z_t^{-1}) (y_t+\rho x_t z_t^{-1})\rvert\ge c$, then $\lvert \ell_t'\rvert\in [\frac c2,1]$ according to Assumption \ref{A:bdd_der}. Moreover, since $\sign(\ell'_t) = \sign(x_ty_t)=\sign(x_t)$, it follows that
\begin{align}
y_{t+1}&=   y_t  -\eta |\ell_t'| \cdot \rho (x_t^2+y_t^2)^{-\nicefrac 12}y_t - \eta \ell_t' \cdot x_t\nonumber\\
&=\left (1-\eta |\ell_t'|   \rho  (x_t^2+y_t^2)^{-\nicefrac 12}\right )y_t-\eta |\ell'_t|   |x_t|\nonumber\\
&\le \left (1-\eta c_t \rho  (\sqrt 2 y_t)^{-1}\right )y_t\nonumber\\
&\le y_t-\eta \rho \frac c2 \frac{1}{\sqrt 2}=y_t-\Omega(\eta \rho).\label{eq:Phase II Case 1}
\end{align}
\item [ii)] otherwise, it follows from \ref{A:bdd_der} that 
\begin{align*}
|\ell'_t| &\geq \frac{1}{2} |(x_t+\rho y_t z_t^{-1}) (y_t+\rho x_t z_t^{-1})|\,.
\end{align*} 

As $\sgn(x_t)=\sgn(\rho y_tz_t^{-1})$ and $\sgn(y_t)=\sgn(\rho x_t z_t^{-1})$, it follows that 
\begin{align*}
|\ell_t'| \cdot \rho z_t^{-1}y_t  &\geq \frac{1}{2} |(x_t+\rho y_t z_t^{-1}) (y_t+\rho x_t z_t^{-1})|\cdot \rho z_t^{-1}y_t\\
&\geq  \frac{1}{2} (\rho y_t z_t^{-1})^2 y_t 
\end{align*}

we must have
\begin{equation}
y_{t+1}\le y_t-\frac{1}{2}\eta(\rho y_t z_t^{-1})^2 (y_t)\leq y_t - \frac{1}{2}\eta \rho^2 y_t = (1-\Omega(\eta \rho^2))y_t,\label{eq:Phase II Case 2}
\end{equation}
where we used $z_t^{-1}\ge (\sqrt 2 y_t)^{-1}$ and thus $y_t z_t^{-1}\geq 1/\sqrt{2}$.
\end{enumerate}

Combining \autoref{eq:Phase II Case 1} and \autoref{eq:Phase II Case 2}, we obtain the desired conclusion. 
\end{proof}

\subsection{\pthree: Both $x_t$ and $y_t$ Oscillates Around the Origin}
It only remains to show that the iteration never escapes the origin.
\begin{definition}[\pthree]
We denote by ``\textbf{\pthree}'' all iterations after $
\ttwo$.
\end{definition}
\begin{theorem}[Main Conclusion of \pthree; SAM Case]
\label{lem:phase3 SAM}
For all $t>\ttwo$, we have $\lvert x_t\rvert,\lvert y_t\rvert\le B$ where $B=\max\left\{\frac 2 c \sqrt{\frac{ \eta}{ \cy\gamma}},~\eta \rho +\sqrt{\cy \eta \gamma}\right\}$.
\end{theorem}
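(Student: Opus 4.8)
The plan is to show that the square $\mathcal B \triangleq \{(x,y) : \lvert x\rvert \le B,\ \lvert y\rvert \le B\}$ is forward-invariant under the SAM update \eqref{eq:SAM simplified}, and that the iterate already lies in $\mathcal B$ at the moment \pthree begins (i.e.\ at time $\ttwo$); the stated bound for all $t > \ttwo$ then follows by a one-line induction. The heavy lifting is already contained in the Phase~II lemmas, so the proof is mostly a matter of assembling them correctly.

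First I would verify the base case, namely $\lvert x_{\ttwo}\rvert \le B$ and $\lvert y_{\ttwo}\rvert \le B$. The bound on $x$ is immediate: the corollary to \Cref{lem:bounded x_t in Phase II} gives $\lvert x_t\rvert \le B$ throughout \ptwo, in particular at $t=\ttwo$. For $y$ I would use the definition of $\ttwo$ (\Cref{def:definition of Phase II}) as the first time $y_t < \lvert x_t\rvert$, together with the monotonicity of \Cref{lem:monotone SAM}. If $y_{\ttwo}\ge 0$, then $0 \le y_{\ttwo} < \lvert x_{\ttwo}\rvert \le B$. If instead $y_{\ttwo} < 0$, then since we are still in \ptwo at the previous step we have $y_{\ttwo-1} > 0$ (as $y_t \ge \lvert x_t\rvert \ge 0$ there), so the sign of $y$ flipped in a single step, and the sign-flip estimate derived in the proof of \Cref{lem:bounded y_t in Phase II} yields $\lvert y_{\ttwo}\rvert \le \eta\rho + \sqrt{\cy\gamma\eta} \le B$. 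Hence $\lvert y_{\ttwo}\rvert \le B$ in both cases.

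Next I would establish invariance of $\mathcal B$. The key observation is that $B \le \sqrt{\cy\gamma/\eta}$ under the standing assumptions of \Cref{thm:SAM ell(xy) formal}: the term $\eta\rho + \sqrt{\cy\eta\gamma}$ is at most $\sqrt{\cy\gamma/\eta}$ by the hypothesis of \Cref{lem:bounded y_t in Phase II}, and $\tfrac 2c\sqrt{\eta/(\cy\gamma)} \le \sqrt{\cy\gamma/\eta}$ for sufficiently small $\eta$. Consequently, whenever $(x_t,y_t)\in\mathcal B$ we automatically have $\lvert y_t\rvert \le \sqrt{\cy\gamma/\eta}$, which is exactly the bound that the proof of \Cref{lem:bounded x_t in Phase II} uses; that lemma therefore gives $\lvert x_{t+1}\rvert \le B$. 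Since the loss $\loss(x,y)=\ell(xy)$ and the update \eqref{eq:SAM simplified} are symmetric under the exchange $x\leftrightarrow y$ (the perturbation scale $z_t$ and the scalar $\ell_t'$ are both invariant under this swap), the identical argument with the roles of the coordinates interchanged, now using $\lvert x_t\rvert \le B \le \sqrt{\cy\gamma/\eta}$, yields $\lvert y_{t+1}\rvert \le B$. Thus $(x_{t+1},y_{t+1})\in\mathcal B$, so $\mathcal B$ is forward-invariant.

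Combining the two steps by induction, $(x_t,y_t)\in\mathcal B$ for every $t\ge\ttwo$, which is precisely the claim. I do not anticipate a genuine obstacle, as the dynamical estimates are already in place; the only point demanding care is the base case, where one must rule out $y$ overshooting to a large negative value at the instant it first drops below $\lvert x_t\rvert$, and this is controlled exactly by the one-step sign-flip bound from \Cref{lem:bounded y_t in Phase II}.
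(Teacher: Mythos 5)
Your proposal is correct and follows essentially the same route as the paper: establish that the iterate lies in the box $\{\lvert x\rvert,\lvert y\rvert\le B\}$ at the start of \pthree (via the \ptwo bound on $\lvert x_t\rvert$ and the one-step sign-flip estimate for $y$), then prove forward-invariance of that box by symmetry and induct. The only cosmetic difference is that you anchor the induction at $t=\ttwo$ while the paper anchors it at $t=\ttwo+1$ (its \Cref{lem:bounded y_t after Phase II} and \Cref{lem:Phase IV SAM}); your explicit treatment of the case $y_{\ttwo}<0$ is, if anything, slightly more careful than the paper's.
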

\begin{remark*}
As we will see shortly, when entering \pthree, both $\lvert x_t\rvert$ and $\lvert y_t\rvert$ are bounded by $\mathcal O(\eta \rho + \sqrt \eta)$. This theorem essentially says that they can never exceed this bound in \pthree. In other words, in \pthree, both $x_t$ and $y_t$ are oscillating around $0$.
\end{remark*}
\begin{proof}
We first show the following lemma, ruling out the possibility of $\lvert y_t\rvert>B$ after \ptwo:
\begin{lemma}\label{lem:bounded y_t after Phase II}
If $\eta \rho + \eta B\le B$,\footnote{Recall that $B=\max\{\frac 2 c \sqrt{\frac{ \eta}{ \cy\gamma}},~\eta \rho +\sqrt{\cy \eta \gamma}\}$, we only need to ensure that $\eta B\le \sqrt{\cy \eta \gamma}$, which can be done by $\eta\le \min\{cC\gamma,\frac 14 C \gamma \rho^{-2},\frac 12\}$. As the RHS is of order $\Omega(1)$, $\eta=\mathcal O(1)$ again suffices.}
then $\lvert y_{\ttwo+1}\rvert\le B=\max\left\{\frac 2 c \sqrt{\frac{ \eta}{ \cy\gamma}},~\eta \rho +\sqrt{\cy \eta \gamma}\right\}$.
\end{lemma}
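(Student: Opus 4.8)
The plan is to treat this as a one-step stability statement: assuming the iterate is already controlled at the end of \ptwo, a single SAM step cannot carry $|y|$ above the threshold $B$. The engine is the same sign dichotomy used in \Cref{lem:bounded y_t in Phase II} -- either the sign of $y$ is preserved across the step, in which case \Cref{lem:monotone SAM} forces $|y|$ to shrink, or the sign flips, in which case the ``overshoot'' is at most $\eta\rho+\eta|x_t|$. The improvement over \Cref{lem:bounded y_t in Phase II} is that at the transition we may now use the sharp bound $|x_t|\le B$ coming from the corollary to \Cref{lem:bounded x_t in Phase II}, rather than the cruder $|x_t|\le\sqrt{\cy\gamma/\eta}$.

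First I would pin down $y_{\ttwo}$ itself, since \Cref{def:definition of Phase II} only tells us $y_{\ttwo}<|x_{\ttwo}|$, which by itself gives no lower bound when $y_{\ttwo}<0$. If $y_{\ttwo}\ge 0$, then $y_{\ttwo}<|x_{\ttwo}|\le B$ directly. If $y_{\ttwo}<0$, then because every earlier round satisfies $y_t\ge|x_t|\ge 0$, the sign of $y$ must have flipped between $\ttwo-1$ and $\ttwo$; applying \eqref{eq:SAM simplified} at time $\ttwo-1$ with $y_{\ttwo-1}>0$ (so $\sign(\ell'_{\ttwo-1})=\sign(x_{\ttwo-1})$ by \Cref{lem:sign SAM}) and bounding $|\ell'|\le 1$ together with $y_{\ttwo-1}(x_{\ttwo-1}^2+y_{\ttwo-1}^2)^{-\nicefrac 12}\le 1$ yields $y_{\ttwo}\ge y_{\ttwo-1}-\eta\rho-\eta|x_{\ttwo-1}|\ge-(\eta\rho+\eta B)\ge -B$. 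Either way $|y_{\ttwo}|\le B$. Here one must check that $\ttwo-1>\tone$ so that the corollary's bound $|x_{\ttwo-1}|\le B$ applies; this holds because right after \pone we have $y_{\tone+1}=\Omega(\sqrt{\gamma/\eta})\gg|x_{\tone+1}|=\mathcal O(\sqrt{\gamma\eta})$, forcing $\ttwo\ge\tone+2$.

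With $|x_{\ttwo}|\le B$ and $|y_{\ttwo}|\le B$ in hand, the second step propagates the bound to $y_{\ttwo+1}$. If $\sign(y_{\ttwo+1})=\sign(y_{\ttwo})$, then \Cref{lem:monotone SAM} gives $|y_{\ttwo+1}|\le|y_{\ttwo}|\le B$. If the sign flips, say $y_{\ttwo}>0>y_{\ttwo+1}$, the overshoot computation repeated at time $\ttwo$ gives $y_{\ttwo+1}\ge-\eta\rho-\eta|x_{\ttwo}|\ge-(\eta\rho+\eta B)\ge-B$, so $|y_{\ttwo+1}|\le\eta\rho+\eta B\le B$ by hypothesis; the case $y_{\ttwo}<0<y_{\ttwo+1}$ is identical with all signs reversed. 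The genuinely delicate points are purely bookkeeping rather than analysis: keeping the signs aligned so that the cross term $\eta\ell'_t x_t$ is nonnegative (exactly what \Cref{lem:sign SAM} guarantees) and confirming the corollary to \Cref{lem:bounded x_t in Phase II} covers the index $\ttwo-1$. The hypothesis $\eta\rho+\eta B\le B$ enters only at the final sign-flip estimate, and the footnote's claim that $\eta=\mathcal O(1)$ suffices follows since $B\ge\sqrt{\cy\eta\gamma}=\Omega(\sqrt\eta)$ dominates $\eta B$ once $\eta$ is small.
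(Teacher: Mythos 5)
Your proof is correct and follows essentially the same route as the paper's: the sign-preserved case is dispatched by the monotonicity of \Cref{lem:monotone SAM}, and the sign-flip case by the overshoot bound $y_{\ttwo+1}\ge -\eta\rho-\eta\lvert x_{\ttwo}\rvert\ge -(\eta\rho+\eta B)\ge -B$ using $\lvert x_{\ttwo}\rvert\le B$ from the corollary to \Cref{lem:bounded x_t in Phase II}. Your preliminary step establishing $\lvert y_{\ttwo}\rvert\le B$ even when $y_{\ttwo}<0$ (together with the check that $\ttwo\ge\tone+2$) is in fact slightly more careful than the paper, which asserts $y_{\ttwo}>0$ directly from \Cref{def:definition of Phase II} without addressing a possible sign flip at step $\ttwo-1$.
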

\begin{proof}
The proof will be very similar to \autoref{lem:bounded x_t in Phase II}. Recall that by \autoref{def:definition of Phase II}, we must have $y_{\ttwo}>0$. If $y_{\ttwo+1}>0$ as well, then $\lvert y_{\ttwo+1}\rvert=y_{\ttwo+1}\le \lvert x_{\ttwo+1}\rvert\le B$. Otherwise,
\begin{equation*}
y_{\ttwo+1} = \sgn(y_{\ttwo})y_{\ttwo+1} = |y_{\ttwo}| -\eta |\ell'_{\ttwo}|   \rho  (x_{\ttwo}^2+y_{\ttwo}^2)^{-\nicefrac 12} |y_{\ttwo}|   - \eta |\ell'_{\ttwo}|  |x_{\ttwo}|\ge -\eta \rho -\eta B,
\end{equation*}
where we used $\lvert x_{\ttwo}\rvert\le B$. We proved our claim as $\eta \rho + \eta B\le B$.
\end{proof}

According to \autoref{lem:bounded x_t in Phase II} \autoref{lem:bounded y_t after Phase II}, we have
\begin{equation}\label{eq:bound on x and y when exiting II}
\lvert x_{\ttwo+1}\rvert,\lvert y_{\ttwo+1}\rvert\le B=\max\left\{\frac 2 c \sqrt{\frac{ \eta}{ \cy\gamma}},~\eta \rho +\sqrt{\cy \eta \gamma}\right\}.
\end{equation}
Hence, it only remains to do a reduction, stated as follows.
\begin{lemma}\label{lem:Phase IV SAM}
If $\eta \rho+\eta B\le B$, then $\lvert x_{t+1}\rvert,\lvert y_{t+1}\rvert\le B$ as long as $\lvert x_{t}\rvert,\lvert y_{t}\rvert\le B$.
\end{lemma}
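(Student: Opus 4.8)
The plan is to prove this invariance as a single induction step that mirrors \Cref{lem:bounded x_t in Phase II}, now carried out with the refined radius $B$ in place of the cruder bounds used there. By the symmetry of the update \eqref{eq:SAM simplified} under swapping the roles of $(x_t,y_t)$, it suffices to establish $\lvert x_{t+1}\rvert\le B$; the bound on $\lvert y_{t+1}\rvert$ then follows verbatim with the two coordinates exchanged. I would further reduce to the case $x_t>0$ without loss of generality, since $x_t<0$ is symmetric and $x_t=0$ is ruled out by the standing assumption that $x_t,y_t\neq 0$.

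For the \emph{upper} bound I would simply invoke the monotonicity property \Cref{lem:monotone SAM}: when $x_t>0$ the SAM update strictly decreases $x_t$, so $x_{t+1}<x_t\le B$. The only real content is therefore the \emph{lower} bound $x_{t+1}\ge -B$, which guards against the update overshooting past the $y$-axis. If $x_{t+1}\ge 0$ this is immediate from the upper bound, so I would focus on the case $x_{t+1}<0$. Here I would use \Cref{lem:sign SAM} to write $\sgn(\ell_t')=\sgn(x_ty_t)=\sgn(y_t)$, so that the simplified update \eqref{eq:SAM simplified} reads
\begin{align*}
x_{t+1}=x_t-\eta\lvert\ell_t'\rvert\,\rho\,(x_t^2+y_t^2)^{-\nicefrac12}x_t-\eta\lvert\ell_t'\rvert\,\lvert y_t\rvert,
\end{align*}
in which both subtracted terms are nonnegative because $x_t>0$.

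Bounding these two subtracted terms from above is the crux. Using $\lvert\ell_t'\rvert\le 1$ (Assumption \ref{A:cvx_lips}) together with the elementary fact $x_t(x_t^2+y_t^2)^{-\nicefrac12}\le 1$, I would obtain
\begin{align*}
x_{t+1}\ge x_t-\eta\lvert y_t\rvert-\eta\rho\ge -\eta\lvert y_t\rvert-\eta\rho\ge -\eta B-\eta\rho\ge -B,
\end{align*}
where the middle inequality drops the nonnegative term $x_t$, and the last two use the induction hypothesis $\lvert y_t\rvert\le B$ and the standing assumption $\eta\rho+\eta B\le B$. This yields $\lvert x_{t+1}\rvert\le B$, and the symmetric argument gives $\lvert y_{t+1}\rvert\le B$, closing the induction.

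I do not anticipate a genuine obstacle: the statement is engineered so that the overshoot is exactly absorbed by the hypothesis $\eta\rho+\eta B\le B$. The only point requiring care is the sign bookkeeping in the overshoot case, namely bounding the normalization factor via $(x_t^2+y_t^2)^{-\nicefrac12}\le\min\{\lvert x_t\rvert^{-1},\lvert y_t\rvert^{-1}\}$ so that $\rho\,x_t(x_t^2+y_t^2)^{-\nicefrac12}\le\rho$ — precisely the manipulation already used in \Cref{lem:bounded x_t in Phase II}.
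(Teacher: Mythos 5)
Your proposal is correct and follows essentially the same route as the paper, which proves this lemma by repeating the argument of Lemma~\ref{lem:bounded y_t in Phase II}/\ref{lem:bounded x_t in Phase II}: monotonicity (Lemma~\ref{lem:monotone SAM}) for the upper bound, and the overshoot estimate $x_{t+1}\ge x_t-\eta\rho-\eta\lvert y_t\rvert\ge-\eta\rho-\eta B\ge-B$ (via $\lvert\ell_t'\rvert\le 1$ and $(x_t^2+y_t^2)^{-\nicefrac12}\lvert x_t\rvert\le 1$) for the lower bound, with symmetry handling the $y$-coordinate.
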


\begin{proof}
Without loss of generality, let $x_t>0$ for some $t>\ttwo$ and show that $\lvert x_{t+1}\rvert\le B$ -- which is identical to the proof of \autoref{lem:bounded y_t in Phase II}. By symmetry, the same also holds for $\lvert x_t\rvert$.
\end{proof}

Our conclusion thus follows from an induction based on \autoref{eq:bound on x and y when exiting II} and \autoref{lem:Phase IV SAM}.
\end{proof}  

\section{Omitted Proof of USAM Over Single-Neuron Linear Networks}
\label{sec:appendix_usam_ell(xy)}

\begin{theorem}[Formal Version of \autoref{thm:USAM ell(xy) informal}]\label{thm:USAM ell(xy) formal}
For a loss $\loss$ over $(x,y)\in \mathbb R^2$ defined as $\loss(x,y)=\ell(xy)$ where $\ell$ satisfies Assumptions \ref{A:cvx_lips}, \ref{A:origin_rate}, and \ref{A:bdd_der}, if the initialization $(x_0,y_0)$ satisfies:
\begin{equation*}
y_0\ge x_0\gg 0,\quad y_0^2-x_0^2=\frac \gamma \eta,\quad y_0^2=\cy \frac \gamma \eta,
\end{equation*}
where $\gamma\in [\frac 12,2]$ and $\cy\ge 1$ is constant, then for all hyper-parameter configurations $(\eta,\rho)$ 
such that\footnote{Similar to \autoref{thm:SAM ell(xy) formal}, these conditions are satisfied once $\eta=o(1)$ and $\rho=\mathcal O(1)$.}
\begin{equation*}
\eta\le \frac 12,\quad \eta\rho \le \min\left \{\frac 12,\cy^{-1}\gamma^{-1}\right \},\quad 2\sqrt \cy \eta^{-1}=\mathcal O(\min\{\eta^{-1}\rho^{-1},\eta^{-2}\}),\quad \cy \gamma \left (1+\rho \cy \frac \gamma \eta\right )\ge 16,
\end{equation*}
we can characterize the initial and final phases of the trajectory of USAM (defined in \autoref{eq:definition of USAM}) by the following two theorems, whose main conclusions are informally summarized below:
\begin{enumerate}
\item \textbf{(\autoref{lem:phase1 USAM})} Until $x_t=\mathcal O(\sqrt{\gamma \eta})$, we must have $y_t=\Omega(\sqrt{\gamma/\eta})$, and $x_{t+1}\le x_t-\Omega(\sqrt{\gamma\eta})$.
\item \textbf{(\autoref{lem:phase3 usam})} Once $(1+\rho y_t^2) y_t^2\lesssim \nicefrac 2\eta$, $\lvert x_t\rvert$ decays exponentially and thus USAM gets stuck.
\end{enumerate}
\end{theorem}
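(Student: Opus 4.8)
The plan is to mirror the phase decomposition used for SAM in \Cref{thm:SAM ell(xy) formal}, adapting each step to the un-normalized ascent. First I would record the exact USAM recursion for this loss. Writing $g_t = \ell'(x_t y_t)$, the ascent point is $(p_t, q_t) = (x_t + \rho g_t y_t,\, y_t + \rho g_t x_t)$, so that $x_{t+1} = x_t - \eta\,\ell'(p_t q_t)\,q_t$ and $y_{t+1} = y_t - \eta\,\ell'(p_t q_t)\,p_t$. Since $\ell$ is even, $\ell'(p_t q_t)$ and $g_t$ both carry the sign of $x_t y_t$; this yields a monotonicity lemma (both $|x_t|$ and $|y_t|$ are non-increasing, exactly as \Cref{lem:monotone SAM}) and the inequality $|p_t q_t| \geq |x_t y_t|$, which lets me transfer the linear-tail bound of Assumption \ref{A:bdd_der} from the product $x_t y_t$ to the perturbed product.

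For \pone (\Cref{lem:phase1 USAM}) I would track the gap $y_t^2 - x_t^2$. A direct expansion, using $p_t y_t - q_t x_t = \rho g_t(y_t^2-x_t^2)$ and $p_t^2 - q_t^2 = -(y_t^2-x_t^2)(1-\rho^2 g_t^2)$, gives the clean multiplicative recursion
\[
y_{t+1}^2 - x_{t+1}^2 = \bigl(y_t^2 - x_t^2\bigr)\bigl[1 - 2\eta\rho\,\ell'(p_tq_t)\,g_t - \eta^2 \ell'(p_tq_t)^2(1 - \rho^2 g_t^2)\bigr].
\]
Because $\ell'(p_tq_t)\,g_t \geq 0$ and $|\ell'|\leq 1$, the bracket is at least $1 - 2\eta\rho - \eta^2$, so iterating gives $y_t^2 - x_t^2 \geq (1-2\eta\rho-\eta^2)^t\,\gamma/\eta$. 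Defining $\tzero$ as the largest $t$ with $(1-2\eta\rho-\eta^2)^t \geq \tfrac14$ — which is $\Theta(\min\{\eta^{-1}\rho^{-1},\eta^{-2}\})$, matching the hypotheses — yields $y_t \geq \tfrac12\sqrt{\gamma/\eta}$ throughout. With $y_t$ this large and $x_t$ above the \pone threshold, the perturbed product exceeds $c$, so $\ell'(p_tq_t)\geq c/2$ and the $x$-update loses at least $\eta\cdot\tfrac{c}{2}\cdot|q_t| = \Omega(\sqrt{\gamma\eta})$ per step, forcing $\tone \leq \tzero$ in the same way as \Cref{lem:phase1}.

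For the final phase (\Cref{lem:phase3 usam}) the heart of the argument is the contraction of $|x_t|$ once the entry condition $(1+\rho y_\tenter^2)y_\tenter^2 < 2/\eta$ holds. In the regime $|x_t y_t|\leq c$, Assumptions \ref{A:origin_rate}--\ref{A:bdd_der} place $\ell'$ in its linear band $\tfrac12|s|\leq|\ell'(s)|\leq|s|$, giving $p_t \approx (1+\rho y_t^2)x_t$ and $q_t \approx (1+\rho x_t^2)y_t$, so the $x$-update reads $x_{t+1} = x_t\bigl[1 - \eta\,\ell'(p_tq_t)\,q_t/x_t\bigr]$ with $\ell'(p_tq_t)\,q_t/x_t \propto (1+\rho y_t^2)(1+\rho x_t^2)^2 y_t^2$. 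The entry inequality makes this bracket strictly below $1$ in magnitude, hence $|x_t|$ decays geometrically; moreover, since $y_t$ is monotonically non-increasing and $z\mapsto(1+\rho z)z$ is increasing, the entry inequality is self-sustaining and the contraction persists for all later $t$. The $y$-update then obeys $|y_{t+1}-y_t|\lesssim \eta\,x_t^2\,y_t\,(1+\rho y_t^2)^2$, which is summable against the geometric decay of $x_t^2$; bounding this sum yields the claimed $y_\infty^2 \gtrsim (1-\eta^2-\rho^2)\,y_\tenter^2$.

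The main obstacle is the self-referential coupling in the final phase: the contraction rate of $x_t$ depends on $y_t$, while the residual drift of $y_t$ depends on the decaying $x_t$. I would resolve this with a simultaneous induction — assuming $y_t$ stays within a small band around $y_\tenter$ (which also keeps $y_t$ bounded away from $0$, so the contraction rate stays bounded away from $1$), proving the one-step contraction of $|x_t|$, and then using that rate to bound $\sum_{s\geq t} x_s^2$ and hence the drift of $y_t$, thereby closing the band. Additional care is needed because the linear-regime estimates are only one-sided, so the extra $(1+\rho x_t^2)^2$ factor must be absorbed with enough slack in the entry condition to keep the contraction factor strictly below $1$; this is where the quantitative hypotheses on $\eta$, $\rho$, $\cy$, and $\gamma$ enter.
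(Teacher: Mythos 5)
Your proposal follows essentially the same route as the paper: the same two-phase decomposition, the identical multiplicative recursion for $y_t^2-x_t^2$ (with factor $1-2\eta\rho\,\ell'(p_tq_t)g_t-\eta^2\ell'(p_tq_t)^2(1-\rho^2g_t^2)$) and the same definition of $\tzero$ in the initial phase, and the same linear-regime contraction of $\lvert x_t\rvert$ followed by summing a geometric series to bound the residual drift of $y_t$ in the final phase. The only caveat is that the monotonicity lemma actually gives movement toward the axes (possibly overshooting past zero), not literally that $\lvert x_t\rvert$ and $\lvert y_t\rvert$ are non-increasing, but the band-around-$y_{\tenter}$ induction you describe already absorbs this.
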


Different from \autoref{thm:SAM ell(xy) formal}, there is no characterization of \ptwo here, which means the iterates can also stop above the threshold $\tilde y_{\text{USAM}}^2$ defined in \autoref{eq:threshold in USAM} (the technical reason is sketched in \autoref{sec:technical ell(xy)}, i.e., SAM gradients non non-negligible when $y_t$ is large, while USAM gradients vanish once $\lvert x_t\rvert$ is small). 
However, we remark that the main takeaway of \autoref{thm:USAM ell(xy) formal} is to contrast SAM (which always attains $y_\infty^2\approx 0$) with USAM (which must stop once $(1+\rho y_t^2)y_t^2\lesssim \nicefrac 2\eta$).

\subsection{Basic Properties and Notations}
Recall the update rule of USAM in \autoref{eq:definition of USAM}: $w_{t+1}\gets w_t-\eta \nabla \loss(w_t+\rho \nabla \loss(w_t))$.
Still writing $w_t$ as $[x_t\quad y_t]^\top$, we have
\begin{align}
w_{t+1}
&=\begin{bmatrix}x_t\\y_t\end{bmatrix}-\eta \nabla \loss\left (\begin{bmatrix}x_t\\y_t\end{bmatrix}+\rho \ell'(x_ty_t)\begin{bmatrix}y_t\\x_t\end{bmatrix}\right )\nonumber\\
&=\begin{bmatrix}x_t\\y_t\end{bmatrix}-\eta \ell'\big ((x_t+\rho \ell'(x_ty_t)y_t)(y_t+\rho \ell'(x_ty_t)x_t)\big )\begin{bmatrix}y_t+\rho \ell'(x_ty_t)x_t\\x_t +\rho \ell'(x_ty_t)y_t\end{bmatrix}.\label{eq:USAM without approximation}
\end{align}

Due to the removal of normalization, $\ell'$ are taken twice at different points in \autoref{eq:USAM without approximation}. For simplicity, we denote $\tilde \ell'_t=\ell'(x_ty_t)$ and $\ell'_t=\ell'((x_t+\rho \tilde \ell_t'y_t)(y_t+\rho \tilde \ell_t'x_t))$. The update rule can be rewritten as:
\begin{equation}\label{eq:USAM simplified}
x_{t+1}=x_t-\eta \ell_t'\cdot (y_t+\rho \tilde \ell_t' x_t),\quad y_{t+1}=y_t-\eta \ell_t'\cdot (x_t+\rho \tilde \ell_t' y_t).
\end{equation}

Similar to the SAM case, we shall approximate $\tilde \ell_t'$ and $\ell_t'$ according to the magnitude of $x_ty_t$ and $(x_t+\rho \tilde \ell_t' y_t)(y_t+\rho \tilde \ell_t' x_t)$. We first have the following lemma similar to \autoref{lem:sign SAM}:
\begin{lemma}[Sign of Gradient in USAM] \label{lem:sign USAM}
If $x_t\neq 0,y_t\neq 0$, then  $\sign(x_t) = \sign(x_t+\rho \tilde \ell_t' y_t)$ and $\sign(y_t)=\sign(y_t+\rho \tilde \ell_t' x_t)$.
In particular, if $x_t\neq 0,y_t\neq 0$, then  $\sign(\ell'_t) = \sgn(\tilde \ell_t') = \sign(x_t y_t)$.
\end{lemma}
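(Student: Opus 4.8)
The plan is to mirror the sign-chasing argument used for the SAM case in \Cref{lem:sign SAM}, adapting it to the fact that USAM perturbs each coordinate of the iterate by a factor involving $\rho\tilde\ell_t'$ rather than by the normalized quantity $\rho z_t^{-1}$. The structural fact I would establish first is the sign of $\tilde\ell_t'=\ell'(x_ty_t)$: since $\ell$ is convex and even with its unique minimum at $0$ (Assumptions \ref{A:cvx_lips} and \ref{A:origin_rate}), its derivative $\ell'$ is odd and nondecreasing with $\ell'(s)=0$ only at $s=0$, so $\sgn(\ell'(s))=\sgn(s)$ for every $s\neq0$. Because $x_t\neq0$ and $y_t\neq0$ force $x_ty_t\neq0$, this gives $\sgn(\tilde\ell_t')=\sgn(x_ty_t)$ and in particular $\tilde\ell_t'\neq0$.

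Next I would check that the ascent step does not flip the sign of either coordinate. For the first coordinate the perturbation term is $\rho\tilde\ell_t'y_t$, whose sign is $\sgn(\rho)\,\sgn(\tilde\ell_t')\,\sgn(y_t)=\sgn(x_ty_t)\,\sgn(y_t)=\sgn(x_t)$; hence $x_t$ and $\rho\tilde\ell_t'y_t$ share the sign $\sgn(x_t)$, and their sum therefore satisfies $\sgn(x_t+\rho\tilde\ell_t'y_t)=\sgn(x_t)$. The identical computation with the roles of $x_t$ and $y_t$ interchanged yields $\sgn(y_t+\rho\tilde\ell_t'x_t)=\sgn(y_t)$, which is precisely the first assertion of the lemma.

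Finally, for the ``in particular'' claim, I would combine these two sign identities. The argument of $\ell'$ defining $\ell_t'$ is the product $(x_t+\rho\tilde\ell_t'y_t)(y_t+\rho\tilde\ell_t'x_t)$, whose sign is $\sgn(x_t)\,\sgn(y_t)=\sgn(x_ty_t)$; in particular both factors are nonzero, so this product is nonzero and lies away from the origin. Applying $\sgn(\ell'(s))=\sgn(s)$ once more gives $\sgn(\ell_t')=\sgn(x_ty_t)$, and together with $\sgn(\tilde\ell_t')=\sgn(x_ty_t)$ from the first step this establishes $\sgn(\ell_t')=\sgn(\tilde\ell_t')=\sgn(x_ty_t)$.

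There is no genuine obstacle here; it is a direct sign computation. The only point requiring care is justifying $\sgn(\ell'(s))=\sgn(s)$ from the assumptions on $\ell$ rather than taking it for granted, together with observing that $x_t,y_t\neq0$ keeps $x_ty_t$ and both perturbed coordinates strictly away from the zero-derivative point at the origin, so that every sign appearing in the argument is well defined.
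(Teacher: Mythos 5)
Your proposal is correct and follows essentially the same route as the paper: establish $\sgn(\tilde\ell_t')=\sgn(x_ty_t)$, observe that the perturbation terms $\rho\tilde\ell_t'y_t$ and $\rho\tilde\ell_t'x_t$ share the signs of $x_t$ and $y_t$ respectively so the sums preserve signs, and then read off $\sgn(\ell_t')$ from the sign of the product of the perturbed coordinates. The only (welcome) extra care you take is spelling out why $\sgn(\ell'(s))=\sgn(s)$ for $s\neq 0$, which the paper attributes tersely to Assumption \ref{A:cvx_lips}; note that the strictness of this sign identity really relies on the lower bounds in Assumption \ref{A:bdd_der}.
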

\begin{proof}
First of all, $\sgn(\tilde \ell_t') = \sign(x_ty_t)$ according to Assumption \ref{A:cvx_lips}.
Therefore,
\begin{align*}
\sgn(x_t+\rho \tilde \ell_t' y_t)&=\sgn(x_t+\rho \sgn(x_ty_t)y_t)=\sgn(x_t),\\
\sgn(y_t+\rho \tilde \ell_t' x_t)&=\sgn(y_t+\rho \sgn(x_ty_t)x_t)=\sgn(y_t),
\end{align*}
giving our first two claims. The last conclusion follows by definition.
\end{proof}

Moreover, we also have the following lemma analog to \autoref{lem:monotone SAM}:
\begin{lemma}  \label{lem:monotone USAM}
Suppose that  $x_t,y_t\geq 0$.
Then, the following basic properties hold: 
\begin{itemize}
\item If $x_t>0$, then $x_{t+1}<x_t$. If $x_t<0$, then $x_{t+1}>x_t$.
\item If $y_t>0$, then $y_{t+1}<y_t$. If $y_t<0$, then $y_{t+1}>y_t$.
\end{itemize} 
\end{lemma}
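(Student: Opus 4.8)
The plan is to mirror the proof of \Cref{lem:monotone SAM} verbatim, replacing the SAM update and sign lemma with their USAM counterparts \eqref{eq:USAM simplified} and \Cref{lem:sign USAM}. The entire argument reduces to checking that the quantity subtracted from $x_t$ (resp. $y_t$) in a single USAM step always carries the same sign as $x_t$ (resp. $y_t$), which forces each coordinate to move monotonically toward its axis. I would treat the $x$-coordinate with $x_t>0$ in detail; the case $x_t<0$ is identical after flipping signs, and the $y$-coordinate follows from the $x\leftrightarrow y$ symmetry of \eqref{eq:USAM simplified}.

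Concretely, from \eqref{eq:USAM simplified} the one-step increment of the first coordinate is
\[
x_{t+1}-x_t = -\eta\,\ell_t'\,\bigl(y_t+\rho\tilde\ell_t'\,x_t\bigr).
\]
By \Cref{lem:sign USAM} we have $\sign(\ell_t')=\sign(x_ty_t)$ and $\sign(y_t+\rho\tilde\ell_t'\,x_t)=\sign(y_t)$, so the product $\ell_t'(y_t+\rho\tilde\ell_t'\,x_t)$ has sign $\sign(x_ty_t)\cdot\sign(y_t)=\sign(x_t)$, where the last equality uses $y_t\neq 0$ so that $\sign(y_t)^2=1$. Since $\eta>0$, the increment is strictly negative when $x_t>0$, giving $x_{t+1}<x_t$, and strictly positive when $x_t<0$, giving $x_{t+1}>x_t$. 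The symmetric computation $y_{t+1}-y_t=-\eta\,\ell_t'(x_t+\rho\tilde\ell_t'\,y_t)$, together with $\sign(x_t+\rho\tilde\ell_t'\,y_t)=\sign(x_t)$ from \Cref{lem:sign USAM}, yields sign $\sign(x_ty_t)\cdot\sign(x_t)=\sign(y_t)$ and hence the stated monotonicity for $y_t$.

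To upgrade the inequalities from weak to strict, I would verify that none of the factors vanishes: under the hypothesis $x_t,y_t\neq 0$, \Cref{lem:sign USAM} guarantees $y_t+\rho\tilde\ell_t'\,x_t$ and $x_t+\rho\tilde\ell_t'\,y_t$ share the nonzero signs of $y_t$ and $x_t$, so their product is nonzero and therefore $\ell_t'\neq 0$ by Assumption~\ref{A:bdd_der} (which gives $\lvert\ell'(s)\rvert\ge \min\{c/2,\lvert s\rvert/2\}>0$ for every $s\neq 0$). There is no genuine obstacle in this lemma; the only point requiring care is tracking signs through the \emph{two} derivative evaluations $\tilde\ell_t'=\ell'(x_ty_t)$ and $\ell_t'=\ell'\bigl((x_t+\rho\tilde\ell_t'y_t)(y_t+\rho\tilde\ell_t'x_t)\bigr)$, and this is exactly the content that \Cref{lem:sign USAM} packages, so invoking it makes the argument immediate.
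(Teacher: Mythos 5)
Your proof is correct and follows essentially the same route as the paper's: write the one-step increment from \eqref{eq:USAM simplified}, use \Cref{lem:sign USAM} to determine its sign, and conclude by symmetry for the remaining cases. You are in fact slightly more careful than the paper, whose displayed chain ends with the weak inequality $x_{t+1}\le x_t$ even though the lemma asserts strictness; your observation that $\ell_t'\neq 0$ (via Assumption~\ref{A:bdd_der} together with the nonvanishing of the perturbed product guaranteed by \Cref{lem:sign USAM}) is exactly what upgrades this to the strict inequality claimed.
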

\begin{proof}
We only prove the statement for $x_t>0$ as the proof is similar for other cases. Recall the dynamics of $x_t$:
\begin{align*}
x_{t+1} &= x_t-\eta \ell_t'\cdot (y_t+\rho \tilde \ell_t' x_t).  
\end{align*} 
Since $x_t\ne 0$, \autoref{lem:sign USAM} implies that  $\sign(\ell'_t) = \sign(\tilde \ell'_t) = \sign(x_ty_t)$, and so 
\begin{align*}
x_{t+1} &= x_t-\eta \ell_t'\cdot (y_t+\rho \tilde \ell_t' x_t)\\
&= x_t - \eta \sgn(x_ty_t) \lvert \ell_t'\rvert \cdot (y_t+\rho \sgn(x_ty_t) \lvert \tilde \ell_t'\rvert x_t)\\
&=x_t - \eta \lvert \ell_t'\rvert \cdot (\lvert y_t\rvert+\rho \lvert \tilde \ell_t'\rvert x_t)\le x_t,
\end{align*} 
where the last line uses the assumption that $x_t>0$.
\end{proof}

\subsection{\pone: $x_t$ Decreases Fast while $y_t$ Remains Large}
The definition of \pone is very similar to the one of SAM -- and the conclusion is also analogue, although the proofs are slightly different because of the different update rule.
\begin{definition}[\pone]\label{def:definition of Phase I USAM}
Let $\tone$ be the largest time  such that $x_t> \frac 12\sqrt{\gamma\eta}$ for all $t\le \tone$.
We denote by ``\textbf{\pone}'' the iterations $[0, \tone ]$.
\end{definition}

\begin{theorem}[Main Conclusion of \pone; USAM Case]\label{lem:phase1 USAM}
For $\tzero =\Theta(\min\{\eta^{-1}\rho^{-1},\eta^{-2}\})$,
we have $y_t
\geq \frac 12\sqrt{\gamma/\eta}$ for all $t\le \min\{\tzero,\tone\}$ under the conditions of \autoref{thm:USAM ell(xy) formal}.
Moreover, we have $x_{t+1}\le x_t-\frac 12\sqrt{\cy \gamma \eta}$ for all $t\le \min\{\tzero,\tone\}$, which consequently infers $\tone\le \tzero$ under the conditions of \autoref{thm:USAM ell(xy) formal}, i.e., $\min\{\tzero,\tone\}$ is just $\tone$. This shows the first claim of \autoref{thm:USAM ell(xy) formal}.
\end{theorem}
\begin{proof}
This theorem is a combination of \autoref{lem:y_t cannot be too small in I USAM} and \autoref{lem:x_t decreases fast in I USAM}.
\end{proof}

Similar to \autoref{lem:y_t cannot be too small in I SAM}, $y_t$ in USAM also cannot be too small in the first few iterations:
\begin{lemma}\label{lem:y_t cannot be too small in I USAM}
There exists $\tzero =\Theta(\min\{\eta^{-1}\rho^{-1},\eta^{-2}\})$ such that for $t\le  \tzero $,  we have  $|y_t|
\geq \frac 12\sqrt{\gamma/\eta}$. 
Assuming $\eta\le \frac 12$, then $y_t \geq \frac 12 \sqrt{\gamma / \eta}$ for $t\leq \min\{\tone,\tzero\}$.
\end{lemma}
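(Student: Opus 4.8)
The plan is to mirror the two-claim structure of the SAM analog \Cref{lem:y_t cannot be too small in I SAM}: first establish a \emph{squared} lower bound $y_t^2 \ge \frac 14\gamma/\eta$ valid for all $t \le \tzero$ (which simultaneously pins down the definition of $\tzero$), and then upgrade it to the \emph{raw} bound $y_t \ge \frac 12\sqrt{\gamma/\eta}$ on $t \le \min\{\tone,\tzero\}$ by ruling out a sign flip of $y_t$.

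For the squared bound, the key observation is that the USAM update \eqref{eq:USAM simplified} factors cleanly. Writing $a_t \triangleq 1 - \eta\rho\,\ell_t'\tilde\ell_t'$ and $b_t \triangleq \eta\ell_t'$, the update reads $y_{t+1} = a_t y_t - b_t x_t$ and $x_{t+1} = a_t x_t - b_t y_t$, so the difference of squares telescopes exactly:
\begin{equation*}
y_{t+1}^2 - x_{t+1}^2 = (a_t^2 - b_t^2)(y_t^2 - x_t^2).
\end{equation*}
I would then lower-bound the scalar factor $a_t^2 - b_t^2$ uniformly. By \Cref{lem:sign USAM}, $\ell_t'$ and $\tilde\ell_t'$ share the sign of $x_t y_t$, so their product satisfies $\ell_t'\tilde\ell_t' \in [0,1]$ (using $1$-Lipschitzness from Assumption \ref{A:cvx_lips}); hence $a_t \in [1-\eta\rho,\,1]$ and $\lvert b_t\rvert \le \eta$, giving $a_t^2 - b_t^2 \ge (1-\eta\rho)^2 - \eta^2 \ge 1 - 2\eta\rho - \eta^2$. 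Unrolling from $y_0^2 - x_0^2 = \gamma/\eta$ yields $y_t^2 - x_t^2 \ge (1 - 2\eta\rho - \eta^2)^t\,\gamma/\eta$, and I define $\tzero$ as the largest $t$ with $(1-2\eta\rho-\eta^2)^t \ge \frac 14$; since $\log\frac 14 / \log(1-2\eta\rho-\eta^2) = \Theta((2\eta\rho+\eta^2)^{-1})$, this gives $\tzero = \Theta(\min\{\eta^{-1}\rho^{-1},\eta^{-2}\})$ as claimed, and $y_t^2 \ge y_t^2 - x_t^2 \ge \frac 14\gamma/\eta$ for $t \le \tzero$.

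For the raw bound I would argue inductively that $y_t \ge \frac 12\sqrt{\gamma/\eta}$ (in particular $y_t>0$) on $t \le \min\{\tone,\tzero\}$. By \Cref{def:definition of Phase I USAM} we have $x_t > 0$ throughout \pone, so \Cref{lem:monotone USAM} gives $x_t \le x_0 \le y_0 = \sqrt{\cy\gamma/\eta}$, and moreover $\ell_t',\tilde\ell_t' > 0$ there. Bounding the two subtracted terms then gives $y_{t+1} = a_t y_t - \eta\ell_t' x_t \ge (1-\eta\rho)y_t - \eta x_t \ge (1-\eta\rho)y_t - \sqrt{\cy\gamma\eta}$. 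Inserting the inductive hypothesis $y_t \ge \frac 12\sqrt{\gamma/\eta}$ and using that $\eta$ is small enough that $1 - \eta\rho \ge 2\sqrt\cy\,\eta$ (which holds in the regime $y_0 \gg 0$, i.e. $\eta = o(1)$, under the stated conditions), one checks $\frac 12\sqrt{\gamma/\eta}\,(1-\eta\rho) \ge \sqrt{\cy\gamma\eta}$, hence $y_{t+1} \ge 0$. Combining $y_{t+1}\ge 0$ with the already-established $y_{t+1}^2 \ge \frac 14\gamma/\eta$ forces $y_{t+1} \ge \frac 12\sqrt{\gamma/\eta}$, closing the induction.

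The main obstacle, and the step where USAM differs structurally from SAM, is the squared recursion: in SAM the $\rho$-perturbation has fixed norm (after normalization), producing an additive error term, whereas here the perturbation scales with the gradient, which is precisely what makes the difference of squares factor \emph{multiplicatively} through the common factor $a_t^2 - b_t^2$. Getting the sign bookkeeping right so that $\ell_t'\tilde\ell_t' \ge 0$ (and therefore $a_t \le 1$) is what keeps this factor below $1$ yet bounded away from $0$; the remaining arithmetic — verifying that $\tzero$ is well-defined and positive, and that the small-$\eta$ conditions give $1-\eta\rho \ge 2\sqrt\cy\,\eta$ — is routine.
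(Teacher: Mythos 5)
Your proposal is correct and follows essentially the same route as the paper: the same multiplicative recursion $y_{t+1}^2-x_{t+1}^2=(a_t^2-b_t^2)(y_t^2-x_t^2)$ with the factor bounded below by $1-2\eta\rho-\eta^2$ via the sign/Lipschitz bounds on $\ell_t'\tilde\ell_t'$, the same definition of $\tzero$, and the same induction ruling out a sign flip of $y_t$ using $x_t\le y_0$ to upgrade the squared bound to the raw bound. The $a_t,b_t$ factorization is just a cleaner packaging of the paper's expanded computation.
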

\begin{proof}
The proof idea follows from \autoref{lem:y_t cannot be too small in I SAM}. Let $\tzero$ be defined as follows:
\begin{align}
\tzero \triangleq \max\left\{t~:~ \left (1-\eta^2-2\eta \rho \right )^t \geq \frac{1}{4}\right\}\,.
\end{align}
Then, it follows that $\tzero =\Theta(\min\{\eta^{-1}\rho^{-1},\eta^{-2}\})$.
We first show that $y_t^2$ cannot be too small until $\tzero$:
\begin{claim}\label{claim:square of y_t LB USAM}
For all $t\le \tzero$, $y_t^2\ge \frac 14 \gamma/\eta$.
\end{claim}
\begin{proof}
Prove by induction. Initially, $y_0^2=\cy \gamma /  \eta\ge \frac 14 \gamma / \eta$. Then consider some $t<\tzero$ such that $y_{t'}^2\ge \frac 14 \gamma / \eta$ for all $t'\le t$. By \autoref{eq:USAM simplified}, we have
\begin{align*}
y_{t+1}^2-x_{t+1}^2&=\left (y_t-\eta \ell_t'\cdot (x_t+\rho \tilde \ell_t' y_t)\right )^2-\left (x_t-\eta \ell_t'\cdot (y_t+\rho \tilde \ell_t' x_t)\right )^2\\
&=(y_t^2-x_t^2)+(\eta \ell_t')^2 \left ((x_t+\rho \tilde \ell_t' y_t)^2-(y_t+\rho \tilde \ell_t' x_t)^2\right )+\\
&\quad 2x_t \cdot \eta \ell_t'\cdot (y_t+\rho \tilde \ell_t' x_t)-2y_t \cdot \eta \ell_t'\cdot (x_t+\rho \tilde \ell_t' y_t)\\
&=(y_t^2-x_t^2)+(\eta \ell_t')^2(x_t^2-y_t^2)+(\eta \ell_t')^2(\rho \tilde \ell_t')^2 (y_t^2-x_t^2)+2\eta \ell_t'\cdot \rho \tilde \ell_t'(x_t^2-y_t^2)\\
&=\left (1-(\eta \ell_t')^2+(\eta \ell_t')^2(\rho \tilde \ell_t')^2-2\eta \ell_t'\cdot \rho \tilde \ell_t'\right )(y_t^2-x_t^2).
\end{align*}

Using Assumption \ref{A:cvx_lips}, we know that $\lvert \ell_t'\rvert\le 1$ and $\lvert \tilde \ell_t'\rvert\le 1$, giving
\begin{equation*}
y_{t+1}^2-x_{t+1}^2\ge \left (1-\eta^2-2\eta \rho\right )(y_t^2-x_t^2)\ge \cdots \ge \left (1-\eta^2-2\eta \rho\right )^{t+1}(y_0^2-x_0^2).
\end{equation*}

By definition of $\tzero$ and condition that $t<\tzero$, we have $y_{t+1}^2\ge y_{t+1}^2-x_{t+1}^2\ge \frac 14 \gamma / \eta$.
\end{proof}

\begin{claim}\label{claim:raw y_t LB USAM}
Suppose that $\sqrt{\cy \gamma \eta}\le \frac 12\sqrt{\gamma/\eta}$, i.e., $\eta\le \frac 12\sqrt \cy$. Then $y_t\ge \frac 12\sqrt{\gamma/\eta}$ for all $t\le t_0$.
\end{claim}
\begin{proof}
We still consider the maximum single-step difference in $y_t$. Suppose that for some $t'<\min\{\tzero,\tone\}$, $y_{t'}\ge \frac 12\sqrt{\gamma/\eta}$ for all $t'\le t$. According to \autoref{eq:USAM simplified} and Assumption \ref{A:cvx_lips},
\begin{equation*}
y_{t+1}=y_t-\eta \ell_t'\cdot (x_t+\rho \tilde \ell_t'y_t)\ge y_t-\eta (x_t+\rho y_t)\ge \left (1-\eta \rho\right )y_t-\eta \sqrt{\cy \gamma/\eta},
\end{equation*}
where the last inequality used \autoref{lem:monotone USAM} to conclude that $x_t\le x_0\le y_0=\sqrt{\cy \gamma/\eta}$. Hence, as the first term is non-negative and $\sqrt{\cy \gamma \eta}\le \frac 12 \sqrt{\gamma/\eta}$, we must have $y_{t+1}\ge 0$, which implies $y_{t+1}\ge \frac 12\sqrt{\gamma/\eta}$ according to \autoref{claim:square of y_t LB USAM}.
\end{proof}

Putting these two claims together gives our conclusion. Note that the condition $\eta\le \frac 12\sqrt \cy$ in \autoref{claim:raw y_t LB USAM} can be inferred from the assumptions $\eta\le \frac 12$ and $\cy \ge 1$ made in \autoref{thm:USAM ell(xy) formal}.
\end{proof}

After showing that $y_t$ never becomes too small, we are ready to show that $x_t$ decreases fast enough.
\begin{lemma}\label{lem:x_t decreases fast in I USAM}
For  $t\leq \min\{\tzero,\tone\}$,  we have 
\begin{align*}
x_{t+1} \le x_t-\frac 12\sqrt{\gamma\eta}\,.
\end{align*} 
In particular, if $\eta$ is sufficiently small s.t. $2\sqrt \cy \eta^{-1}\leq \tzero$,
then we must have $\tone \leq 2\sqrt \cy \eta^{-1}-1<\tzero$. 
\end{lemma}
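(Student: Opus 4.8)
The plan is to mirror the SAM argument of \Cref{lem:phase1}, exploiting that throughout \pone every coordinate stays positive so that no cancellation can occur in the descent step. Fix $t\le \min\{\tzero,\tone\}$. By \Cref{def:definition of Phase I USAM} we have $x_t>\frac 12\sqrt{\gamma\eta}>0$, and by \Cref{lem:y_t cannot be too small in I USAM} we have $y_t\ge \frac 12\sqrt{\gamma/\eta}>0$; these two lower bounds are the only facts about the iterates I will need.

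First I would record the signs. Since $x_t,y_t>0$, \Cref{lem:sign USAM} gives $\ell_t',\tilde\ell_t'>0$ together with $y_t+\rho\tilde\ell_t'x_t>0$, so the $x$-update in \Cref{eq:USAM simplified} reads $x_{t+1}=x_t-\eta\ell_t'(y_t+\rho\tilde\ell_t'x_t)$ with a strictly positive subtracted term. Dropping the nonnegative $\rho\tilde\ell_t'x_t$ contribution yields the clean bound $x_{t+1}\le x_t-\eta\ell_t'y_t$. The remaining task is to lower-bound $\ell_t'$: its argument is $s_t=(x_t+\rho\tilde\ell_t'y_t)(y_t+\rho\tilde\ell_t'x_t)\ge x_ty_t$, since each factor dominates $x_t$ and $y_t$ respectively, and the product satisfies $x_ty_t\ge \frac14\gamma$, which is a constant of order $\gamma$ and hence places $s_t$ in the ``linear-tail'' regime $s\ge c$ of Assumption \ref{A:bdd_der}, giving $\ell_t'\ge c/2$. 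Feeding $y_t\ge \frac 12\sqrt{\gamma/\eta}$ into $\eta\ell_t'y_t$ then produces a per-step drop of order $\sqrt{\gamma\eta}$, which is the first claim (the stated constant $\frac 12$ follows once the constant coming from the $\ell'$ lower bound is tracked).

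With the per-step decrease in hand, the second claim is a counting argument. Since $x_t$ starts at $x_0\le y_0=\sqrt{\cy\gamma/\eta}$ and falls by at least $\frac 12\sqrt{\gamma\eta}$ at each step, it must cross below the \pone threshold $\frac 12\sqrt{\gamma\eta}$ within $\tone'\triangleq \sqrt{\cy\gamma/\eta}\big/(\tfrac 12\sqrt{\gamma\eta})=2\sqrt\cy\,\eta^{-1}$ steps; hence $\tone\le \tone'-1=2\sqrt\cy\,\eta^{-1}-1$. Invoking the hypothesis $2\sqrt\cy\,\eta^{-1}\le \tzero$ then closes the loop, showing $\tone<\tzero$, so that $\min\{\tzero,\tone\}=\tone$ and the lower bound on $y_t$ supplied by \Cref{lem:y_t cannot be too small in I USAM} is legitimately available at every step of \pone.

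The main subtlety is exactly this self-referential bookkeeping. The per-step estimate is only valid while $t\le \tzero$ (that is where $y_t\ge \frac 12\sqrt{\gamma/\eta}$ holds), yet the conclusion I want is a bound on $\tone$; the resolution is the a-priori counting bound $\tone\le 2\sqrt\cy\,\eta^{-1}-1$, which the hypothesis forces to lie strictly below $\tzero$, so the two phases never conflict. A secondary point, absent from the GD analysis and genuinely different from the SAM case, is that $\rho$ now enters the loss derivative twice (through both $\tilde\ell_t'$ and $\ell_t'$); the sign lemma \Cref{lem:sign USAM} is precisely what guarantees these extra $\rho$-terms point in the direction of the descent rather than against it, so they can only enlarge the drop and may safely be discarded when proving a lower bound on $x_t-x_{t+1}$.
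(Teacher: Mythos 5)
Your argument follows essentially the same route as the paper's: positivity of every term via \Cref{lem:sign USAM}, discarding the nonnegative $\rho\tilde\ell_t'x_t$ contribution, multiplying the lower bounds on $\ell_t'$ and $y_t$, and then the counting argument for $\tone$. The one substantive discrepancy is the per-step constant. The paper asserts $x_{t+1}\le x_t-\eta\lvert y_t\rvert\le x_t-\frac12\sqrt{\gamma\eta}$ directly (implicitly using $\lvert\ell_t'\rvert\bigl(\lvert y_t\rvert+\rho\lvert\tilde\ell_t'\rvert\lvert x_t\rvert\bigr)\ge\lvert y_t\rvert$, which is itself loose since Assumption \ref{A:cvx_lips} only gives $\lvert\ell_t'\rvert\le 1$), whereas your detour through the linear-tail bound of Assumption \ref{A:bdd_der} honestly yields $\ell_t'\ge c/2$ and hence a drop of $\frac{c}{4}\sqrt{\gamma\eta}$; your parenthetical that the stated constant $\frac12$ ``follows once the constant is tracked'' is therefore not right as written (it would require $c=2$). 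This is immaterial downstream — it merely rescales the counting bound from $2\sqrt{\cy}\,\eta^{-1}$ to $\frac{4\sqrt{\cy}}{c}\eta^{-1}$, exactly as in the SAM analogue \Cref{lem:phase1} — but you should also note that invoking the regime $\lvert s\rvert\ge c$ from $x_ty_t\ge\gamma/4$ implicitly assumes $c\le\gamma/4$, a smallness condition on $c$ that the paper likewise leaves tacit.
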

\begin{proof}
Since $x_t>0$ for all $t\leq \tone$, we have $\sign(\ell'_t) = \sign(x_ty_t) = \sign(y_t)$ from \autoref{lem:sign USAM} \autoref{lem:y_t cannot be too small in I USAM}, so the USAM update \eqref{eq:USAM simplified} simplifies as follows according to Assumption \ref{A:cvx_lips} and \autoref{lem:y_t cannot be too small in I USAM}
\begin{equation*}
x_{t+1}=x_t-\eta \lvert \ell_t'\rvert \cdot (\lvert y_t\rvert+\rho \lvert \ell_t'\rvert \lvert x_t\rvert)\le x_t-\eta \lvert y_t\rvert\le x_t-\frac 12\sqrt{\gamma \eta}. 
\end{equation*}
Let $t_1'=\frac{\sqrt{\cy \gamma / \eta}}{\frac 12\sqrt{\gamma \eta}}=2\sqrt \cy \eta^{-1}$, then $t_1'-1<t_0$ and thus $x_{t_1'}<\frac 12\sqrt{\gamma \eta}$, i.e., $t_1<t_0$ must holds. 
\end{proof} 

\subsection{\pthree: $y_t$ Gets Trapped Above the Origin}
Now, we are going to consider the final-stage behavior of USAM.

\begin{theorem}[Main Conclusion of \pthree; USAM Case]\label{lem:phase3 usam}
After \pone, we always have $\lvert x_t\rvert\le \sqrt{\cy \gamma \eta}$ and $\lvert y_t\rvert\le \sqrt{\cy \gamma/\eta}$. Moreover, once we have $\eta (1+\rho y_\tenter^2)y_\tenter^2=2-\epsilon$ for some $\tenter\ge \tone$ and $\epsilon>0$, we must have $\lvert x_{t+1}\rvert\le \exp(-\Omega(\epsilon))\lvert x_t\rvert$ for all $t\ge \tenter$. This consequently infers that $y_\infty^2$, which is defined as $\liminf_{t\to \infty} y_t^2$, satisfies $y_\infty^2\ge (1-4 \cy \gamma (\eta + \rho \cy \gamma)^2 \epsilon^{-1}) y_\tenter^2$. As $\epsilon$ is a constant independent of $\eta$ and $\rho$ and can be arbitrarily close to $0$, this shows the second claim of \autoref{thm:USAM ell(xy) formal}.
\end{theorem}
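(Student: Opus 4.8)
The plan is to dispatch the two displayed claims in turn, reusing the sign and monotonicity structure already established for USAM. For the boundedness claim, I would mirror the Phase~II/III arguments for SAM (\Cref{lem:bounded x_t in Phase II,lem:bounded y_t in Phase II}): \Cref{lem:monotone USAM} already gives $|x_{t+1}|\le|x_t|$ and $|y_{t+1}|\le|y_t|$ whenever no sign flip occurs, so the only thing to control is overshoot. Using \Cref{eq:USAM simplified} together with $|\ell_t'|\le 1$ (Assumption~\ref{A:cvx_lips}) and $|\tilde\ell_t'|\le|x_ty_t|$ (Assumption~\ref{A:origin_rate}), a sign-flipped $x$-update is bounded below by $-\eta|\ell_t'|y_t(1+\rho x_t^2)\ge-\sqrt{\cy\gamma\eta}\,(1+\rho x_t^2)$, and since $\rho x_t^2=\mathcal O(\eta)$ this keeps $|x_t|\le\sqrt{\cy\gamma\eta}$; the same reasoning with the roles swapped keeps $|y_t|\le y_0=\sqrt{\cy\gamma/\eta}$. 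This also supplies the running estimates $\rho x_t^2=\mathcal O(\eta)=o(1)$ and $y_t^2\le\cy\gamma/\eta$ that the rest of the proof needs.

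The crux is the exponential contraction of $|x_t|$ once the threshold is crossed. By \Cref{lem:sign USAM} the $x$-update factor $Q_t\triangleq\eta\ell_t'(y_t+\rho\tilde\ell_t'x_t)/x_t$ is strictly positive, so $x_{t+1}=(1-Q_t)x_t$ and everything reduces to showing $Q_t\in[\Omega(\epsilon),\,2-\Omega(\epsilon)]$, which forces $|1-Q_t|\le1-\Omega(\epsilon)\le\exp(-\Omega(\epsilon))$. In the regime where both arguments $x_ty_t$ and $u_t\triangleq(x_t+\rho\tilde\ell_t'y_t)(y_t+\rho\tilde\ell_t'x_t)$ lie in $[-c,c]$, Assumptions~\ref{A:origin_rate} and \ref{A:bdd_der} give $\tilde\ell_t'=\phi\,x_ty_t$ and $\ell_t'=\theta\,u_t$ with $\theta,\phi\in[\tfrac12,1]$; substituting makes the factors of $x_t$ cancel and yields the clean expression $Q_t=\eta\theta(1+\rho\phi y_t^2)(1+\rho\phi x_t^2)^2y_t^2$. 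The upper bound then follows from the preserved threshold $\eta(1+\rho y_t^2)y_t^2\le 2-\epsilon$ together with $(1+\rho x_t^2)^2=1+\mathcal O(\eta)$, while the lower bound follows from $Q_t\ge\tfrac14\eta(1+\rho y_t^2)y_t^2\gtrsim\tfrac12$ as long as $y_t$ remains close to $y_\tenter$; both conclusions rely on $\epsilon$ being a constant.

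I would wrap this into a joint induction that simultaneously maintains $|x_{t+1}|\le(1-\Omega(\epsilon))|x_t|$ and $y_t^2\ge(1-o(1))y_\tenter^2$. Given the contraction up to time $t$, the per-step change of $y$ obeys $|y_{t+1}-y_t|\le\eta|\ell_t'|\,|x_t+\rho\tilde\ell_t'y_t|\lesssim\eta x_t^2y_t(1+\rho y_t^2)^2$, whose sum over $t\ge\tenter$ is geometric in $x_t^2$ and hence small; this keeps $y_t^2$ near $y_\tenter^2$, which in turn preserves $\eta(1+\rho y_t^2)y_t^2\le 2-\epsilon$ (the quantity is monotone in $y_t^2$ and $y_t^2$ only decreases), closing the induction. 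Finally, summing $y_t^2-y_{t+1}^2\le 2y_t|y_{t+1}-y_t|\le 2\eta x_t^2y_t^2(1+\rho y_t^2)^2$, bounding $y_t^2\le\cy\gamma/\eta$ and $\eta^2(1+\rho y_t^2)^2\le(\eta+\rho\cy\gamma)^2$, and using $\sum_{t\ge\tenter}x_t^2\le x_\tenter^2/\Omega(\epsilon)$ with $x_\tenter^2\le\cy\gamma\eta$ and $y_\tenter^2(1+\rho y_\tenter^2)\ge1/\eta$ to trade $x_\tenter^2/y_\tenter^2$ for one more factor of $\eta(1+\rho y_\tenter^2)$, gives $y_\tenter^2-y_\infty^2\le 4\cy\gamma(\eta+\rho\cy\gamma)^2\epsilon^{-1}y_\tenter^2$, the claimed bound.

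The main obstacle is the two-sided control of $Q_t$ outside the linear regime: at $\tenter$ the arguments $x_ty_t$ and especially $u_t$ can exceed $c$ (when $\rho y_t^2$ is large), where Assumption~\ref{A:bdd_der} only gives $|\ell_t'|\in[c/2,1]$ rather than the proportionality $\ell_t'\asymp u_t$, so the cancellation of $x_t$ breaks. I expect to handle this transient separately by showing that the ``saturated'' band is thin—it forces $|x_t|=\Theta(\sqrt\eta)$—and that $x_t$ must exit it downward into the linear regime (after which the clean contraction above runs unimpeded) rather than blow up. Making this step airtight, and pinning down the constants hidden in $\Omega(\epsilon)$ so that the contraction factor is genuinely $\exp(-\Omega(\epsilon))$ and not merely $\exp(-\Omega(\epsilon^2))$, is where the real care is needed.
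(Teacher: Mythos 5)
Your proposal is correct and follows essentially the same route as the paper's own proof: boundedness after \pone by induction with overshoot control, a two-sided sandwich of $\ell_t'$ and $\tilde\ell_t'$ via Assumptions~\ref{A:origin_rate} and \ref{A:bdd_der} that pins the multiplicative update factor of $x_t$ inside $[\Omega(1),\,2-\Omega(\epsilon)]$, and a geometric summation of $x_t^2$ to bound the total drop of $y_t^2$ (cf.\ \Cref{lem:bounded x_t and y_t in Phase II USAM,lem:stopping position of USAM}). The ``main obstacle'' you flag---that the linear-regime bounds $|\ell'(s)|\in[|s|/2,\,|s|]$ are only guaranteed for $|s|\le c$ while $x_ty_t$ and $u_t$ may exceed $c$---is silently elided in the paper's proof as well, so your joint induction preserving the threshold and your proposed treatment of the saturated transient would only tighten the argument, not change it.
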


\begin{proof}
The first conclusion is an analog of \autoref{lem:bounded x_t in Phase II} and \autoref{lem:bounded y_t in Phase II} (which allows a simpler analysis thanks to the removal of normalization), as we will show in \autoref{lem:bounded x_t and y_t in Phase II USAM}. The second part requires a similar (but much more sophisticated) analysis to Lemma 15 of \citet{ahn2022learning}, which we will cover in \autoref{lem:stopping position of USAM}.
\end{proof}

\begin{lemma}\label{lem:bounded x_t and y_t in Phase II USAM}
Suppose that $\lvert x_t\rvert\le \sqrt{\cy \gamma \eta}$, $\lvert y_t\rvert\le \sqrt{\cy \gamma / \eta}$. Assuming $\eta \rho \le 1$, then $\lvert x_{t+1}\rvert\le \sqrt{\cy \gamma \eta}$ and $\lvert y_{t+1}\rvert\le \sqrt{\cy \gamma / \eta}$ as well. Furthermore, $\lvert x_t\rvert\le \sqrt{\cy \gamma \eta}$ and $\lvert y_t\rvert\le \sqrt{\cy \gamma / \eta}$ hold for all $t>\tone$.
\end{lemma}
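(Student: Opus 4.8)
The plan is to establish the one-step invariance first and then bootstrap it into the global bound by induction starting at the end of \pone. Throughout I would work from the simplified update \Cref{eq:USAM simplified}, the sign identities of \Cref{lem:sign USAM}, and the monotonicity of \Cref{lem:monotone USAM}. Because $\ell$ is even, the dynamics are symmetric under $x\mapsto -x$ and $y\mapsto -y$, so it suffices to treat the quadrant $x_t,y_t>0$, where \Cref{lem:sign USAM} together with Assumption \ref{A:cvx_lips} gives $\ell_t',\tilde\ell_t'\in(0,1]$.

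For the single-step claim I would split on the sign of $x_{t+1}$. If $x_{t+1}\ge 0$, then \Cref{lem:monotone USAM} yields $0\le x_{t+1}<x_t\le\sqrt{\cy\gamma\eta}$ and there is nothing to prove. If $x_{t+1}<0$, I would rewrite $-x_{t+1}=\eta\ell_t' y_t + x_t(\eta\rho\ell_t'\tilde\ell_t'-1)$ from \Cref{eq:USAM simplified}; the decisive observation is that the unnormalized perturbation $\rho\tilde\ell_t' x_t$ scales with $x_t$ itself, so under the hypothesis $\eta\rho\le 1$ (and $\ell_t'\tilde\ell_t'\le 1$) the bracket $\eta\rho\ell_t'\tilde\ell_t'-1$ is nonpositive and the entire $x_t$-contribution cancels. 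This leaves $|x_{t+1}|\le \eta\ell_t' y_t\le \eta|y_t|\le\sqrt{\cy\gamma\eta}$. The argument for $y_{t+1}$ is identical with the roles swapped: in the overshoot case $|y_{t+1}|\le\eta\ell_t' x_t\le\eta|x_t|\le\sqrt{\cy\gamma}\,\eta^{3/2}\le\sqrt{\cy\gamma/\eta}$, where the last inequality uses $\eta\le 1$ and the inductive bound $|x_t|\le\sqrt{\cy\gamma\eta}$.

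For the ``furthermore'' I would induct on $t>\tone$, using the one-step claim as the inductive step. The delicate point, and the main obstacle, is the base case at $t=\tone+1$: the natural hypothesis $|x_{\tone}|\le\sqrt{\cy\gamma\eta}$ is \emph{false}, since during \pone the coordinate $x_t$ is of order $\sqrt{1/\eta}$ rather than $\sqrt{\eta}$, so I cannot simply invoke the one-step lemma at the boundary. Instead I would handle the entry step directly, using only $x_{\tone},y_{\tone}>0$ (by \Cref{def:definition of Phase I USAM}) together with the Phase-I monotone bounds $x_{\tone}\le x_0\le\sqrt{\cy\gamma/\eta}$ and $y_{\tone}\le y_0=\sqrt{\cy\gamma/\eta}$ from \Cref{lem:monotone USAM}. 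The same cancellation $x_{\tone}(\eta\rho\ell_t'\tilde\ell_t'-1)\le 0$ — again the precise place where $\eta\rho\le 1$ is essential — then gives $|x_{\tone+1}|\le\eta y_{\tone}\le\sqrt{\cy\gamma\eta}$, and symmetrically $|y_{\tone+1}|\le\eta x_{\tone}\le\sqrt{\cy\gamma\eta}\le\sqrt{\cy\gamma/\eta}$. With both bounds secured at $t=\tone+1$, the induction closes. It is worth flagging that this is exactly where USAM is tamer than SAM: because the ascent step $\rho\nabla\loss$ vanishes with the gradient, the large coordinate is self-correcting and one recovers the clean radius $\sqrt{\cy\gamma\eta}$, whereas SAM's fixed-norm perturbation forced the looser threshold $B=\eta\rho+\sqrt{\cy\eta\gamma}$ in \Cref{lem:bounded x_t in Phase II}.
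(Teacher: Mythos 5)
Your proof is correct and follows essentially the same route as the paper's: \Cref{lem:monotone SAM}-style monotonicity (\Cref{lem:monotone USAM}) disposes of the non-overshoot case, the $\eta\rho\le 1$ cancellation of the $\rho\tilde\ell_t'x_t$ term controls the overshoot, and induction from $t=\tone+1$ yields the global claim. Your explicit handling of the entry step is in fact slightly more careful than the paper's, which reads $\lvert x_{\tone+1}\rvert\le\frac 12\sqrt{\gamma\eta}$ directly off \Cref{def:definition of Phase I USAM} and thereby glosses over the possibility that $x_{\tone+1}$ overshoots to a negative value; just note that in the non-overshoot branch of the entry step the bound $x_{\tone+1}\le\frac 12\sqrt{\gamma\eta}\le\sqrt{\cy\gamma\eta}$ comes from that definition rather than from your cancellation.
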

\begin{proof}
Suppose that $x_{t}\ge 0$ without loss of generality. The case where $x_{t+1}\ge 0$ is trivial by \autoref{lem:monotone USAM}. Otherwise, using \autoref{lem:sign USAM} and \autoref{eq:USAM simplified}, we can write
\begin{equation*}
x_{t+1}=x_t-\eta \ell_t'\cdot (y_t+\rho \tilde \ell_t' x_t)=x_t-\eta \lvert \ell_t'\rvert\cdot (\lvert y_t\rvert+\rho \lvert \tilde \ell_t'\rvert x_t).
\end{equation*}

By Assumption \ref{A:cvx_lips}, $\lvert \ell_t'\rvert$ and $\lvert \tilde \ell_t'\rvert$ are bounded by $1$. By the condition that $\eta \rho \le 1$,
\begin{equation*}
x_{t+1}\ge (1-\eta \rho)x_t-\eta \lvert y_t\rvert\ge -\sqrt{\cy \gamma \eta},
\end{equation*}
where we used $\lvert y_t\rvert\le sqrt{\cy\gamma/\eta}$. Similarly, for $y_{t+1}$, only considering the case where $y_t\ge 0$ and $y_{t+1}\le 0$ suffices. We have the following by symmetry
\begin{equation*}
y_{t+1}=y_t-\eta \ell_t'\cdot (x_t+\rho \tilde \ell_t' y_t)\ge y_t-\eta (\lvert x_t\rvert+\rho y_t)=(1-\eta \rho) y_t-\eta \lvert x_t\rvert\ge -\eta^{1.5} \sqrt{\cy \gamma}\ge -\sqrt{\cy \gamma /\eta},
\end{equation*}
where we used $\lvert x_t\rvert\le \sqrt{\cy \gamma \eta}$.

The second part of the conclusion is done by induction.
According to the definition of \pone, we have $\lvert x_{\tone+1}\rvert\le \frac 12\sqrt{\gamma\eta}$. As $\cy \ge 1\ge \frac 14$, we consequently have $\lvert x_t\rvert\le \sqrt{\cy \gamma \eta}$ for all $t>\tone$ from the first part of the conclusion.
Regarding $\lvert y_t\rvert$, recall \autoref{lem:y_t cannot be too small in I USAM} infers $y_t>0$ for all $t\le \tone+1$ and \autoref{lem:monotone USAM} infers the monotonicity of $y_t$, we have $y_{\tone+1}\le y_0=\sqrt{\cy \gamma /\eta}$. Hence, $\lvert y_t\rvert\le \sqrt{\cy \gamma / \eta}$ for all $t>\tone$ as well.
\end{proof}

Before showing the ultimate conclusion \autoref{lem:stopping position of USAM}, we first show the following single-step lemma:
\begin{lemma}
Suppose that $\eta (1+\rho y_t^2)y_t^2<2$ and $\lvert x_t\rvert\le \sqrt{\cy \gamma \eta}$ for some $t$. Define $\epsilon_t=2-\eta (1+\rho y_t^2)y_t^2$ (then we must have $\epsilon_t\in (0,2)$). Then we have
\begin{equation}\label{eq:decay of x_t in III USAM}
\lvert x_{t+1}\rvert\le \lvert x_t\rvert \exp\left (-\min\left \{(1+\rho \cy \gamma \eta) \epsilon_t,\frac{2-\epsilon_t}{8}\right \}\right ).
\end{equation}

\end{lemma}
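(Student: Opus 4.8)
The plan is to analyze the one–step map for $\lvert x_t\rvert$ directly. By the even symmetry of $\ell$ together with \Cref{lem:sign USAM,lem:monotone USAM}, it suffices to treat the representative case $x_t>0$ (and, after \pone, $y_t>0$); every other sign pattern follows by reflecting coordinates. Writing $\hat x_t = x_t+\rho\tilde\ell_t' y_t$ and $\hat y_t = y_t+\rho\tilde\ell_t' x_t$, \Cref{eq:USAM simplified} gives $x_{t+1}=x_t-\eta\ell_t'\hat y_t$, so the whole statement reduces to controlling the multiplicative factor
\[
\frac{x_{t+1}}{x_t}=1-a_t,\qquad a_t:=\frac{\eta\,\ell_t'\,\hat y_t}{x_t}.
\]
I would first show that $a_t$ is sandwiched tightly around $2-\epsilon_t=\eta(1+\rho y_t^2)y_t^2$, and then convert the resulting two–sided bound on $\lvert 1-a_t\rvert$ into the claimed exponential decay.

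For the sandwich I would invoke \ref{A:origin_rate} and \ref{A:bdd_der}, which give $\tfrac12\lvert s\rvert\le\lvert\ell'(s)\rvert\le\lvert s\rvert$ whenever $\lvert s\rvert\le c$, to replace $\tilde\ell_t'$ and $\ell_t'$ by their arguments up to a factor in $[\tfrac12,1]$. This yields $\hat x_t/x_t\in[1+\tfrac12\rho y_t^2,\,1+\rho y_t^2]$ and $y_t\le\hat y_t\le y_t(1+\rho x_t^2)$, hence
\[
\tfrac14\,\eta(1+\rho y_t^2)y_t^2\ \le\ a_t\ \le\ \eta(1+\rho y_t^2)(1+\rho x_t^2)^2 y_t^2 = (2-\epsilon_t)(1+\rho x_t^2)^2.
\]
The smallness hypotheses ($\lvert x_t\rvert\le\sqrt{\cy\gamma\eta}$ from \Cref{lem:bounded x_t and y_t in Phase II USAM}, together with $\eta\rho\le\cy^{-1}\gamma^{-1}$ and $\eta\le\tfrac12$) bound the correction $(1+\rho x_t^2)^2\le(1+\rho\cy\gamma\eta)^2$, so that $a_t$ lies in an interval of the form $[\tfrac{2-\epsilon_t}{4},\,(2-\epsilon_t)(1+\rho\cy\gamma\eta)^2]$ centered near $2-\epsilon_t$.

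With $a_t$ pinned down, I would split on the sign of $1-a_t$. When $a_t\le 1$ the factor is $1-a_t\le 1-\tfrac{2-\epsilon_t}{4}\le\exp(-\tfrac{2-\epsilon_t}{4})\le\exp(-\tfrac{2-\epsilon_t}{8})$, using $1-u\le e^{-u}$; this produces the second term $\tfrac{2-\epsilon_t}{8}$ of the minimum. When $a_t>1$ (the overshoot regime, which occurs precisely when $y_t$ is near the threshold and $\epsilon_t$ is small) the factor is $a_t-1\le(2-\epsilon_t)(1+\rho\cy\gamma\eta)^2-1$, and I would compare this against $\exp(-(1+\rho\cy\gamma\eta)\epsilon_t)$: the baseline $\rho=0$ case is exactly $1-\epsilon_t\le e^{-\epsilon_t}$, and the factor $(1+\rho\cy\gamma\eta)$ in the exponent is what absorbs the inflation coming from the ascent perturbation. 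Since $\exp(-\min\{A,B\})=\max\{e^{-A},e^{-B}\}$, establishing the appropriate one of these two bounds in each regime gives the claim.

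The main obstacle I anticipate is the overshoot case $a_t>1$: here $\lvert x_{t+1}\rvert$ is the magnitude after a sign flip, and the perturbation-induced factors $(1+\rho x_t^2)^2(1+\rho y_t^2)$ must be shown not to overwhelm the gain $\epsilon_t$ in the exponent — this is where $\eta\rho\le\cy^{-1}\gamma^{-1}$, $\eta\le\tfrac12$, and the \pthree bound $\lvert x_t\rvert\le\sqrt{\cy\gamma\eta}$ are all needed at once and must be tracked carefully. A secondary technical point is that the products $x_t y_t$ and $\hat x_t\hat y_t$ can exceed the radius $c$ where the clean quadratic-derivative bounds hold, forcing a parallel use of the Lipschitz/tail estimates $c/2\le\lvert\ell'\rvert\le 1$ from \ref{A:cvx_lips} and \ref{A:bdd_der}; I would organize this as a subsidiary case split on whether the relevant products lie below or above $c$, checking that the decay rate only improves in the tail regime.
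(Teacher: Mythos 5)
Your overall strategy is the same as the paper's: write the update as $x_{t+1}=(1-a_t)x_t$, sandwich $a_t$ between $\Omega(2-\epsilon_t)$ (via the lower derivative bound in \ref{A:bdd_der}) and $(2-\epsilon_t)(1+\rho x_t^2)^2$ (via the upper bound $\lvert\ell'(s)\rvert\le\lvert s\rvert$ from \ref{A:origin_rate}), and read off the two branches of the minimum from the two sides of the sandwich. The non-overshoot branch ($a_t\le 1$) is correct and matches the paper's upper-bound computation, which yields $x_{t+1}\le\bigl(1-\frac{2-\epsilon_t}{8}\bigr)x_t$.

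The gap is in the overshoot branch, specifically in your claim that ``the factor $(1+\rho \cy\gamma\eta)$ in the exponent is what absorbs the inflation coming from the ascent perturbation.'' That is backwards: enlarging the exponent makes the target $\exp(-(1+\rho\cy\gamma\eta)\epsilon_t)$ \emph{smaller}, i.e.\ the bound you must prove \emph{stronger}, so it cannot compensate for the inflation $(1+\rho x_t^2)^2$ on the other side. Concretely, with $K=1+\rho\cy\gamma\eta$ your overshoot step requires $(2-\epsilon_t)K^2-1\le e^{-K\epsilon_t}$; expanding to first order in $\delta=K-1$ this needs roughly $4\delta\lesssim\epsilon_t^2/2$, i.e.\ $\rho\cy\gamma\eta\lesssim\epsilon_t^2$, which is not implied by the lemma's hypotheses (they allow $\epsilon_t$ arbitrarily small while $\rho\cy\gamma\eta$ is only bounded by a constant). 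Indeed, taking $\ell'(s)=s$ near the origin, $\epsilon_t\to 0$ and $\rho x_t^2$ of constant order gives $\lvert x_{t+1}\rvert/\lvert x_t\rvert=(2-\epsilon_t)(1+\rho x_t^2)^2-1>1$, so the step as written fails. To be fair, the paper's own proof makes essentially the same move (its final chain $x_{t+1}\ge(1-K(2-\epsilon_t))x_t\ge-(1-K\epsilon_t)x_t$ is literally valid only for $K\le 1$), and the inequality really holds only in the regime in which the lemma is later invoked, namely $\epsilon_t=\epsilon$ a fixed constant and $\rho\cy\gamma\eta=\mathcal O(\eta\rho)=o(1)$. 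A clean version of your argument should make this smallness an explicit hypothesis (or absorb it into the constants) rather than attributing the slack to the exponent factor.
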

\begin{proof}
Without loss of generality, assume that $x_t>0$. From \autoref{eq:USAM simplified}, we can write:
\begin{equation*}
x_{t+1}=x_t-\eta \ell_t'\cdot (y_t+\rho \tilde \ell_t' x_t)=x_t-\eta \lvert \ell_t'\rvert\cdot (\lvert y_t\rvert+\rho \lvert \tilde \ell_t'\rvert x_t),
\end{equation*}
where we used $\sgn(\ell_t')=\sgn(\tilde \ell_t')=\sgn(x_ty_t)=\sgn(y_t)$ (\autoref{lem:sign USAM}). By Assumption \ref{A:origin_rate},
\begin{align}
x_{t+1}&\ge x_t-\eta \left \lvert \big (x_t+\rho \tilde \ell_t' y_t\big )\big (y_t+\rho \tilde \ell_t' x_t\big )\right \rvert\big  (\lvert y_t\rvert+\rho \lvert x_ty_t\rvert x_t\big ) \nonumber\\
&=x_t-\eta \big (x_t+\rho \lvert \tilde \ell_t'\rvert \lvert y_t\rvert\big )\big (\lvert y_t\rvert+\rho \lvert \tilde \ell_t'\rvert x_t\big )\big (\lvert y_t\rvert+\rho x_t^2 \lvert y_t\rvert\big ) \nonumber\\
&\ge x_t-\eta (x_t+\rho x_t y_t^2)(\lvert y_t\rvert+\rho x_t^2 \lvert y_t\rvert)\left (\lvert y_t\rvert+\rho x_t^2 \lvert y_t\rvert\right ) \nonumber\\
&=\left (1-\eta (1+\rho y_t^2)(1+\rho x_t^2)^2y_t^2\right )x_t \nonumber\\
&\ge\left (1-(1+\rho \cy \gamma \eta)\eta (1+\rho y_t^2)y_t^2\right )x_t\nonumber\\
&\ge -\left (1-(1+\rho \cy \gamma \eta)\epsilon_t\right )x_t, \label{eq:dynamics of x_t III USAM}
\end{align}
where we used $\lvert x_t\rvert\le \sqrt{\cy \gamma \eta}$.
For the other direction, we have the following by Assumption \ref{A:bdd_der}:
\begin{align}
x_{t+1}&\le x_t-\eta \frac{\lvert (x_t+\rho \tilde \ell_t' y_t)(y_t+\rho \tilde \ell_t' x_t)\rvert}{2}\left (\lvert y_t\rvert+\rho \frac{\lvert x_ty_t\rvert}{2} x_t\right ) \nonumber\\
&=x_t-\eta \frac{(x_t+\rho \lvert \tilde \ell_t'\rvert \lvert y_t\rvert)(\lvert y_t\rvert+\rho \lvert \tilde \ell_t'\rvert x_t)}{2}\left (\lvert y_t\rvert+\rho \frac{x_t^2}{2} \lvert y_t\rvert\right ) \nonumber\\
&\le x_t-\eta \frac{(x_t+\rho x_t y_t^2)(\lvert y_t\rvert+\rho \lvert y_t\rvert x_t^2)}{8}\left (\lvert y_t\rvert+\rho \frac{x_t^2}{2} \lvert y_t\rvert\right ) \nonumber\\
&=\left (1-\frac \eta 8 (1+\rho y_t^2)(1+\rho x_t^2) \left (1+\rho \frac{x_t^2}{2}\right )y_t^2\right )x_t \nonumber\\
&\le \left (1-\frac \eta 8 (1+\rho y_t^2)y_t^2\right )x_t=\left (1-\frac{2-\epsilon}{8}\right )x_t,\label{eq:dynamics of y_t in II USAM}
\end{align}
where we used $(1+\rho x_t^2)\ge 1$ and $(1+\rho \frac{x_t^2}{2})\ge 1$.
\autoref{eq:decay of x_t in III USAM} follows from \autoref{eq:dynamics of x_t III USAM} and \autoref{eq:dynamics of y_t in II USAM}. 
\end{proof}

\begin{lemma}\label{lem:stopping position of USAM}
Let $\tenter$ be such that i) $\eta(1+\rho y_\tenter^2)y_\tenter^2=2-\epsilon$ where $\epsilon\in (0,\frac 29)$ is a constant, and ii) $\lvert x_\tenter\rvert\le \sqrt{\cy \gamma \eta}$. Then we have the following conclusion on $\liminf_{t\to \infty}y_t^2$, denoted by $y_\infty^2$ in short:
\begin{equation*}
y_\infty^2=\liminf_{t\to \infty} y_t^2\ge \big (1-4 \cy \gamma (\eta + \rho \cy \gamma)^2 \epsilon^{-1}\big )y_\tenter^2.
\end{equation*}
\end{lemma}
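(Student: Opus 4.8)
The plan is to bound the total drop $y_\tenter^2-y_\infty^2$ by the claimed budget, exploiting that $|x_t|$ contracts geometrically once $\epsilon_t\ge\epsilon>0$. First I would record that $|y_t|$ is non-increasing for $t\ge\tenter$ (by \Cref{lem:monotone USAM}) and bounded, so $y_\infty^2=\liminf_{t\to\infty}y_t^2=\lim_{t\to\infty}y_t^2$ exists and $y_\tenter^2-y_\infty^2=\sum_{t\ge\tenter}(y_t^2-y_{t+1}^2)$. Since $y_t^2$ decreases, $\epsilon_t=2-\eta(1+\rho y_t^2)y_t^2$ is non-decreasing, hence $\epsilon_t\ge\epsilon$ throughout; together with $|x_t|\le\sqrt{\cy\gamma\eta}$ (from \Cref{lem:bounded x_t and y_t in Phase II USAM}, valid as $\tenter\ge\tone$) this means the one-step contraction \eqref{eq:decay of x_t in III USAM} applies at every step, giving $|x_{t+1}|\le|x_t|e^{-r_t}$ with $r_t=\min\{(1+\rho\cy\gamma\eta)\epsilon_t,(2-\epsilon_t)/8\}\ge\min\{\epsilon,(2-\epsilon_t)/8\}$. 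The proof then reduces to two ingredients: a per-step estimate $y_t^2-y_{t+1}^2=\mathcal O(x_t^2)$, and a uniform positive lower bound $r_{\min}$ on the rates, so that $\sum_{t\ge\tenter}x_t^2$ is a convergent geometric series dominated by $x_\tenter^2/r_{\min}$.

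For the per-step estimate, assume WLOG $x_t,y_t>0$ (\Cref{lem:sign USAM}) and write $y_t-y_{t+1}=\eta\ell_t'(x_t+\rho\tilde\ell_t'y_t)$. Using Assumption \ref{A:origin_rate} to bound $|\tilde\ell_t'|\le|x_ty_t|$ and $\ell_t'$ by the magnitude of its argument $(x_t+\rho\tilde\ell_t'y_t)(y_t+\rho\tilde\ell_t'x_t)$, both $x_t+\rho\tilde\ell_t'y_t$ and $\ell_t'$ become $\mathcal O(x_t)$ up to factors $(1+\rho y_t^2)$ and $(1+\rho x_t^2)\le 2$, which yields $y_t-y_{t+1}\le 2\eta x_t^2 y_t(1+\rho y_t^2)^2$. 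Multiplying by $y_t+y_{t+1}\le 2y_t$ and invoking the monotonicity bound $(1+\rho y_t^2)y_t^2\le(1+\rho y_\tenter^2)y_\tenter^2=(2-\epsilon)/\eta$ cancels the factor $\eta$ and one power of $y_t^2$, leaving $y_t^2-y_{t+1}^2\le 4(2-\epsilon)\tfrac{\eta+\rho\cy\gamma}{\eta}x_t^2$, where I also used $1+\rho y_t^2\le(\eta+\rho\cy\gamma)/\eta$. Summing over $t\ge\tenter$ gives $y_\tenter^2-y_\infty^2\le 4(2-\epsilon)\tfrac{\eta+\rho\cy\gamma}{\eta}\sum_{t\ge\tenter}x_t^2$, and $\sum_{t\ge\tenter}x_t^2\le x_\tenter^2/(1-e^{-2r_{\min}})\le\cy\gamma\eta/(1-e^{-2r_{\min}})$.

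To turn this into the stated coefficient I need $r_{\min}=\Omega(\epsilon)$, which in turn needs $y_t^2$ to stay bounded below; so I would run a bootstrap, assuming $y_t^2\ge\theta y_\tenter^2$ throughout with $\theta=1/\sqrt2$. By monotonicity of $s\mapsto(1+\rho s)s$ and the elementary inequality $(1+\rho\theta v)\theta v\ge\theta^2(1+\rho v)v$, this gives $2-\epsilon_t\ge\theta^2(2-\epsilon)\ge 4\epsilon$ precisely because $\epsilon<\tfrac29$ (so $\theta^2=\tfrac12\ge\tfrac{4\epsilon}{2-\epsilon}$); hence $r_t\ge\min\{\epsilon,(2-\epsilon_t)/8\}\ge\epsilon/2$ and $1-e^{-2r_{\min}}\gtrsim\epsilon$. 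The crucial algebraic step is the identity $(\eta+\rho\cy\gamma)y_\tenter^2\ge\eta y_\tenter^2+\rho(\eta y_\tenter^2)y_\tenter^2=\eta(1+\rho y_\tenter^2)y_\tenter^2=2-\epsilon$, using $\cy\gamma\ge\eta y_\tenter^2$ (because $y_\tenter^2\le y_0^2=\cy\gamma/\eta$). Substituting $\sum_{t\ge\tenter}x_t^2\lesssim\cy\gamma\eta/\epsilon$ and this identity into the display above produces a bound of the claimed form $4\cy\gamma(\eta+\rho\cy\gamma)^2\epsilon^{-1}y_\tenter^2$ after tracking the $\mathcal O(1)$ constants (which $\epsilon<\tfrac29$ and the hypotheses on $\eta\rho$ are calibrated to control). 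The bootstrap closes since this budget is $<1-\theta$ in the regime of interest, so $y_t^2$ never falls below $\theta y_\tenter^2$; letting $t\to\infty$ gives the conclusion.

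The main obstacle is the per-step estimate when the gradient arguments are not small. The bound $\ell_t'\le$ (its argument) from Assumption \ref{A:origin_rate} is valid only while $|x_ty_t|\le c$ and the argument of $\ell_t'$ is $\le c$, but at entry $x_\tenter$ may be as large as $\sqrt{\cy\gamma\eta}$, so these can fail for the first few steps; there Assumption \ref{A:bdd_der} only yields $\ell_t'=\Theta(1)$, and the naive bound $y_t-y_{t+1}\le\eta(x_t+\rho y_t)$ loses the quadratic-in-$x_t$ gain needed above. As in Lemma 15 of \citet{ahn2022learning}, I would handle this by splitting the sum at the threshold where the argument equals $c$: in the large-argument sub-phase \eqref{eq:decay of x_t in III USAM} forces $|x_t|$ to contract at the constant rate $\ge\epsilon/2$, so it is short, while $y_t^2$ contracts only geometrically at rate $\mathcal O(\eta\rho)$, and one checks the decrease accumulated there stays within the same budget. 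Carefully bookkeeping this case split, together with the borderline constants in the contraction rate, is the delicate part of the argument.
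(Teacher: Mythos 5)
Your proposal follows essentially the same route as the paper's proof: bound the per-step drop $y_t^2-y_{t+1}^2$ by $\mathcal O\bigl(\eta^{-1}(\eta+\rho \cy\gamma)^2 y_\tenter^2\, x_t^2\bigr)$ via the USAM update and Assumption \ref{A:origin_rate}, telescope, sum the geometric series $\sum_{t\ge\tenter}x_t^2\lesssim \epsilon^{-1}x_\tenter^2$ furnished by the one-step contraction \Cref{eq:decay of x_t in III USAM}, and substitute $x_\tenter^2\le \cy\gamma\eta$. The two extra points you flag --- the bootstrap keeping the contraction rate $\Omega(\epsilon)$ as $y_t^2$ shrinks (which is exactly where the hypothesis $\epsilon<\tfrac 29$ is needed) and the fact that $\lvert \ell'(s)\rvert\le\lvert s\rvert$ is only guaranteed for $\lvert s\rvert\le c$ --- are genuine subtleties that the paper's proof passes over silently, so your added care is a refinement of, not a departure from, the published argument.
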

While the $\epsilon^{-1}$ looks enormous, it is a constant independent of $\eta$ and $\rho$; in other words, we are allowed to set $\epsilon$ as close to zero as we want. As we only consider the dependency on $\eta$ and $\rho$, we can abbreviate this conclusion as $y_\infty^2\ge (1-\mathcal O(\eta^2+\rho^2))y_\tenter^2$, as we claim in the main text.
\begin{proof} 
In analog to \autoref{eq:dynamics of x_t III USAM}, we derive the following for $y_t$:
\begin{equation*}
y_{t+1}^2\ge \bigg (1-\eta\left (1+\rho y_t^2\right )^2 (1+\rho x_t^2) x_t^2\bigg )^2 y_t^2\ge \bigg (1-2\eta (1+\rho \cy \gamma / \eta)^2 (1+\rho \cy \gamma \eta) x_t^2\bigg )y_t^2,
\end{equation*}
where the second inequality uses \autoref{lem:bounded x_t and y_t in Phase II USAM}. Let $d_t=y_\tenter^2-y_t^2$, then we have
\begin{equation*}
d_{t+1}\le d_t+2\eta (1+\rho \cy \gamma / \eta)^2 (1+\rho \cy \gamma \eta) x_t^2 y_t^2\le d_t+4\eta^{-1} (\eta + \rho \cy \gamma)^2 y_\tenter^2 x_t^2,
\end{equation*}
where we used the assumption that $\rho \cy \gamma \eta \le 1$ and the fact that $y_t^2$ is monotonic (thus $y_t^2\le y_\tenter^2$).

According to \autoref{eq:decay of x_t in III USAM}, we have $\lvert x_{t+1}\rvert\le \lvert x_t\rvert \exp(-\Omega(\epsilon))$ for all $t\ge \ttwo$. Hence, we have
\begin{equation*}
d_\infty =\limsup_{t\to \infty} d_t \le d_{\tenter} + \sum_{t=\tenter}^\infty 4\eta^{-1} (\eta + \rho \cy \gamma)^2 y_\tenter^2 x_t^2=4\eta^{-1} (\eta + \rho \cy \gamma)^2 y_\tenter^2 \cdot \epsilon^{-1} x_{\tenter}^2,
\end{equation*}
where we used $d_{\tenter}=0$ (by definition) and the sum of geometric series. Plugging back $x_{\tenter}^2\le \cy \gamma \eta$,
\begin{equation*}
y_\infty^2=\liminf_{t\to \infty}y_t^2\ge y_{\tenter}^2 - 4 \cy \gamma (\eta + \rho \cy \gamma)^2 y_\tenter^2,
\end{equation*}
as claimed. 
\end{proof}

\section{Omitted Proof of USAM Over General PL functions}
\label{sec:appendix_general}

\begin{theorem}[Formal Version of \autoref{thm:USAM general}]\label{thm:USAM general formal}
For any $\mu$-PL and $\beta$-smooth loss function $\loss$, for any learning rate $\eta <\nicefrac 1\beta$ and $\rho< \nicefrac{1}{\beta}$, for any initialization $w_0$, the following holds for USAM:
\begin{equation*}
\lVert w_t-w_0\rVert\le \eta (1+\beta\rho)  \sqrt{\frac{2\beta^2 }{\mu}}\left(1-2\mu\eta (1  - \rho  \beta)\left (\frac{\eta \beta}{2}(1-\rho\beta)\right ) \right)^{-\nicefrac 12} \sqrt{\loss(w_0)-\loss^\ast},\quad \forall t\ge 0,
\end{equation*}
where $\loss^\ast$ is the short-hand notation for $\min_w \loss(w)$.
\end{theorem}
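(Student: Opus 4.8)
The plan is to follow the classical template for gradient methods on Polyak--{\L}ojasiewicz (PL) landscapes --- a one-step descent lemma feeding a geometric (linear) convergence rate --- and then to pay for the total displacement $\norm{w_t - w_0}$ by summing the per-step movements, which shrink geometrically. Throughout I write $g_t \deq \grad\loss(w_t)$, $\tilde g_t \deq \grad\loss(w_t + \rho g_t)$ and $V_t \deq \loss(w_t) - \loss^\ast$, so that one USAM step is $w_{t+1} - w_t = -\eta\,\tilde g_t$.

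First I would prove a USAM descent lemma. The only deviation from vanilla GD is that the gradient is queried at the perturbed point $w_t + \rho g_t$; $\beta$-smoothness lets me quantify the resulting error through $\norm{\tilde g_t - g_t} \le \beta\rho\norm{g_t}$, which yields the two one-line estimates $\langle g_t, \tilde g_t\rangle \ge (1-\beta\rho)\norm{g_t}^2$ and $\norm{\tilde g_t} \le (1+\beta\rho)\norm{g_t}$. Plugging $w_{t+1} = w_t - \eta\tilde g_t$ into the smoothness upper bound $\loss(w_{t+1}) \le \loss(w_t) - \eta\langle g_t, \tilde g_t\rangle + \tfrac{\beta\eta^2}{2}\norm{\tilde g_t}^2$ and using the hypotheses $\eta, \rho < \nicefrac1\beta$ to absorb the quadratic error term, I expect to reach a clean decrease $V_{t+1} \le V_t - A\,\norm{g_t}^2$, where $A$ is the factor $\eta(1-\rho\beta)\cdot\tfrac{\eta\beta}{2}(1-\rho\beta)$ appearing inside the theorem (a convenient, manifestly positive lower bound for the true per-step progress, legitimate precisely because $\eta,\rho<\nicefrac1\beta$). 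Invoking the $\mu$-PL inequality $\norm{g_t}^2 \ge 2\mu V_t$ then converts this into the contraction $V_{t+1} \le (1 - 2\mu A)V_t$, hence $V_t \le (1-2\mu A)^t V_0$ and, by $\beta$-smoothness, $\norm{g_t} \le \sqrt{2\beta V_t} \le \sqrt{2\beta V_0}\,(1-2\mu A)^{t/2}$.

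The displacement is then controlled by the triangle inequality: $\norm{w_t - w_0} \le \sum_{k=0}^{t-1}\norm{w_{k+1}-w_k} = \eta\sum_{k<t}\norm{\tilde g_k} \le \eta(1+\beta\rho)\sum_{k<t}\norm{g_k}$. Substituting the geometric bound on $\norm{g_k}$ and summing the resulting geometric series $\sum_{k\ge0}(1-2\mu A)^{k/2}$ produces a bound of exactly the advertised shape --- the prefactor $\eta(1+\beta\rho)$ from the step size and the ascent-step inflation, the factor $\sqrt{2\beta^2/\mu}$ from combining $\norm{g_k}\le\sqrt{2\beta V_k}$ with the PL condition number $\beta/\mu$, and a $\bigl(1-2\mu A\bigr)^{-\nicefrac12}$-type factor from the geometric sum --- after collecting constants. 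I would close by checking that $\eta,\rho<\nicefrac1\beta$ indeed force $A>0$ and $1-2\mu A\in(0,1)$, so that the series converges and the resulting bound is finite and uniform in $t$.

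The main obstacle I anticipate is twofold, and both parts live in the first step. One is algebraic bookkeeping: squeezing the messy quadratic $-\eta(1-\beta\rho)\norm{g_t}^2 + \tfrac{\beta\eta^2}{2}(1+\beta\rho)^2\norm{g_t}^2$ into the clean coefficient $A$ written in the theorem, using only $\eta,\rho<\nicefrac1\beta$, requires care to retain every $(1-\rho\beta)$ factor with the correct power. The more conceptual point is that the conclusion must come out \emph{independent of $t$}: the qualitative message is that USAM's total travel is bounded by $\mathcal O(\sqrt{V_0})$, so the PL-driven geometric decay of $\norm{g_k}$ is doing the real work, and the delicate part is ensuring the summation constant collapses to the stated closed form rather than to a looser geometric-series surrogate.
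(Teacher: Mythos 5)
Your overall architecture is the same as the paper's: a one-step descent inequality, upgraded by the PL condition to geometric decay of $\loss(w_t)-\loss^\ast$, followed by a triangle-inequality sum of the geometrically shrinking step lengths $\eta\norm{\nabla\loss(w_t+\rho\nabla\loss(w_t))}\le\eta(1+\beta\rho)\norm{\nabla\loss(w_t)}$. The paper simply imports the descent/convergence step as \Cref{lem:USAM PL and Smooth} from Andriushchenko and Flammarion, and your use of $\norm{\nabla\loss(x)}^2\le 2\beta(\loss(x)-\loss^\ast)$ in place of the paper's quadratic-growth route $\norm{\nabla\loss(x)}^2\le\frac{2\beta^2}{\mu}(\loss(x)-\loss^\ast)$ is fine (indeed tighter, since $\mu\le\beta$).

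The genuine gap is in the descent lemma, the one step you chose to prove rather than cite. Plugging $\langle g_t,\tilde g_t\rangle\ge(1-\beta\rho)\norm{g_t}^2$ and $\norm{\tilde g_t}\le(1+\beta\rho)\norm{g_t}$ into the smoothness bound yields the per-step progress coefficient $\eta(1-\beta\rho)-\frac{\beta\eta^2}{2}(1+\beta\rho)^2$, and this is \emph{not} bounded below by your target $A=\frac{\eta^2\beta}{2}(1-\rho\beta)^2$ under the stated hypotheses $\eta,\rho<\nicefrac1\beta$: at $\eta\beta=\rho\beta=\nicefrac12$ it equals $-\eta/16<0$, so your lemma does not even guarantee descent there, and no amount of algebraic bookkeeping will rescue it. The missing idea is the expansion $\norm{\tilde g_t}^2=\norm{\tilde g_t-g_t}^2-\norm{g_t}^2+2\langle\tilde g_t,g_t\rangle\le\beta^2\rho^2\norm{g_t}^2-\norm{g_t}^2+2\langle\tilde g_t,g_t\rangle$, exactly as in the paper's \Cref{lem:SAM descent}: it flips the sign of the dangerous quadratic term and leaves $\langle\tilde g_t,g_t\rangle$ with the nonnegative coefficient $\eta-\eta^2\beta$ (this is where $\eta<\nicefrac1\beta$ enters), producing the coefficient $\eta(1-\rho\beta)\bigl(1-\frac{\eta\beta}{2}(1-\rho\beta)\bigr)$ of \Cref{lem:USAM PL and Smooth}. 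Separately, note that $\sum_{k\ge0}(1-2\mu A)^{k/2}=\bigl(1-\sqrt{1-2\mu A}\bigr)^{-1}$, which is of order $(\mu A)^{-1}$ rather than $(1-2\mu A)^{-\nicefrac12}$; the paper's final display makes the same leap, so you would have to reproduce that step to land on the advertised closed form, but the honest geometric-series bound still yields a $t$-independent constant, which is all the theorem substantively claims.
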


We first state the following useful result by \citet[Theorem 10]{andriushchenko2022towards}.  
\begin{lemma}[Descent Lemma of USAM over Smooth and PL Losses]\label{lem:USAM PL and Smooth}
For any $\beta$-smooth and $\mu$-PL loss function $\loss$, for any learning rate $\eta <\nicefrac 1\beta$ and $\rho< \nicefrac{1}{\beta}$, the following holds for USAM:
\begin{align*}
\loss(w_{t}) - \loss^\star \leq   \left(1-2\mu\eta (1  - \rho  \beta)\left ( 1- \frac{\eta \beta}{2}(1-\rho\beta)\right ) \right)^t 
(\loss(w_{0}) - \loss^\star ),\quad \forall t\ge 0.
\end{align*}
\end{lemma}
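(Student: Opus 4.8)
The plan is to establish a per-step geometric contraction of the loss gap directly from $\beta$-smoothness and the $\mu$-PL inequality, and then unroll the recursion. Write $g_t \triangleq \nabla\loss(w_t)$, let $\tilde g_t \triangleq \nabla\loss(w_t + \rho g_t)$ be the ascent gradient that USAM actually uses (so $w_{t+1} = w_t - \eta\tilde g_t$), and set $\delta_t \triangleq \tilde g_t - g_t$. The first ingredient is the standard smoothness descent inequality for the step $w_{t+1} = w_t - \eta\tilde g_t$, namely $\loss(w_{t+1}) \le \loss(w_t) - \eta\langle g_t, \tilde g_t\rangle + \tfrac{\beta\eta^2}{2}\|\tilde g_t\|^2$. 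The second ingredient is that $\beta$-smoothness controls the perturbation directly: $\|\delta_t\| = \|\nabla\loss(w_t+\rho g_t) - \nabla\loss(w_t)\| \le \beta\rho\|g_t\|$.

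The crucial algebraic step -- the one that produces \emph{exactly} the stated coefficient -- is to avoid bounding $\langle g_t,\tilde g_t\rangle$ and $\|\tilde g_t\|^2$ in isolation, and instead to eliminate the quadratic term in favor of the cross term through the exact identity
\[
\|\tilde g_t\|^2 = 2\langle g_t,\tilde g_t\rangle - \|g_t\|^2 + \|\delta_t\|^2 .
\]
Substituting this into the descent inequality and collecting the cross term yields
\[
\loss(w_{t+1}) \le \loss(w_t) - \eta(1-\beta\eta)\,\langle g_t,\tilde g_t\rangle - \tfrac{\beta\eta^2}{2}\|g_t\|^2 + \tfrac{\beta\eta^2}{2}\|\delta_t\|^2 .
\]
Since $\eta < \nicefrac1\beta$ makes the coefficient $-\eta(1-\beta\eta)$ nonpositive, I may then lower-bound the inner product via $\langle g_t,\tilde g_t\rangle = \|g_t\|^2 + \langle g_t,\delta_t\rangle \ge (1-\beta\rho)\|g_t\|^2$ (Cauchy--Schwarz together with the perturbation bound), and upper-bound $\|\delta_t\|^2 \le \beta^2\rho^2\|g_t\|^2$.

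Plugging these in and factoring out $\eta(1-\beta\rho)$ -- using $1-\beta^2\rho^2 = (1-\beta\rho)(1+\beta\rho)$ -- collapses the resulting bracket precisely to $1 - \tfrac{\beta\eta}{2}(1-\beta\rho)$, giving the one-step descent $\loss(w_{t+1}) \le \loss(w_t) - \eta(1-\beta\rho)\bigl(1 - \tfrac{\beta\eta}{2}(1-\beta\rho)\bigr)\|g_t\|^2$. The hypotheses $\eta,\rho<\nicefrac1\beta$ guarantee $1-\beta\rho>0$ and $\tfrac{\beta\eta}{2}(1-\beta\rho)<\tfrac12$, so this decrease coefficient is strictly positive. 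Applying the $\mu$-PL inequality $\|g_t\|^2 \ge 2\mu(\loss(w_t)-\loss^\star)$ converts the gradient-norm decrease into a gap contraction, $\loss(w_{t+1}) - \loss^\star \le \bigl(1 - 2\mu\eta(1-\beta\rho)(1 - \tfrac{\beta\eta}{2}(1-\beta\rho))\bigr)(\loss(w_t)-\loss^\star)$, and unrolling from $0$ to $t$ delivers the claim (the contraction factor lying in $[0,1)$ since $\mu\le\beta$). I expect the main obstacle to be exactly the bookkeeping in this final combination: the naive route of bounding $\langle g_t,\tilde g_t\rangle$ and $\|\tilde g_t\|^2$ separately would leave a $(1+\beta\rho)^2$ factor in the quadratic term and thus miss the stated constant, so the identity above -- which lets the $-\|g_t\|^2$ and $+\|\delta_t\|^2$ pieces conspire with the cross term -- is the delicate point that must be carried out carefully rather than with loose triangle-inequality estimates.
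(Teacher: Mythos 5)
Your proof is correct, and every step checks out: the smoothness descent inequality, the perturbation bound $\|\delta_t\|\le\beta\rho\|g_t\|$, the exact identity $\|\tilde g_t\|^2 = 2\langle g_t,\tilde g_t\rangle - \|g_t\|^2 + \|\delta_t\|^2$, and the final algebra (I verified that $\eta(1-\beta\eta)(1-\beta\rho) + \tfrac{\beta\eta^2}{2}(1-\beta^2\rho^2)$ does factor as $\eta(1-\beta\rho)\bigl(1-\tfrac{\beta\eta}{2}(1-\beta\rho)\bigr)$, reproducing the stated constant exactly). You also correctly handle the two points where sloppiness would sink the argument: the decrease coefficient must be nonnegative before the PL inequality is substituted, and the contraction factor must be nonnegative for the unrolling induction to be valid (which your appeal to $\mu\le\beta$, a standard consequence of PL plus smoothness, settles). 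One thing worth knowing: the paper does not prove \Cref{lem:USAM PL and Smooth} at all -- it imports it verbatim as Theorem 10 of \citet{andriushchenko2022towards} -- so there is no in-paper proof to compare against, and your derivation makes the paper's appendix self-contained where it currently relies on an external citation. Structurally, your argument is the natural USAM analogue of the paper's own proof of \Cref{lem:SAM descent}: that proof uses precisely the same identity to trade $\|\tilde g_t\|^2$ for the cross term, but then lower-bounds $\langle g_t,\tilde g_t\rangle\ge\|g_t\|^2$ via monotonicity of the gradient, which is available there because convexity is assumed. Since the PL setting is nonconvex, you replace that step with Cauchy--Schwarz plus the smoothness perturbation bound, paying the factor $(1-\beta\rho)$ -- which is exactly where the $(1-\rho\beta)$ terms in the lemma's rate come from. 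This is a clean unification of the two descent lemmas, and your closing observation is apt: bounding $\langle g_t,\tilde g_t\rangle$ and $\|\tilde g_t\|^2$ separately would indeed leave a $(1+\beta\rho)^2$ loss in the quadratic term and fail to recover the stated constant.
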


\begin{proof}[Proof of \autoref{thm:USAM general formal}]
We follow the convention in \cite{karimi2016linear}: Let $\mathcal{X}^\star$ be the set of global minima and $x_p$ be the projection of $x$ onto
the solution set $\mathcal{X}^\star$.
From $\beta$-smoothness, it follows that
\begin{align*}
\norm{\nabla \loss(x)} = 
\norm{\nabla \loss(x)- \nabla \loss(x_p)} \leq  \beta \norm{x- x_p},\quad \forall x.
\end{align*}
Now since $\loss$ is $\beta$-smooth and $\mu$-PL, Theorem 2 from \citep{karimi2016linear} implies that the quadratic growth condition holds, i.e., 
\begin{align*}
\frac{2}{\mu} \left( \loss(x) -\loss^\star\right) \geq \norm{x-x_p}^2,\quad \forall x.
\end{align*}
Thus, it follows that 
\begin{align*}
\norm{\nabla \loss(x)}^2 \leq \frac{2\beta^2 }{\mu} \left( \loss(x) -\loss^\star\right),\quad \forall x.
\end{align*}
Moreover, from $\beta$-smoothness, we have 
\begin{align*}
\norm{\nabla \loss(x+\rho \nabla \loss(x))} \leq \norm{\nabla\loss(x)} + \beta \norm{\rho \nabla\loss(x)} = (1+\beta\rho)\norm{\nabla \loss(x)},\quad \forall x.
\end{align*}
Thus, by the update rule of USAM \eqref{eq:definition of USAM}, it follows that 
\begin{align*}
\norm{w_t-w_0}&\leq \eta \sum_{i=0}^{t-1} \norm{\nabla \loss(w_i+\rho \nabla \loss(w_i))} \\
&\leq  \eta (1+\beta\rho) \sum_{i=0}^{t-1} \norm{\nabla \loss(w_i)}\\
& \leq  \eta (1+\beta\rho) \sum_{i=0}^{t-1} \sqrt{\frac{2\beta^2 }{\mu} \left( \loss(w_i) -\loss^\star\right)}\\
&= \eta (1+\beta\rho)  \sqrt{\frac{2\beta^2 }{\mu}} \sum_{i=0}^{t-1} \sqrt{\loss(w_i) -\loss^\star}. 
\end{align*}
Now we only to invoke the USAM descent lemma stated before, i.e., \autoref{lem:USAM PL and Smooth}, giving
\begin{align*}
\sum_{i=0}^{t-1} \sqrt{\loss(w_i)-\loss^\ast}&\le \sum_{i=0}^{t-1} \left(1-2\mu\eta (1  - \rho  \beta)\left ( 1- \frac{\eta \beta}{2}(1-\rho\beta)\right ) \right)^{\nicefrac i2} \sqrt{\loss(w_0)-\loss^\ast}\\
&\le \left(1-2\mu\eta (1  - \rho  \beta)\left (\frac{\eta \beta}{2}(1-\rho\beta)\right ) \right)^{-\nicefrac 12} \sqrt{\loss(w_0)-\loss^\ast}.
\end{align*}
Putting the last two inequalities together then give our conclusion.
\end{proof} 

\end{document}